%% file: arxiv.tex
\documentclass[12pt]{article}

\usepackage[vmargin=1.1in,hmargin=1.1in]{geometry}
\usepackage{setspace}
\usepackage{parskip}
\usepackage{amsmath}
\allowdisplaybreaks  
\usepackage{amssymb}
\usepackage{mathtools}

\usepackage[ruled,vlined,linesnumbered]{algorithm2e}

\usepackage{textcomp, gensymb}
\usepackage{gensymb}
\usepackage{enumitem}
\usepackage{graphicx}
\usepackage{scrextend}
\usepackage{blindtext}
\usepackage{caption}
\usepackage{url}
\usepackage{natbib}
\usepackage{subcaption}
\usepackage{circuitikz}
\usepackage{listings}
\usepackage{color}
 \usepackage{microtype}
\usepackage{graphicx}
\usepackage{multirow}
\usepackage{booktabs} 

\usepackage{amssymb}
\usepackage{pifont}
\usepackage{hyperref}

\newcommand{\specialcell}[2][c]{%
\begin{tabular}[#1]{@{}c@{}}#2\end{tabular}}
\usepackage{amsmath}

\usepackage{authblk}
\usepackage{amsmath}
\usepackage{amssymb}
\usepackage{mathtools}
\DeclarePairedDelimiter{\ceil}{\lceil}{\rceil}
\usepackage{amsthm}
\usepackage[capitalize,noabbrev]{cleveref}
\usepackage{comment}
\usepackage{dsfont}
\theoremstyle{plain}
\newtheorem{theorem}{Theorem}
\newtheorem{proposition}{Proposition}
\newtheorem{lemma}{Lemma}
\newtheorem{corollary}{Corollary}
\theoremstyle{definition}

\newtheorem{assumption}{Assumption}
\theoremstyle{remark}
\newtheorem{remark}{Remark}
\definecolor{codegreen}{rgb}{0,0.6,0}
\definecolor{codegray}{rgb}{0.5,0.5,0.5}
\definecolor{codepurple}{rgb}{0.58,0,0.82}
\definecolor{backcolour}{rgb}{0.95,0.95,0.92}

\newenvironment{keywords}
    {\vspace{1em}\noindent\textbf{Keywords: }}
    {\vspace{1em}}
\lstdefinestyle{mystyle}{
    backgroundcolor=\color{backcolour},   
    commentstyle=\color{codegreen},
    keywordstyle=\color{magenta},
    numberstyle=\tiny\color{codegray},
    stringstyle=\color{codepurple},
    basicstyle=\footnotesize,
    breakatwhitespace=false,         
    breaklines=true,                 
    captionpos=b,                    
    keepspaces=true,                 
    numbers=left,                    
    numbersep=5pt,                  
    showspaces=false,                
    showstringspaces=false,
    showtabs=false,                  
    tabsize=2
}
 
\DeclarePairedDelimiter\floor{\lfloor}{\rfloor}

\date{\today}
\usepackage{bbm}

\renewcommand{\P}{\mathbb{P}}
\newcommand{\N}{\mathbb{N}}
\usepackage{color}

\usepackage{cleveref}
\crefname{assumption}{assumption}{assumptions}
\Crefname{assumption}{Assumption}{Assumptions}

\crefname{theorem}{theorem}{theorems}
\Crefname{theorem}{Theorem}{Theorems}

\crefname{figure}{figure}{figures}
\Crefname{figure}{Figure}{figures}

\title{\vspace{-0.6cm}
\bfseries Distribution-free changepoint localization after sequential change detection}

\author[1]{Aytijhya Saha}
\author[2]{Aaditya Ramdas}

\affil[1]{Massachusetts Institute of Technology.  \texttt{aytijhya@mit.edu}}
\affil[2]{Carnegie Mellon University. 
\texttt{aramdas@cmu.edu}}
\begin{document}

\date{}
\maketitle

\input{abs}

\begin{keywords}
Changepoint localization, Post-detection inference, conformal p-values.
\end{keywords}

\input{main-content}

\appendix

\input{appendix-content}

\end{document}

%% file: abs.tex
\begin{abstract}
    This paper introduces a distribution-free framework for constructing post-detection confidence sets for changepoints after stopping a sequential change detection procedure. It is well known that conformal test martingales can be used to sequentially detect changes in distribution, but by themselves provide no inference for the time at which a proclaimed change occurred. Past work on post-detection inference requires pre- and post-change classes of distributions to be known, but this paper accomplishes localization of the changepoint without any distributional assumptions. We establish finite-sample coverage guarantees (conditional on correct detection). We provide non-asymptotic bounds on the conditional expected size of the confidence sets. Under suitable asymptotic regimes, we prove that the conditional expected size of the confidence set remains uniformly bounded and demonstrate strong empirical performance on simulated and real data. To the best of our knowledge, this is the first general distribution-free framework for sequential changepoint localization with valid post-detection coverage.
\end{abstract}

%% file: main-content.tex
\section{Introduction}
Consider the following general problem of sequential change analysis in an arbitrary space $\mathcal{X}$, which could be the Euclidean space $\mathbb R^d$, or images, or almost anything else. A sequence of $\mathcal{X}$-valued observations 
 ${\bf X} = X_1,X_2,\cdots$ are such that for some unknown changepoint $T\in\N\cup\{\infty\}$,
\begin{equation}\label{eq:data-setup}
X_1,X_2,\cdots,X_{T-1}\stackrel{}{\sim}F_0 ~\text{ and }~X_{T},X_{T+1},\cdots\stackrel{}{\sim}F_1,
\end{equation}
where $F_0$ and $F_1$ are any distinct pre- and post-change distributions. 

A large body of work has focused on designing stopping rules that rapidly detect distributional changes while controlling false alarms, leading to a rich literature on classical procedures such as CUSUM \citep{page1954continuous} and the Shiryaev--Roberts procedure \citep{shiryaev1963optimum,roberts1966comparison}. More recently, attention has shifted toward distribution-free sequential detection methods, including conformal test martingales \citep{vovk2021retrain}.

This paper studies the following fundamental question: after a sequential changepoint detector raises an alarm, can we construct statistically valid confidence sets for the unknown changepoint using only the data observed up to the stopping time, while making no distributional assumptions on the pre- and post-change distributions?

We follow the setup in \cite{saha2025post}; these authors studied this problem in a non-distribution-free setting, i.e.\ when the pre- and post-change sets of distributions can be prespecified and non-intersecting, but their abstract problem setup is still useful in our more general setting. Suppose the user has already decided to use a sequential changepoint detection algorithm $\mathcal A$, which is a mapping from the data sequence $\bf X$ to a stopping time $\tau = \mathcal A({\bf X})$ at which an alarm is triggered, indicating a potential change. When the procedure stops at time $\tau$, it is not known whether this corresponds to a false alarm, and even if a true change has occurred, the only available information is that it happened sometime before $\tau$, but not \emph{when} that change occurred. In this paper, we aim to construct a confidence set $\mathcal{C}$ for the changepoint $T$ based on data up to stopping time $\tau$ such that without any distributional assumption on the unknown $F_0$ and $F_1$,
\begin{equation}
\label{eq:cond-coverage-intro}
\P_{F_0,T,F_1} (T \in \mathcal C \mid \tau \geq T) \geq 1-\alpha,
\end{equation}
meaning that $\mathcal C$ only needs to cover if algorithm $\mathcal A$ stops only after the true changepoint $T$. 
 While such a conditional guarantee is natural, \cite{saha2025post} also showed that for detection algorithms with a bounded average run length to false alarm
(ARL2FA, or just ARL for short), no post-detection confidence set $\mathcal C$ can satisfy an unconditional coverage guarantee. However, for algorithms that control the probability
of false alarm (PFA) at level $\zeta$, our method can achieve an unconditional guarantee
\begin{equation}
\label{eq:uncond-coverage-intro}
\P_{F_0,T,F_1} (T \in \mathcal C ) \geq (1-\alpha)(1-\zeta).
\end{equation}
We primarily focus on conditional coverage throughout the paper.
Apart from the aforementioned work, the conditional coverage \eqref{eq:cond-coverage-intro} has been considered and established by several other prior works, for example, \cite{ding2003lower,WU20063625,wu2007inference}, but all of these works make restrictive assumptions on the pre- and post-change distributions --- most importantly that the pre- and post-change sets of distributions are known and do not intersect; we provide further discussion in \Cref{subsec:prior-1}.
In some practical deployments, however, such assumptions are often unrealistic.  In many real-world systems---such as industrial monitoring, network security, or streaming data applications---the data-generating mechanisms are complex, high-dimensional, and poorly specified, making reliable distributional modeling difficult or infeasible. Motivated by this gap, we develop a distribution-free framework for post-detection confidence sets for changepoints. Our approach enables valid uncertainty quantification for the changepoint, with the validity (coverage of the confidence set) not relying on any modeling assumptions, thereby making it more broadly applicable in real-world settings. Like conformal prediction, we are allowed to use models, but they only affect the statistical efficiency and not the validity of the procedure. Our inference method can therefore be used as a wrapper around any type of detection algorithm, including procedures based on ARL control, PFA control, and sequential testing.

Our approach builds on conformal test martingales \citep{vovk2003testing,vovk2021retrain} to construct distribution-free confidence sets. Specifically, we assume exchangeability within the pre- and post-change segments, but do not require knowledge of the underlying distributions. While Vovk's original method was only for \emph{detecting} changes, we nontrivially extend those ideas to work as a wrapper around a detection procedure in order to \emph{localize} the changepoint after detection.

Our construction proceeds by separately constructing lower and upper confidence sets and considering their intersection as a two-sided set. This decomposition is not merely technical; test martingales for one are powerless for the other, and one must do something qualitatively slightly different to obtain nontrivial lower and upper confidence sets. 

We note that lower and upper confidence sets by themselves have direct practical interpretations and applications. Lower confidence sets have been previously studied in \cite{ding2003lower,wu2007inference} in restricted parametric settings.
A lower confidence set yields a lower confidence bound $L$ for $T$ so that all observations before $L$ can be treated as pre-change with controlled risk, which is particularly useful in quality control and process monitoring. In such settings, a practitioner is often less concerned with pinpointing the exact change time and more concerned with isolating ``the pre-change part'' with high confidence, i.e., estimating a safe time threshold before which the system can be treated as pre-change. For instance, in industrial manufacturing or sensor-based monitoring systems, the lower confidence bound can be interpreted as a conservative estimate of the earliest possible failure or degradation onset. This allows operators to take corrective measures (e.g., isolating and inspecting the potentially affected products) with controlled statistical risk. On the other hand, an upper confidence set provides a guaranteed subset of observations that are post-change with high confidence. All observations following the largest element of the upper confidence set can be treated as post-change data with high confidence and used to retrain, recalibrate, or adapt machine-learning systems to the new operating regime.
In contrast, the two-sided confidence set is more appropriate when the goal is to localize the changepoint itself. For instance, rapid shifts in public sentiment often occur on online platforms: a product failure may trigger a sudden surge of negative reviews, or a major event may abruptly alter the tone of social media discussions. In such cases, identifying when the change occurred is crucial for investigating emerging issues and the reason for the change.

\subsection{Prior work on post-detection confidence sets for $T$} 
\label{subsec:prior-1}
We are aware of three directly related works: \cite{ding2003lower}, \citet[Chapter 3]{wu2007inference}, and \cite{saha2025post}.

 Both  \cite{ding2003lower,wu2007inference} provide asymptotically valid inference after the CUSUM procedure \citep{page1954continuous} declares a change for known $F_0$ and $F_1$ in an exponential family; the former only constructs a lower confidence bound for $T$ while the latter provides a (lower and upper) confidence set for $T$.
Their approaches have notable methodological and theoretical limitations: (i) they are tailored specifically to the CUSUM procedure, (ii) they assume that the pre- and post-change distributions are both known exactly and that they belong to an exponential family,  (iii) they only provide asymptotic coverage guarantees (valid when both $T\to\infty$ and ARL $\to\infty$). 

Recently, \cite{saha2025post} proposed nonasymptotically valid confidence sets for $T$ that are more general than those in the aforementioned past works: in particular, without making any assumptions on the detection procedure. Their confidence sets can handle changes from an unknown $F_0 \in \mathcal{P}_0$ to some unknown $F_1 \in \mathcal{P}_1$ on a general space (e.g., $\mathbb R^d$), but their drawbacks are the following: (i) while no assumptions are needed when $\mathcal P_0$ is a singleton, they make some restrictive assumptions on $\mathcal P_0$ in the composite setting, (ii) their approach is not distribution-free in that they need $\mathcal{P}_0 \cap \mathcal{P}_1 = \emptyset$, and $\mathcal{P}_0 \cup \mathcal{P}_1\neq \mathcal P$ (where $\mathcal P$ is the universe of all distributions on that space), meaning that we must both restrict and partition the set of all distributions. In contrast, our work will have $\mathcal{P}_0 = \mathcal{P}_1 = \mathcal P$, with only the obvious restriction that $F_0 \neq F_1$.

Our framework circumvents the aforementioned drawbacks and is the \textit{first distribution-free, theoretically sound, and widely applicable practical tool} for sequential change localization. 

\subsection{Conformal inference and test of exchangeability}
Our approach builds on the construction of conformal test martingales under the null hypothesis of exchangeability~\citep{vovk2003testing}. Prior works, such as \cite{saha2024testing,ramdas2022testing}, have proposed other tests of exchangeability; however, these focus on alternative hypotheses that differ from the changepoint setting considered here, making them less suitable for our purposes. Our techniques are most closely related to \cite{vovk2003testing,vovk2021testing,vovk2021retrain}. To the best of our knowledge, we are the first to leverage those ideas for post-detection changepoint \emph{localization} (as opposed to sequential \emph{testing} of exchangeability, or sequential changepoint \emph{detection}).

In a separate direction, \cite{dandapanthula2025offline} introduced distribution-free \textit{offline} methods for changepoint localization (i.e.,\ confidence sets for the changepoint) based on conformal p-values, and \cite{bhattacharyya2025theoretical} analyzed the size of the resulting confidence sets. Subsequently, \cite{hore2026conformal} proposed tighter confidence sets within the same distribution-free \textit{offline} framework. 

We emphasize that the above offline inferential approaches are not valid in the sequential regime considered in this work because sequential detection procedures are usually stopping times, making the number of observations data-dependent (while offline procedures assume a fixed, deterministic number of samples $n$), and if one tries to condition on the stopping time, the distributions of the observed data becomes intractable, and we immediately lose (separate) exchangeability of the pre-change and post-change observations, making offline techniques inapplicable.

\subsection{Our Contribution}
In this work, we develop a general distribution-free framework for post-detection changepoint localization in sequential settings. Sequential change detection and changepoint inference are fundamentally different problems. Detection asks whether a change has occurred; localization asks when it occurred. We show these can be decoupled: arbitrary sequential detectors can be combined with distribution-free inferential wrappers to yield valid changepoint uncertainty quantification. Our contributions are summarized below.
\begin{itemize}
    \item \textbf{Distribution-free post-detection inference.}
Our framework for constructing post-detection confidence sets for changepoints with finite-sample validity guarantees does not rely on assumptions (beyond exchangeability) on the pre- and post-change models. In contrast to prior work, our framework does not require specifying disjoint pre- and post-change model classes.
\item \textbf{One-sided and two-sided confidence sets.}
We develop separate constructions for lower and upper post-detection confidence sets and show how to combine them into valid two-sided confidence sets for the changepoint. The lower and upper constructions require fundamentally different sequential testing strategies, reflecting the intrinsic asymmetry of sequential localization problems. We prove finite-sample coverage guarantees for all the proposed confidence sets conditional on the detector stopping after the true changepoint in \Cref{thm:cov-lower,thm:cov-upper,cor:cov-twoside}.
\item \textbf{Non-asymptotic bound and asymptotic sharpness guarantees on size.}
We derive explicit bounds on the size of the resulting confidence sets and show that, under suitable signal conditions and detection-delay regimes, the expected confidence set width remains uniformly bounded asymptotically. We also provide finite-sample (non-asymptotic) guarantees. These results demonstrate that reliable changepoint localization is possible even without distributional assumptions.
\end{itemize}

The rest of the paper is organized as follows. In \Cref{sec:method}, we present our main methodology, where we first construct lower and upper confidence sets in \Cref{sec:lower} and \Cref{sec:upper}, respectively, and finally propose a two-sided confidence set in \Cref{sec:twosided}. We provide finite-sample and asymptotic bounds on the conditional expected width of the confidence sets with an oracle calibrator in \Cref{sec:theory}.  We then extend these theoretical results to the practical setting with a learned calibrator in \Cref{sec:theory-real}.  We validate our method on simulated and real data in \Cref{sec:expt}. We conclude the paper in \Cref{sec:conc}. Proofs of all theoretical results presented in the main text, along with additional theoretical and experimental results, are provided in the Appendix Materials.

\section{Method}
\label{sec:method}

\subsection{Problem setup}

For $t \in \N$, let $\P_{F_0,t,F_1}$ denote joint distribution of the sequence $\{X_n\}_{n}$ when  $X_1,\dots,X_{t-1}\stackrel{}{\sim}F_0$ and $X_t,X_{t+1},\dots\stackrel{}{\sim}F_1$. 
As a corner case, let $\P_{F_0,\infty}$ be the joint distribution when $X_1,X_2,\dots \stackrel{}{\sim}F_0$. 

We now state the assumptions underlying our results. The first assumption is required for constructing the lower confidence set, while the second is needed for the upper confidence set. Both assumptions are jointly required for the validity of the two-sided confidence set.
\begin{assumption}
\label{assmp-post}
The post-change sequence $X_T,X_{T+1},\cdots$ is exchangeable. 
\end{assumption}
\begin{assumption}
\label{assmp-pre}
The pre-change data $X_1,\cdots,X_{T-1}$ is exchangeable.
\end{assumption}

This is a standard assumption for any distribution-free method to work, see e.g., \cite{dandapanthula2025offline,hore2026conformal}. However, these prior works additionally require independence between the pre-change segment $X_1,\cdots,X_{T-1}$ and the post-change part $X_T,X_{T+1},\cdots$, an assumption that is not needed in our framework.

We first construct upper and lower confidence sets and then consider their intersection to derive a two-sided confidence set. The overall approach is summarized below. For both the upper and lower confidence sets, the underlying idea is the same: for each $t\in\{1,\cdots,\tau\}$ we test the same null hypothesis
\[H_{0,t}: T=t \text{ , i.e., } t \text{ is the true changepoint},\]
but against different alternatives $T> t$ for the lower confidence set and $T< t$ for the upper confidence set. In classical statistics, inverting level-$\alpha$ tests yields $1-\alpha$ confidence sets. However, in this case, we need to calibrate the level properly since the number of hypotheses, $\tau$, is data-dependent, and we are interested in obtaining a conditional coverage guarantee, conditioned on $\tau\geq T$. For this calibration, we will need to assume black box access to the change detection algorithm (by ``black-box”, we mean that we can run the algorithm on any input data
sequences) as in \cite{saha2025post}.


\subsection{Lower confidence set}
\label{sec:lower}
The first goal is to construct a valid lower confidence set for the changepoint $T$ using observations up to the stopping time $\tau$. For any candidate $t \le \tau$, we test whether $t$ could plausibly be the true changepoint ($T=t$) against the alternative that the true changepoint is after $t$, i.e., $T>t$. 

If $T>t$, then the segment $X_t, \ldots, X_\tau$ contains both pre- and post-change observations, violating exchangeability and thus providing evidence against $t$. Conversely, under the null hypothesis $T=t$,  $X_j:t\leq j\leq n$ lies entirely in the post-change regime and remains exchangeable, for any fixed $n\geq t$, but not for data-dependent $n$ like our stopping time $\tau$. Therefore, permutation-based ideas in offline changepoint localization (e.g., \cite{hore2026conformal}) are not valid in our setting. Therefore, we take the approach of constructing conformal test martingales \cite{vovk2003testing,vovk2005algorithmic}. As is common in the conformal inference literature, we consider some univariate score function $S$ and transform the raw data into its scores $\{S(X_i)\}_{1\leq i\leq \tau}$ (see discussion in \Cref{sec:score} on how to choose score functions).
Starting at each fixed $t$, we compute sequential conformal p-values based on score comparisons:
\begin{align}
\label{eq:forward-p-val}
   \nonumber p^j_t=&\frac{1}{j-t+1}\Bigg[\sum_{t\leq i\leq j}\mathds{1}(S(X_j)>S(X_i))\\
    &+U_{t,j}\sum_{t\leq i\leq j}\mathds{1}(S(X_j)=S(X_i))\Bigg],
\end{align}
for $j\geq t$ and $U_{t,j}\stackrel{i.i.d.}{\sim}\text{Uniform}(0,1)$. While the method is valid for any choice of score function $S$, efficiency, as measured by the set size of the confidence set, would depend on that. We discuss it further in \Cref{sec:score}.
The sequence of p-values $\{p^j_t\}_{j\geq t}$ is known to be i.i.d.\ Uniform(0,1) \citep{vovk2003testing,vovk2021retrain}.
Hence, we can transform p-values into e-values using some non-negative predictable function $f_{t,j}$ such that $\int_{0}^1f_{t,j}(p)dp\leq 1$ (which we call a calibrator) and aggregate. One particular choice of calibrator that we have used in our experiments and is common in the literature (e.g., \cite{shaer2026testing}) is the following:
\[f_{t,j}(p)=1+\lambda_{t,j}(p-0.5),\]
where the hyperparameter $\lambda_{t,j}\in[-1,1]$ can be chosen adaptively using online methods. One possible approach for learning the sequence ${\lambda_{t,j}}_{j\ge t}$ is given by the online Newton step \citep{hazan2006logarithmic} with the follow-the-leading-history \citep{hazan2007adaptive} procedure described in \Cref{alg:flh-ons-calibrator}.
Candidates $t$ for which the accumulated evidence against them is not too large are retained in the lower confidence set:
\begin{equation}
\label{ci-lower}
    \mathcal{C}_{low}^\alpha=\left\{2\leq t\leq \tau:\max_{i:t\leq i\leq\tau}\prod_{j=t}^i f_{t,j}(p^j_t) \leq\frac{1}{\alpha\hat r_t}\right\},
\end{equation}
where $\hat r_t$ is some distribution-free unbiased (or negatively biased) estimate $\P_{F_0,\infty}[\tau\geq t]$, which is independent of the p-values $\{p^j_t\}_{j\geq t}$. We discuss constructions of such estimators in Section \ref{sec:rt}. The algorithm is summarized in Algorithm \ref{alg:lower}.
Next, we shall prove that the lower confidence set achieves at least $1-\alpha$ conditional coverage.

\begin{algorithm}[ht]
\caption{Lower Confidence Set for Changepoint}
\label{alg:lower}
\KwIn{Data $X_1, \dots, X_\tau$ ($\tau$ is a stopping time), level $\alpha$, score function $S$, data-independent unbiased or negatively biased estimator $\hat r_t$ of $\P_{F_0,\infty}[\tau\geq t]$, p-to-e calibrator $\{f_{t,j}\}_{t\leq j\leq \tau,1\leq t\leq \tau}$.}
\KwOut{Lower confidence set $\mathcal{C}_{low}^\alpha$.}
$\mathcal{C}_{low}^\alpha\gets\emptyset$\;

\For{$t = 2$ \KwTo $\tau$}{
    $M_t\gets 1, ~j\gets t$\;
    
    \While{$M_t\leq\frac{1}{\alpha\hat r_t}$ \textbf{and} $j\leq\tau$}{
        Draw $U_{t,j}\sim$ Uniform(0,1)\;
        Compute $p^j_t$ as defined in \eqref{eq:forward-p-val}\;
        $M_t\gets M_t\times f_{t,j}(p_t^j),~ j\gets j+1$\;
    }
    \If{$M_t\leq\frac{1}{\alpha\hat r_t}$}{
        Add $t$ to set $\mathcal{C}_{low}^\alpha$\;
    }
}
\Return{$\mathcal{C}_{low}^\alpha$}
\end{algorithm}

\begin{theorem}
\label{thm:cov-lower}
   Under \Cref{assmp-post}, for {any} $\alpha\in(0,1), T\in \N\setminus\{1\}$, pre- and post-change distributions $F_0$ and $F_1$ respectively, and change detection method $\mathcal{A}$ with stopping time $\tau$ satisfying $\mathbb \P_{F_0,\infty}(\tau\geq T)\neq 0$, the confidence set $\mathcal{C}_{low}^\alpha$ defined in \eqref{ci-lower}, constructed using any data-independent unbiased or negatively biased estimator $\hat r_t$ of $\mathbb P_{F_0,\infty}[\tau \geq t]$, satisfies $\P_{F_0,T,F_1}(T\in \mathcal{C}_{low}^\alpha\mid \tau\geq T)\geq 1-\alpha.$
\end{theorem}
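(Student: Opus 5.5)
The plan is to bound the miscoverage probability jointly with the conditioning event, and then divide by the probability of that event. Write $r_T:=\P_{F_0,\infty}(\tau\ge T)>0$. Two observations drive the argument.

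First, since $\tau$ is a stopping time, the event $\{\tau\ge T\}=\{\tau\le T-1\}^c$ is determined by $X_1,\dots,X_{T-1}$. The law of this prefix is the same under $\P_{F_0,T,F_1}$ and under $\P_{F_0,\infty}$. Hence $\P_{F_0,T,F_1}(\tau\ge T)=r_T$, and it suffices to show
\[
\P_{F_0,T,F_1}(T\notin\mathcal C^\alpha_{low},\ \tau\ge T)\le \alpha r_T .
\]

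Second, on $\{\tau\ge T\}$, the point $T$ is excluded from $\mathcal C^\alpha_{low}$ exactly when $\max_{T\le i\le\tau} M_i>1/(\alpha\hat r_T)$, where $M_i:=\prod_{j=T}^i f_{T,j}(p^j_T)$. We then discard the dependence on $\tau$ entirely: the event above is contained in $\{\sup_{i\ge T} M_i\ge 1/(\alpha \hat r_T)\}$. This step is what lets us avoid assuming that the pre- and post-change segments are independent. We never need $\tau$ to be independent of the post-change p-values, because we bound an event that makes no reference to $\tau$.

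Next we show that $(M_i)_{i\ge T}$ is a nonnegative test supermartingale with $M_{T-1}=1$. The filtration is $\mathcal G_i=\sigma(\hat r_T, p^T_T,\dots,p^i_T)$, together with whatever auxiliary randomness the calibrator uses.
\begin{itemize}
\item By \Cref{assmp-post}, $X_T,X_{T+1},\dots$ is exchangeable, and the $U_{T,j}$ are i.i.d.\ uniform. The cited result of \citet{vovk2003testing} then gives that $p^T_T,p^{T+1}_T,\dots$ are i.i.d.\ Uniform$(0,1)$.
\item Since $\hat r_T$ is independent of the p-values, $p^{i+1}_T$ is Uniform$(0,1)$ and independent of $\mathcal G_i$.
\item Since $f_{T,i+1}$ is predictable, i.e.\ $\mathcal G_i$-measurable (for example, the ONS/FLH choice of $\lambda$ uses only past p-values), we get
\[
\mathbb E[f_{T,i+1}(p^{i+1}_T)\mid\mathcal G_i]=\int_0^1 f_{T,i+1}(p)\,dp\le 1 .
\]
\end{itemize}
Conditional on $\hat r_T$, Ville's inequality then gives $\P(\sup_i M_i\ge 1/(\alpha\hat r_T)\mid \hat r_T)\le \alpha\hat r_T$. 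If $\alpha\hat r_T\ge1$ this bound is trivial but still correct. Taking expectations and using that $\hat r_T$ is unbiased or negatively biased, the bound becomes $\alpha\,\mathbb E[\hat r_T]\le\alpha r_T$. Dividing by $r_T$ yields $\P_{F_0,T,F_1}(T\notin\mathcal C^\alpha_{low}\mid\tau\ge T)\le\alpha$.

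The main obstacle is the supermartingale step. It requires checking carefully that the conformal p-values computed from the post-change segment alone are i.i.d.\ uniform under mere exchangeability, which the paper cites as known. It also requires checking that the adaptive calibrator and the estimator $\hat r_T$ do not destroy the conditional uniformity, which is where predictability and independence of $\hat r_T$ are used. Everything else is the reduction from $\max_{i\le\tau}$ to $\sup_i$, together with the stopping-time identity for $\P(\tau\ge T)$.
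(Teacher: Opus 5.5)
Your proposal is correct and follows essentially the same route as the paper. You bound $\P_{F_0,T,F_1}(T\notin\mathcal{C}_{low}^\alpha,\ \tau\ge T)$ by dropping $\tau$ and applying Ville's inequality, conditionally on $\hat r_T$, to the post-change conformal test (super)martingale, which gives $\alpha\,\mathbb{E}[\hat r_T]\le\alpha\,\P_{F_0,\infty}(\tau\ge T)$; you then use that $\{\tau\ge T\}$ is determined by $X_1,\dots,X_{T-1}$, so $\P_{F_0,T,F_1}(\tau\ge T)=\P_{F_0,\infty}(\tau\ge T)$, and divide. Your explicit filtration containing $\hat r_T$ and your treatment of the product as a supermartingale (since $\int_0^1 f_{T,j}\le 1$) are slightly more careful than the paper's wording, but the argument is the same.
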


Note that if the detector controls the probability of false alarm (PFA) at level $\zeta$, i.e., 
        \[
        \P_{F_0,\infty}(\tau<\infty)\leq \zeta,
        \]
        then the conditional guarantee would imply the unconditional coverage as well:
\begin{align}
\label{eq:uncond-cov}
  \nonumber  &\P_{F_0,T,F_1}(T\in \mathcal{C}_{low}^\alpha)\\
 \nonumber    &\geq \P_{F_0,T,F_1}(T\in \mathcal{C}_{low}^\alpha\mid \tau\geq T)\mathbb P_{F_0,\infty}[\tau \geq T] \\
 \nonumber    &\geq (1-\alpha)\mathbb P_{F_0,\infty}[\tau = \infty]\\
    &\geq (1-\alpha)(1-\zeta).
\end{align}

Next, we analogously discuss the upper confidence set.

\subsection{Upper confidence set}
\label{sec:upper}
One can similarly construct an upper confidence set for the changepoint $T$. For any candidate $t \le \tau$, we test whether  $T=t$ against the alternative that the true changepoint is before $t$, i.e., $T<t$ (as opposed to $T>t$ considered in the previous section). So,
starting at each fixed $t$, we compute sequential conformal p-values in the \emph{backward} direction based on score comparisons:
  \begin{align}
  \label{eq:backward-p-val}
     \nonumber q^j_t&=\frac{1}{t-j+1}\bigg[\sum_{j\leq i\leq t}\mathds{1}(S(X_j)>S(X_i))\\
      &+U_{t,j}^\prime\sum_{j\leq i\leq t}\mathds{1}(S(X_j)=S(X_i))\bigg],~ 
  \end{align}
    for $j\leq t$ and $U_{t,j}^\prime\stackrel{i.i.d.}{\sim}\text{Uniform}(0,1)$. 
Candidates $t$ for which the accumulated evidence against them is not too large are retained in the upper confidence set:
  \begin{equation}
\label{ci-upper}
\mathcal{C}_{up}^{\beta}=\left\{2\leq t\leq \tau:\max_{i:1\leq i\leq t-1}\prod_{j=i}^{t-1} g_{t-1,j}(q^j_{t-1}) \leq\frac{1}{\beta\hat r_t}\right\},
\end{equation}
where $\{g_{t,j}\}_{j\leq t}$ are calibrators (i.e., non-negative predictable functions such that $\int_{0}^1g_{t,j}(p)dp\leq 1$) and $\hat r_t$ is some distribution-free unbiased (or negatively biased) estimate $\P_{F_0,\infty}[\tau\geq t]$, which is independent of the p-values. We discuss constructions of such estimators in Section \ref{sec:rt}. The algorithm is summarized in Algorithm \ref{alg:upper}.
Next, we shall prove that the upper confidence set achieves at least $1-\beta$ conditional coverage.

\begin{algorithm}[!h]
\caption{Upper Confidence Set for Changepoint}
\label{alg:upper}
\KwIn{Data $X_1, \dots, X_\tau$, level $\beta$, score function $S$, data-independent unbiased or negatively biased estimator $\hat r_t$ of $\P_{F_0,\infty}[\tau\geq t]$, p-to-e calibrator $\{g_{t,j}\}_{1\leq j\leq t,1\leq t\leq \tau}$.}
\KwOut{Upper confidence set $\mathcal{C}_{up}^{\beta}$}
$\mathcal{C}_{up}^{\beta}\gets \emptyset$\;

\For{$t = 2$ \KwTo $\tau$}{
    $M_t\gets 1, ~j\gets t-1$\;
    
    \While{$M_t\leq\frac{1}{\beta\hat r_t}$ \textbf{and} $j\geq 1$}{
        Draw $U_{t-1,j}^\prime\sim$ Uniform(0,1)\;
        Compute $q^j_{t-1}$ as defined in \eqref{eq:backward-p-val}\;
        
        $M_t\gets M_t\times g_{t-1,j}(q_{t-1}^j),~ j\gets j-1$\;
    }
    \If{$M_t\leq\frac{1}{\beta\hat r_t}$}{
        Add $t$ to set $\mathcal{C}_{up}^{\beta}$\;
    }
}
\Return $\mathcal{C}_{up}^{\beta}$\;
\end{algorithm}

\begin{theorem}
\label{thm:cov-upper}
    Under \Cref{assmp-pre}, for {any} $\beta\in(0,1), T\in \N\setminus\{1\}$, pre- and post-change distributions $F_0$ and $F_1$ respectively, and change detection method $\mathcal{A}$ with stopping time $\tau$ satisfying $\mathbb \P_{F_0,\infty}(\tau\geq T)\neq 0$, the confidence set $\mathcal{C}_{up}^{\beta}$ defined above, constructed using any data-independent unbiased or negatively biased estimator $\hat r_t$ of $\mathbb P_{F_0,\infty}[\tau \geq t]$, satisfies $\P_{F_0,T,F_1}(T\in \mathcal{C}_{up}^{\beta}\mid \tau\geq T)\geq 1-\beta.$
\end{theorem}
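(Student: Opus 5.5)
The plan is to show that the miscoverage event $\{T\notin\mathcal{C}_{up}^\beta\}$ depends only on the pre-change segment $X_1,\dots,X_{T-1}$, the auxiliary uniforms $U'_{T-1,j}$, and $\hat r_T$. On that segment a backward conformal test martingale is valid, and Ville's inequality bounds it. The detection time $\tau$ enters only through the conditioning event, which we handle by showing $\P_{F_0,T,F_1}(\tau\ge T)=\P_{F_0,\infty}(\tau\ge T)$.

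\emph{Step 1 (reduction).} On $\{\tau\ge T\}$, candidate $t=T$ is examined. By \eqref{ci-upper}, $T\notin\mathcal{C}_{up}^\beta$ iff
\[
\max_{1\le i\le T-1}\prod_{j=i}^{T-1} g_{T-1,j}(q^j_{T-1})>\frac{1}{\beta\hat r_T}.
\]
Each $q^j_{T-1}$ in \eqref{eq:backward-p-val} uses only $X_j,\dots,X_{T-1}$ and $U'_{T-1,j}$. Define $E$ as this exceedance event, computed from the first $T-1$ observations regardless of $\tau$. Then
\[
\P_{F_0,T,F_1}(T\notin\mathcal{C}_{up}^\beta,\ \tau\ge T)\le \P_{F_0,T,F_1}(E).
\]

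\emph{Step 2 (validity of the backward p-values).} Under \Cref{assmp-pre}, $(X_1,\dots,X_{T-1})$ is exchangeable, so the reversed sequence $Y_k:=X_{T-k}$, $k=1,\dots,T-1$, is exchangeable. With this relabelling, $q^{T-k}_{T-1}$ is exactly the forward smoothed conformal p-value of $Y_k$ among $Y_1,\dots,Y_k$, with fresh independent tie-breaking uniforms. By the standard result of \citet{vovk2003testing} already invoked for \eqref{eq:forward-p-val}, $q^{T-1}_{T-1},q^{T-2}_{T-1},\dots,q^1_{T-1}$ are i.i.d.\ Uniform$(0,1)$. The result is needed only for a finite exchangeable sequence, where it still holds.

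Consequently $M_k:=\prod_{j=T-k}^{T-1} g_{T-1,j}(q^j_{T-1})$ is a nonnegative supermartingale with $M_0=1$ in the backward filtration generated by these p-values. This uses that each $g_{T-1,j}$ is predictable in this backward order and satisfies $\int_0^1 g\le1$.

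\emph{Step 3 (Ville and averaging over $\hat r_T$).} Since $\hat r_T$ is independent of the p-values, we condition on it. Ville's inequality then gives
\[
\P(E\mid \hat r_T)\le \min(1,\beta\hat r_T)\le \beta\hat r_T.
\]
Taking expectations and using that $\hat r_T$ is unbiased or negatively biased,
\[
\P_{F_0,T,F_1}(E)\le \beta\,\mathbb{E}[\hat r_T]\le \beta\,\P_{F_0,\infty}(\tau\ge T).
\]

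\emph{Step 4 (matching the conditioning event).} The event $\{\tau\ge T\}=\{\tau\le T-1\}^c$ is determined by $X_1,\dots,X_{T-1}$ (plus any independent algorithmic randomness), because $\tau$ is a stopping time. These observations have the same law under $\P_{F_0,T,F_1}$ and $\P_{F_0,\infty}$, so the two probabilities of $\{\tau\ge T\}$ coincide. Dividing the bound from Steps 1 and 3 by $\P_{F_0,T,F_1}(\tau\ge T)=\P_{F_0,\infty}(\tau\ge T)>0$ yields
\[
\P_{F_0,T,F_1}(T\notin\mathcal{C}_{up}^\beta\mid\tau\ge T)\le\beta,
\]
which is the claim.

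\emph{Main obstacle.} The step needing most care is Step 2: verifying that the backward p-values form an i.i.d.\ uniform sequence under exchangeability of only the finite pre-change block, and that the calibrators $g_{T-1,j}$ are predictable with respect to the reversed filtration. If that holds, Ville's inequality applies. By contrast, Step 4 only uses that the law of $X_1,\dots,X_{T-1}$ is the same under both measures.
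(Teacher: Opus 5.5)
Your proposal is correct and follows the paper's proof step for step. You drop the event $\{\tau\ge T\}$, apply Ville's inequality to the backward conformal test (super)martingale conditionally on the independent $\hat r_T$, and average using $\mathbb{E}[\hat r_T]\le \P_{F_0,\infty}(\tau\ge T)$. You then divide using $\P_{F_0,T,F_1}(\tau\ge T)=\P_{F_0,\infty}(\tau\ge T)$, which holds because $\{\tau<T\}$ is determined by the first $T-1$ observations. Your time-reversal argument for why the backward p-values are i.i.d.\ uniform makes explicit what the paper simply asserts from \Cref{assmp-pre}.
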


\subsection{Two-sided confidence set}
\label{sec:twosided}
We now combine the one-sided constructions to obtain a two-sided confidence set for the changepoint. Specifically, define
\begin{equation}
\label{ci-bothside} \mathcal{C}=\mathcal{C}_{low}^{\alpha}\cap\mathcal{C}_{up}^{\beta}.
\end{equation}
As a direct consequence of \Cref{thm:cov-lower,thm:cov-upper}, we have the following result:
\begin{corollary}
\label{cor:cov-twoside}
 Under \Cref{assmp-pre,assmp-post}, for {any} $\alpha,\beta\in(0,1), T\in \N\setminus\{1\}$, pre- and post-change distributions $F_0$ and $F_1$ respectively, and change detection method $\mathcal{A}$ with stopping time $\tau$ satisfying $\mathbb \P_{F_0,\infty}(\tau\geq T)\neq 0$, the confidence set $\mathcal{C}$ defined in \eqref{ci-bothside} satisfies $\P_{F_0,T,F_1}(T\in \mathcal{C}\mid \tau\geq T)\geq 1-\alpha-\beta.$ 
\end{corollary}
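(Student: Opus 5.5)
The corollary should follow from \Cref{thm:cov-lower,thm:cov-upper} by a union bound under the conditional probability measure $\P_{F_0,T,F_1}(\cdot\mid \tau\geq T)$. The conditioning event has positive probability: $\{\tau\geq T\}$ depends only on $X_1,\dots,X_{T-1}$ (since $\tau$ is a stopping time, $\{\tau\geq T\}=\{\tau\leq T-1\}^c$), and these are $F_0$-distributed under both $\P_{F_0,T,F_1}$ and $\P_{F_0,\infty}$. Hence $\P_{F_0,T,F_1}(\tau\geq T)=\P_{F_0,\infty}(\tau\geq T)\neq 0$, and the conditional measure is well defined.

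Under \Cref{assmp-pre,assmp-post} both hypotheses of the two theorems hold simultaneously. \Cref{thm:cov-lower} gives
\[
\P_{F_0,T,F_1}(T\notin \mathcal{C}_{low}^\alpha\mid\tau\geq T)\leq\alpha,
\]
and \Cref{thm:cov-upper} gives
\[
\P_{F_0,T,F_1}(T\notin \mathcal{C}_{up}^\beta\mid\tau\geq T)\leq\beta.
\]
Since $\{T\notin\mathcal C\}=\{T\notin \mathcal{C}_{low}^\alpha\}\cup\{T\notin \mathcal{C}_{up}^\beta\}$, subadditivity of the conditional probability gives
\[
\P_{F_0,T,F_1}(T\notin\mathcal C\mid\tau\geq T)\leq\alpha+\beta.
\]
Taking complements gives the claim.

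One point needs checking. The two sets may share auxiliary randomness: the same $\hat r_t$, the same data, and randomizations $U$, $U'$ that could be dependent. This does not matter, because the union bound requires no independence. Each theorem is stated for the joint law $\P_{F_0,T,F_1}$, with its marginal randomization being valid, so each bound applies verbatim. The only care needed is to make sure the $\hat r_t$ used in both constructions meets each theorem's requirements (data-independent, unbiased or negatively biased), which it does by assumption.

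There is no real obstacle here; the single point to be explicit about is that both coverage statements refer to the same conditioning event $\{\tau\geq T\}$ under the same measure, so they combine directly.
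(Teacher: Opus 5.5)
Your proposal is correct and matches the paper. The paper states the corollary as a direct consequence of \Cref{thm:cov-lower,thm:cov-upper}, and the intended argument is exactly your union bound of the two non-coverage events under $\P_{F_0,T,F_1}(\cdot\mid\tau\geq T)$, which needs no independence between the lower and upper constructions.
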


\begin{remark}
    Analogous to \eqref{eq:uncond-cov}, if the  detector controls PFA at level $\zeta$, the conditional coverage in \Cref{cor:cov-twoside} would imply the following unconditional guarantee:
    \[\P_{F_0,T,F_1}(T\in \mathcal{C})\geq (1-\alpha-\beta)(1-\zeta).\]
\end{remark}

In particular, one can choose $\beta=\alpha$ and then $\mathcal{C}$ defined in \eqref{ci-bothside} takes the form
\begin{equation}
\label{ci-bothside-2alpha}
\begin{aligned}
\mathcal{C}
=\Biggl\{&2\le t\le \tau :
\max\Biggl\{
\max_{i:t\le i\le\tau}\prod_{j=t}^i f_{t,j}(p_t^j),\\
&\max_{i:1\le i\le t-1}\prod_{j=i}^{t-1} g_{t-1,j}(q_{t-1}^j)
\Biggr\}
\le \frac{1}{\alpha\hat r_t}
\Biggr\},
\end{aligned}
\end{equation}
which gives $1-2\alpha$ conditional coverage.
We study the conditional length of the confidence sets in \Cref{sec:theory,sec:theory-real}.

\subsection{Constructing $\hat{r}_t$}
\label{sec:rt}
We now discuss how one can obtain unbiased estimates $\hat r_t$ of the quantity $\P_{F_0,\infty}[\tau\geq t]$ for each $t\leq\tau$ without making any distributional assumptions about the unknown $F_0$.

First, suppose the detector controls the probability of false alarm (PFA) at level $\zeta<1$, that is, \[ \P_{F_0,\infty}(\tau<\infty)\leq \zeta . \]
Then, $\P_{F_0,\infty}[\tau\geq t]\geq \P_{F_0,\infty}[\tau=\infty]\geq 1-\zeta$. Therefore, in this case, we may use the deterministic conservative choice 
\[ \hat r_t = 1-\zeta . \]
Otherwise, when the change detector controls ARL, instead of PFA (i.e., when PFA is 1), we will need to assume black-box access to the change detection algorithm.
In a distribution-free setting, the following assumption is natural.
\begin{assumption}
\label{assmp-2}
 The change detection algorithm is such that the distribution of the stopping time $\tau$ under no change does not depend on $F_0$.
\end{assumption}

We show that any change detection procedure based on online conformal p-values (e.g., the CUSUM or Shiryaev–Roberts–type detectors in \cite{vovk2021retrain}) satisfies this assumption.

\begin{proposition}
\label{prop-rt}
Any change detection procedure based on online conformal p-values (i.e., any stopping time that is a measurable function of online conformal p-values) satisfies \Cref{assmp-2}.
\end{proposition}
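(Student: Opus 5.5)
We will use the classical fact, already cited for the lower and upper constructions \citep{vovk2003testing,vovk2021retrain}, that online conformal p-values computed on an exchangeable sequence are i.i.d.\ Uniform$(0,1)$. Let the detector's online conformal p-values be
\[
\pi_n=\frac{1}{n}\Big[\sum_{i\le n}\mathds{1}(S(X_n)>S(X_i))+U_n\sum_{i\le n}\mathds{1}(S(X_n)=S(X_i))\Big],
\]
with auxiliary uniforms $U_n$ independent of the data. The score $S$ may itself be updated in a way that preserves conformal validity, for example by depending on the data only through the bag of past observations. Under no change, $X_1,X_2,\dots\sim F_0$ is i.i.d., hence exchangeable. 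By the cited result, the joint law of $(\pi_1,\pi_2,\dots)$ is therefore that of an i.i.d.\ Uniform$(0,1)$ sequence, whatever $F_0$ is. This holds even when $F_0$ has atoms or $S(X)$ has ties, because the randomization $U_n$ handles ties.

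Next we formalize the hypothesis. We assume $\tau=\mathcal{A}(\mathbf{X})$ is such that, for each $n$, the event $\{\tau=n\}$ (or $\{\tau\le n\}$) is a measurable function of $(\pi_1,\dots,\pi_n)$. Equivalently, $\tau=\phi(\pi_1,\pi_2,\dots)$ for a fixed measurable map $\phi:[0,1]^{\N}\to\N\cup\{\infty\}$ that does not depend on $F_0$. The detector may also use extra independent randomness not tied to $F_0$; in that case we adjoin it to the argument of $\phi$, and it remains independent of the p-values.

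With this in place, for any Borel set $B\subseteq\N\cup\{\infty\}$,
\[
\P_{F_0,\infty}(\tau\in B)=\P\big((\pi_n)_{n\ge1}\in\phi^{-1}(B)\big)=\mu^{\otimes\N}\big(\phi^{-1}(B)\big),
\]
where $\mu$ is the Uniform$(0,1)$ law. The right-hand side does not involve $F_0$. So the law of $\tau$ under no change is the same for every $F_0$, which is \Cref{assmp-2}. This also means $\P_{F_0,\infty}[\tau\ge t]$ can be computed by simulating $\phi$ on i.i.d.\ uniforms, which is presumably how $\hat r_t$ will be built.

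The main obstacle is justifying the cited i.i.d.\ uniformity rigorously, since everything else is a pushforward of measures. If a self-contained proof is needed, we would do it by backward induction. Condition on the unordered bag $\{X_1,\dots,X_n\}$ and on $U_{n}$. By exchangeability, $X_n$ is a uniformly random element of the bag, so with randomized tie-breaking $\pi_n$ is exactly Uniform$(0,1)$. Moreover, $\pi_n$ is independent of the bag and hence of $(\pi_1,\dots,\pi_{n-1})$, because those are functions of the bag of the first $n-1$ points, the ordering information, and the earlier uniforms. Iterating gives independence of every finite block, and Kolmogorov extension then gives independence of the whole sequence.
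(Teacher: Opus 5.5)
Your main argument is the same as the paper's proof. Under $T=\infty$ the data are exchangeable, so the online conformal p-values are i.i.d.\ Uniform$(0,1)$; the paper simply cites Theorem 8.2 of \cite{vovk2005algorithmic} for this. Then $\tau=\phi(\pi_1,\pi_2,\dots)$ has law $\mu^{\otimes\mathbb{N}}\circ\phi^{-1}$, which does not involve $F_0$. Relying on the citation, as the paper does, this is complete.

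The self-contained sketch you offer as a fallback would not work as written.

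\textbf{Conditioning on $U_n$.} This breaks your first claim. Given $B_n=\{X_1,\dots,X_n\}$ and $U_n$, with distinct scores, $\pi_n$ is uniform on the finite set $\{(r+U_n)/n : r=0,\dots,n-1\}$. Exact Uniform$(0,1)$ arises only after integrating $U_n$ against the uniformly random position of $X_n$ in the bag.

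\textbf{The independence step.} The inference ``$\pi_n$ is independent of the bag and hence of $(\pi_1,\dots,\pi_{n-1})$'' does not follow. The p-values $(\pi_1,\dots,\pi_{n-1})$ are not $\sigma(B_n)$-measurable: they depend on the order of $X_1,\dots,X_{n-1}$. Meanwhile $\pi_n$ depends on which element of $B_n$ comes last. Both are ordering information, so independence of $\pi_n$ from $\sigma(B_n)$ says nothing about their joint law.

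\textbf{How to repair it.} Either of these standard arguments works:
\begin{itemize}
\item Vovk's genuinely backward argument. Let $\mathcal{G}_n=\sigma(B_n,X_{n+1},X_{n+2},\dots,U_{n+1},U_{n+2},\dots)$. Exchangeability gives $\mathbb{P}(\pi_n\le u\mid\mathcal{G}_n)=u$, while $\pi_{n+1},\pi_{n+2},\dots$ are $\mathcal{G}_n$-measurable. So each $\pi_n$ is independent of all later p-values, and mutual independence of every finite block follows by peeling off $\pi_1,\pi_2,\dots$ in turn.
\item A forward induction on the stronger claim that, conditionally on $B_n$, $(\pi_1,\dots,\pi_n)$ are i.i.d.\ uniform. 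The key fact is that, given $(B_n,X_n)$, the vector $(X_1,\dots,X_{n-1})$ is a uniformly random ordering of $B_{n-1}$.
\end{itemize}
No Kolmogorov extension is needed at the end, since independence of a sequence is defined through its finite subfamilies.

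\textbf{The score remark.} Be careful with your statement that the score may depend on ``the bag of past observations.'' Suppose the step-$n$ score is fitted on $X_1,\dots,X_{n-1}$ and then applied to $X_1,\dots,X_n$. Then the first $n-1$ points are in-sample and $X_n$ is not, so the scores are not exchangeable and $\pi_n$ need not be uniform. The step-$n$ score must be fixed in advance, or be a symmetric function of the full bag $B_n$ including $X_n$.
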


An immediate implication is that, when the assumption holds, the quantity $\hat r_t$ can just be estimated via Monte Carlo simulation using i.i.d. data streams from any convenient distribution (e.g., i.i.d. Uniform$(0,1)$ sequences). The following proposition follows immediately.

\begin{proposition}
For any distribution $F$, $B\in\N$, draw B many sequences $\{Y^{(i)}_n\}_n$ for $i=1,\cdots,B$. Let $\tau^{(i)}$ denote the stopping time for $\{Y^{(i)}_n\}_n$. Then, under \Cref{assmp-2},
 $\hat{r}_t=\frac{1}{B}\sum_{i=1}^B \mathds{1}(\tau^{(i)} \geq t)$ is an unbiased or negatively biased estimator of $\P_{F_0,\infty}[\tau\geq t]$.
\end{proposition}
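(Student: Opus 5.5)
The statement follows from \Cref{assmp-2}, linearity of expectation, and one observation: the simulated streams carry no change. Fix $t$ and let $Y^{(i)}=\{Y^{(i)}_n\}_n$, $i=1,\dots,B$, be i.i.d.\ draws from $F$, generated independently of the observed data ${\bf X}$. Each stream is then distributed as $\P_{F,\infty}$. Running the black-box detector $\mathcal A$ on $Y^{(i)}$ gives $\tau^{(i)}=\mathcal A(Y^{(i)})$. If $\mathcal A$ uses auxiliary randomization, such as the uniforms inside conformal p-values, we take it fresh and independent for each $i$. With this, the law of $\tau^{(i)}$ is exactly the no-change law of the stopping time under $F$.

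We then invoke \Cref{assmp-2}. It says the no-change law of $\tau$ does not depend on the pre-change distribution, so
\[
\P_{F,\infty}[\tau\ge t]=\P_{F_0,\infty}[\tau\ge t] \quad\text{for all } t.
\]
When $\mathcal A$ is built from online conformal p-values, \Cref{prop-rt} gives this directly. Hence
\[
\mathbb E\bigl[\mathds 1(\tau^{(i)}\ge t)\bigr]=\P_{F_0,\infty}[\tau\ge t],
\]
and averaging over $i$ yields $\mathbb E[\hat r_t]=\P_{F_0,\infty}[\tau\ge t]$ exactly. So $\hat r_t$ is unbiased, and a fortiori "unbiased or negatively biased".

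Two bookkeeping points complete the argument. First, $\hat r_t$ is a function only of the simulated streams and their randomization. It is therefore independent of ${\bf X}$ and of the p-values $\{p^j_t\}$ and $\{q^j_t\}$, which is the data-independence that \Cref{thm:cov-lower,thm:cov-upper} require. Second, $\tau^{(i)}$ may equal $\infty$, but the event $\{\tau^{(i)}\ge t\}$ is still well defined. In practice we can decide it by running $\mathcal A$ on the first $t-1$ points only, because $\tau$ is a stopping time. No truncation bias therefore arises, and the same computation works simultaneously for every $t$.

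The only delicate step is identifying the simulated law with $\P_{F_0,\infty}$. This is precisely the content of \Cref{assmp-2} and \Cref{prop-rt}; once it holds, the rest is linearity of expectation.
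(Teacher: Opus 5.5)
Your proof is correct and is the argument the paper has in mind when it says the proposition ``follows immediately'': by \Cref{assmp-2}, each simulated no-change stopping time $\tau^{(i)}$ has the same law as $\tau$ under $\P_{F_0,\infty}$, so linearity of expectation gives $\mathbb{E}[\hat r_t]=\P_{F_0,\infty}[\tau\ge t]$ exactly. Your two bookkeeping remarks also hold and go slightly beyond what the paper spells out: $\hat r_t$ is independent of the observed data, and $\{\tau^{(i)}\ge t\}$ is determined by the first $t-1$ simulated points, so no truncation bias arises.
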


In many applications, the change detection procedure may have finite ARL (i.e., PFA equal to one) and need not be distribution-free, while one may still seek distribution-free post-detection guarantees. The aforementioned approaches do not apply in this setting. We address this setting under an alternative assumption in \Cref{sec:app-rt-traing-data} in the Appendix.

\subsection{Choice of score functions}
\label{sec:score}
Our methods are valid for any choice of score function. However, achieving good power critically depends on selecting an appropriate score. As shown in \cite{dandapanthula2025offline}, the likelihood ratio is an optimal choice. Notably, our framework depends only on the relative ordering of the scores, rather than their absolute values, so any monotone transformation of the likelihood ratio is sufficient. 

In practice, however, even such transformations may be unknown. Our framework allows the score function to vary with both the candidate changepoint $t$ and the time index $j$, although for simplicity we used a common score function $S$ throughout the previous sections. For example, when constructing lower confidence sets, one may estimate the pre-change distribution using the observations $X_1,\ldots,X_{t-1}$ and update an estimate of the post-change distribution online using $X_t,\ldots,X_j$ for each $j\ge t$. An analogous construction can be used for upper confidence sets.
To ensure the finite-sample validity of our confidence sets, however, the score function must be constructed in a way that preserves the exchangeability assumptions underlying the conformal p-values. More precisely, for each $t$ and $j$, conditional on all randomness and data used to construct score $S_{t,j}$, the observations whose conformal ranks are computed must remain exchangeable under the corresponding null hypothesis.

However, it is quite common to use a pre-trained score function based on pre-training data that can be used to approximate the likelihood ratio. In such settings, predetermined/pre-trained score functions can be both effective and computationally convenient.

\section{Theoretical guarantees on the oracle confidence set}
\label{sec:theory}

\subsection{Bound on the pre-change length for lower confidence set}
We first state the assumptions required for our analysis.
\begin{assumption}
\label{assmp:indep}
    All the observations $X_i$'s are independent.
\end{assumption}
\begin{assumption}
\label{assmp:lower-score}
 The score $S(\cdot)$ is such that scores are all distinct almost surely and for independent draws $Y \sim F_0$ and $Z \sim F_1$, there exists a constant $\delta > 0$ such that: $|\mathbb{P}(S(Z) > S(Y)) -\frac{1}{2} |= \delta.$
\end{assumption}

This is a mild assumption requiring that the score function possesses nontrivial discriminative power between the pre- and post-change distributions.


\begin{assumption}
\label{assmp:lower-calib}
    For each $t$ and $j\geq t$, the calibrator used is $f_{t,j}(p)=1 + \lambda_{t,j}^*(p - 0.5)$ where $\lambda_{t,j}^* \in \arg\max_{\lambda \in [-1, 1]} \mathbb{E}\left[\log(1 + \lambda(p^{j}_{t} - 0.5))\right]$.
\end{assumption}

This assumption simplifies the analysis by using the oracle-optimal $\lambda^*$ rather than an empirical, sequentially learned $\lambda$. We remove this assumption in \Cref{sec:theory-real}, where the parameters $\lambda_{t,j}$ are chosen adaptively using online algorithms with logarithmic adaptive regret, such as the Follow-The-Leading-History (FLH) with
Online Newton Step (ONS) procedure in \Cref{alg:flh-ons-calibrator}.

In order to understand the next result, the following context is useful. In standard changepoint detection~\citep{tartakovsky2014sequential}, one often considers the asymptotic behavior of the detection delay, where the asymptotics consider a sequence of change detectors whose average run length increases to infinity. For the detector to actually produce a correct detection before a false detection, its average run length (ARL) would have to be $\Omega(T)$ (that is, if the ARL was $o(T)$, it would likely produce a false alarm before seeing the true change at $T$). In such settings, the optimal detection delay of any detector scales logarithmically with the average run length, i.e., $\Omega(\log T)$. The following result is to be read with the preceding context in mind.

\begin{theorem}
\label{thm:lower-asymp}
 Fix any $s\in(0,\infty)$ and let $C_0=160 + 128(1+\log 2)^2$. Consider a sequence of problems where the changepoint $T\uparrow \infty$, and the change detector also changes with $T$ such that
$$L:=\liminf_{T\to\infty}\mathbb{P}_{F_0,T,F_1}(\tau \ge T+s\log T)>0.$$ Then, under \Cref{assmp:indep,assmp:lower-score,assmp:lower-calib}, for the lower confidence set $\mathcal{C}_{\mathrm{low}}^\alpha$ defined in \eqref{ci-lower}, with $\hat r_t$ replaced by its oracle counterpart $r_t:=\P_{F_0,\infty}[\tau\geq t]$, we have
\begin{align*}
   & \mathbb{E}_{F_0,T,F_1}(|\mathcal{C}_{low}^\alpha\cap[T-1]|\mid \tau \ge T+s\log T) \\
   &=\begin{cases}
    O(1), & s\delta^4>C_0\\
    O(\log T), & s\delta^4=C_0\\
    O(T^{1- s\delta^{4}/C_0}), &  s\delta^4<C_0.
\end{cases}
\end{align*}
Consequently, in all these three cases, $\mathbb{E}_{F_0,T,F_1}\left(\frac{|\mathcal{C}_{low}^\alpha\cap[T-1]|}{T-1}\mid \tau \ge T+s\log T\right) =o(1)$.
\end{theorem}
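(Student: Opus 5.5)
The plan is to reduce the conditional expectation to an unconditional count over candidates $t<T$, and bound, for each $t$, the probability that $t$ survives. Write $E:=\{\tau\ge T+s\log T\}$ and $n_T:=\lfloor s\log T\rfloor$. On $E$ the maximum in \eqref{ci-lower} ranges over all $i\le T+n_T$. So $t\in\mathcal C_{low}^\alpha$ forces $\prod_{j=t}^{i}f_{t,j}(p_t^j)\le 1/(\alpha r_t)$ for every $i\le T+n_T$, and this event depends only on $X_t,\dots,X_{T+n_T}$ and the $U$'s, not on $\tau$. Hence
\[
\mathbb E\bigl(|\mathcal C_{low}^\alpha\cap[T-1]|\,\big|\,E\bigr)\le \frac{1}{\P(E)}\sum_{t=1}^{T-1}\P\Bigl(\prod_{j=t}^{i}f_{t,j}(p_t^j)\le \tfrac{1}{\alpha r_t}\ \ \forall i\le T+n_T\Bigr),
\]
and $\P(E)\ge L/2$ for large $T$. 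For the threshold, note that for $t\le T$ the event $\{\tau\ge t\}$ depends only on $X_1,\dots,X_{t-1}$, which are pre-change. So $r_t=\P_{F_0,\infty}(\tau\ge t)\ge\P_{F_0,T,F_1}(\tau\ge T)\ge\P(E)\ge L/2$. Thus every threshold is at most the constant $K:=2/(\alpha L)$, and it suffices to bound $\pi_k:=\P(\log\prod_{j=t}^{i}f_{t,j}(p^j_t)\le\log K\ \forall i\le T+n_T)$ with $k:=T-t$.

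Next I analyze the drift of $\log f_{t,j}(p_t^j)$ under \Cref{assmp:indep,assmp:lower-score,assmp:lower-calib}. For $t\le j<T$ all $p_t^j$ are uniform, so $\lambda^*_{t,j}=0$ and these factors equal $1$. For $j=T+m$ with $m\ge0$, $S(X_j)\sim F_1$ is compared with $k$ pre-change scores and $m+1$ post-change scores. Conditionally on the past, $\mathbb E[p_t^j]-\tfrac12$ is the $F_1$-probability of exceeding the realized pre-change scores, averaged with weight $k/(k+m+1)$. That conditional probability concentrates (Hoeffding over the $k$ i.i.d.\ pre-change scores) around $\tfrac12\pm\delta$, and the post-change comparisons contribute mean zero by exchangeability. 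So with high probability the conditional mean shift is $\mu_{k,m}\approx \delta k/(k+m+1)$ in absolute value. Using $\log(1+x)\ge x-x^2$ for $|x|\le 1/2$ and plugging $\lambda=\mathrm{sign}(\mu)\,c|\mu|$ into the oracle objective gives a conditional drift $\mathbb E[\log f_{t,j}(p^j_t)\mid\mathcal F_{j-1}]\gtrsim \mu_{k,m}^2$. The increments are bounded by $\log 2$, and this bound is where the constant $(1+\log 2)^2$ enters.

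Then I sum drifts and apply Azuma--Hoeffding to the martingale part. I use the horizon $m\le \min(k,n_T)$, on which $\mu_{k,m}\ge\delta/2$, so the total drift over $n=\min(k,n_T)$ steps is at least $c\delta^2 n$. Azuma then gives $\pi_k\le \exp(-(c\delta^2 n-\log K)^2/(2n(1+\log2)^2))\lesssim \exp(-\delta^4 n/C_0)$, after absorbing $\log K$ and the failure probability of the pre-change concentration step, which yields the $160$ part of $C_0$. Summing over $k$:
\[
\sum_{k\le n_T}e^{-\delta^4k/C_0}=O(1),\qquad \sum_{n_T<k<T}e^{-\delta^4 n_T/C_0}\le T\cdot T^{-s\delta^4/C_0}.
\]
In the boundary case I will need a slightly finer sum over $k$ to get $O(\log T)$ rather than $O(1)$; an extra additive $O(\log T)$ from the $k\le n_T$ range is harmless there. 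Combining these gives the three regimes, and dividing by $T-1$ gives $o(1)$ in each case.

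The main obstacle is the second step: rigorously lower-bounding the conditional drift when the $p_t^j$ are no longer independent after the change. The conditional mean depends on realized pre-change scores and on earlier post-change ranks. My first attempt is to intersect with a good event on which the empirical $F_1$-tail over the $k$ pre-change scores is within $\delta/2$ of its mean. I will pay for the complement via Hoeffding, $e^{-ck\delta^2}$, which is summable in $k$. On the good event I then run Azuma with the deterministic per-step drift lower bound.
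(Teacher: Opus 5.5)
\textbf{Gap: the conditional drift step.} This step carries the whole argument, and as written it does not go through.

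Under \Cref{assmp:lower-calib}, $\lambda^*_{t,j}$ maximizes the \emph{unconditional} objective $\lambda\mapsto\mathbb{E}[\log(1+\lambda(p^j_t-0.5))]$, so it is a deterministic number. This is exactly what you use to get $\lambda^*_{t,j}=0$ for $t\le j<T$: marginally $p^j_t$ is uniform, but given $X_t,\dots,X_{j-1}$ it is not. Consequently its optimality only compares unconditional expectations. You cannot plug in $\lambda=\mathrm{sign}(\mu)c|\mu|$ to lower bound $\mathbb{E}[\log f_{t,j}(p^j_t)\mid\mathcal{F}_{j-1}]$.

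A fixed bet tuned to the averaged law can have zero conditional growth on your good event. To second order the growth is $\lambda^*\mu'-(\lambda^*)^2v'/2$ with $\lambda^*\approx\mu/v$, where $\mu,v$ are the unconditional mean and second moment of $p^j_t-0.5$. This vanishes when the conditional mean is $\mu'=\mu/2$ and $v'=v$, which is exactly the edge of a ``within $\delta/2$'' event.

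There is a second problem. The post-change comparisons do not contribute zero conditionally. Given $\mathcal{F}_{j-1}$ they add $\frac{1}{k+m+1}\sum_{i=T}^{j-1}\bigl(\mathbb{P}(S(Z)>S(X_i)\mid X_i)-\tfrac12\bigr)$. This is a centered random walk of size about $\sqrt{m}$, and your good event does not control it, let alone uniformly in $m$. So the deterministic per-step drift bound you feed into Azuma--Hoeffding is not available. Repairing it would need a real perturbation analysis of the fixed-$\lambda^*$ growth rate under the conditional laws.

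\textbf{How the paper avoids this.} It never conditions. It treats $B_k=\sum_{j=T}^{T+m_k-1}\log f_{T-k,j}(p^j_{T-k})$ as a fixed function of the independent inputs (the observations $X_{T-k},\dots,X_{T+m_k-1}$ and the uniforms). It lower bounds $\mathbb{E}B_k\ge\delta^2m_k/4$ via \Cref{lem:kl}, where oracle optimality is used legitimately on the unconditional objective, and then applies McDiarmid.

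The idea you are missing is the bounded-difference computation in \Cref{lem:bound-mcd}. Changing one pre-change observation moves each later $p^i_{T-k}$ by at most $1/(i-T+k+1)$. With the $2$-Lipschitz $\log f$, its influence on $B_k$ is therefore at most $2m_k/k$. So although $B_k$ depends on $k$ pre-change points, the variance proxy is $C_vm_k+4/k$ rather than $O(k)$. This is where $C_0=32(C_v+1)$ comes from, not from the Azuma constants you cite. Whatever constant your route produces must be checked to be at most $C_0$ for the stated regimes to follow.

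\textbf{What survives.} Your reduction to unconditional per-candidate probabilities and your bound $r_t\ge L/2$ on the thresholds are fine. Your window $\min(k,n_T)$ also works once the drift/Azuma step is replaced by the unconditional-mean-plus-McDiarmid argument. It respects $m\le k$, and it already gives $O(1)$ in the boundary case, so no ``finer sum'' is needed there.
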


\begin{remark}
The expected ``relative'' set size is conditionally guaranteed to be $o(1)$ regardless of the choice of score function, provided that the score is informative, i.e., $\delta>0$.
For a good score function, one has
$\delta\asymp 
\mathrm{TV}(F_0,F_1),$
where $\mathrm{TV}(F_0,F_1)$ denotes the total variation distance between $F_0$ and $F_1$ (See \Cref{thm:score-tv} in the Appendix Materials for a detailed result). Moreover, when $F_0$ and $F_1$ are supported on a common finite alphabet and all probabilities are uniformly bounded away from zero, Pinsker's inequality together with reverse Pinsker inequalities implies that $\mathrm{KL}(F_0\|F_1)
\asymp
\mathrm{TV}(F_0,F_1)^2,$
and hence $\mathrm{KL}(F_0\|F_1)\asymp\delta^2.$
On the other hand, classical lower bounds for sequential change detection imply that any procedure with average run length $O(T)$ must incur a detection delay of at least order $\frac{\log T}{\mathrm{KL}(F_0\|F_1)}.$
This suggests choosing $s = \frac{C}{\delta^2},$
for some constant $C>0$.
Under this choice, the resulting rate for the absolute set size exhibits different regimes depending on whether
$\delta^2$ is smaller than, larger than, or equal to $C_0/C$, while the ``relative'' set size is always $o(1)$.
\end{remark}

\begin{remark}
The conditioning event ${\tau \ge T+s\log T}$ is imposed to ensure that the detector observes a nontrivial number of post-change samples before stopping. Conditioning only on ${\tau \ge T}$ merely guarantees that no alarm is raised before the changepoint, but does not preclude the possibility that $\tau-T$ is very small. In such cases, there may be insufficient post-change information to reliably distinguish pre- and post-change observations, making it impossible to obtain nontrivial control on the pre-change length of the lower confidence set. By contrast, the event ${\tau \ge T+s\log T}$ guarantees the availability of at least $O(\log T)$ post-change observations, enabling exponential discrimination between pre- and post-change samples and leads to the sharpness bound.
\end{remark}

\begin{remark}
The assumption $\liminf_{T\to\infty}\mathbb{P}_{F_0,T,F_1}(\tau \ge T+s\log T)>0$ requires that the detector stops after at least $s\log T$ observations after the changepoint with nonvanishing probability. It holds when the detection delay is at least logarithmic in $T$ with positive asymptotic probability. It arises in a doubly asymptotic regime where the detection threshold grows appropriately with $T$.
\end{remark}

\begin{remark}
The assumption
$\liminf_{T\to\infty}\mathbb{P}_{F_0,T,F_1}(\tau \ge T+s\log T)>0$
is implied by the two conditions
\[
\liminf_{T\to\infty}\mathbb{P}_{F_0,T,F_1}(\tau \ge T+s\log T\mid\tau\ge T)>0
\]
and
\[
\liminf_{T\to\infty}\mathbb{P}_{F_0,T,F_1}(\tau\ge T)>0.
\]
Indeed,
$\mathbb{P}_{F_0,T,F_1}(\tau \ge T+s\log T)
=
\mathbb{P}_{F_0,T,F_1}(\tau \ge T+s\log T\mid\tau\ge T)
\,
\mathbb{P}_{F_0,T,F_1}(\tau\ge T),$
since $\{\tau\ge T+s\log T\}\subseteq\{\tau\ge T\}$. Therefore, if both factors are bounded away from zero asymptotically, then so is
$\mathbb{P}_{F_0,T,F_1}(\tau \ge T+s\log T)$.
If, in addition,
$\mathbb{P}_{F_0,T,F_1}(\tau\ge T)\to1,$
then
\[
\liminf_{T\to\infty}\mathbb{P}_{F_0,T,F_1}(\tau \ge T+s\log T)>0
\]
is equivalent to
\[
\liminf_{T\to\infty}
\mathbb{P}_{F_0,T,F_1}(\tau \ge T+s\log T\mid\tau\ge T)>0.
\]
\end{remark}

The above theorem follows directly from the finite-sample bound, which we state next.

\begin{theorem}
\label{thm:size-lower}
Under \Cref{assmp:indep,assmp:lower-score,assmp:lower-calib}, for any increasing sequence of integers, $\{m_k\}_{k\geq 1}$ such that $m_k\le k,\forall k\in\N$, whenever  $\mathbb{P}_{F_0,T,F_1}(\tau \ge T+m_T)>0$, we can bound  the conditional pre-change length of $\mathcal{C}_{low}^\alpha$ defined in \eqref{ci-lower} with $\hat{r}_t$ replaced by its oracle counterparts $r_t:=\P_{F_0,\infty}[\tau\geq t]$:
\begin{align*}
    &\mathbb{E}_{F_0,T,F_1}(|\mathcal{C}_{low}^\alpha\cap[T-1]|\mid \tau \ge T+m_T) \\
    &\le  \frac{1}{\mathbb{P}_{F_0,T,F_1}(\tau \ge T+m_T)} \left[ k_{T} - 1 + \sum_{k=k_{T}}^{T-1}  e^{
-\frac{m_k^2\delta^4}{32(C_vm_k+\frac{4}{k})}} \right],
\end{align*}
where $C_v=4 + 4(1+\log 2)^2, k_{T}=\inf\{k\in\N:m_k>8c_{\alpha,T}/\delta^2\}$ and $c_{\alpha,T}=-\log(\alpha r_{T-1})$.
\end{theorem}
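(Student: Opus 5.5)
Write $t=T-k$ for a pre-change candidate, so $k=T-t\in\{1,\dots,T-1\}$ counts the pre-change points in the segment $X_t,\dots,X_{T-1}$. The plan is to bound the conditional expectation by a union over candidates:
\[
\mathbb{E}(|\mathcal{C}_{low}^\alpha\cap[T-1]|\mid E)\le \frac{1}{\P(E)}\sum_{k=1}^{T-1}\P\big(T-k\in\mathcal{C}_{low}^\alpha,\ E\big),
\]
where $E=\{\tau\ge T+m_T\}$. For $k<k_T$ we simply bound each probability by $1$, which gives the term $k_T-1$. For $k\ge k_T$ we use the following observation. On $E$, candidate $t=T-k$ is retained only if the running product $\prod_{j=t}^{i}f_{t,j}(p^j_t)$ stays below $1/(\alpha r_t)$ for every $i\le\tau$. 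In particular this must hold at the deterministic index $i=t+k+m_k-1$, the $(k+m_k)$-th term. This index is at most $T+m_T-1\le\tau$ because $m$ is increasing and $m_k\le m_T$. Since $r_t\ge r_{T-1}$, it suffices to bound
\[
\P\Big(\sum_{j=t}^{t+k+m_k-1}\log f_{t,j}(p^j_t)\le c_{\alpha,T}\Big),
\]
which removes the stopping time entirely.

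Next we lower bound the mean of this log-sum. Under \Cref{assmp:lower-calib}, $\lambda^*_{t,j}$ maximizes $\mathbb{E}\log(1+\lambda(p-1/2))$. Using $\log(1+x)\ge x-x^2$ for $|x|\le1/2$ and plugging in $\lambda=\pm c\,\mu_j$ with $\mu_j=\mathbb{E}p^j_t-1/2$, the optimum is at least of order $\mu_j^2$. For a pre-change index $j<T$ the p-values are exactly uniform (all observations compared are pre-change and exchangeable), so $\mu_j=0$ and those terms contribute $\log f\approx 0$ in mean. For a post-change index $j=T-1+\ell$, the new observation is compared to $k$ pre-change and $\ell$ post-change points. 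By \Cref{assmp:lower-score} and independence (\Cref{assmp:indep}),
\[
|\mu_j|\approx \frac{k\delta}{k+\ell}\ \ge\ \frac{\delta}{2}\quad\text{for }\ell\le k .
\]
Hence each of roughly $m_k$ post-change steps contributes at least about $\delta^2/4$, for a total drift of order $m_k\delta^2/4$. By the definition of $k_T$ this exceeds $2c_{\alpha,T}$, so the deviation needed to fall below $c_{\alpha,T}$ is at least of order $m_k\delta^2/8$.

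For the concentration step, the summands are neither independent nor identically distributed. The conformal p-values within the segment are dependent once post-change points enter, so we would use a martingale (Freedman/Bernstein-type) inequality. The log-increments are bounded, $|\log(1+\lambda(p-1/2))|\le 1+\log 2$, so their second moments are bounded by $C_v$, which is where $C_v=4+4(1+\log2)^2$ arises. The tail then takes the form $\exp(-\text{dev}^2/(2\cdot\text{variance proxy}))$ with deviation about $m_k\delta^2/8$ and variance proxy about $C_vm_k$ plus a small correction. This gives $e^{-m_k^2\delta^4/(32(C_vm_k+4/k))}$.

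The main obstacle is the dependence. The p-values $p^j_t$ are not i.i.d. under the mixed segment, and their means depend on the realized pre/post ranks. I would handle this by conditioning on the pre-change block and working with the Doob decomposition of $\sum_j\log f_{t,j}(p^j_t)$ with respect to the natural filtration. One must then show that the compensator retains the drift up to an $O(1/k)$ error, which is the source of the $4/k$ term, and that the pre-change terms contribute non-negative conditional drift. Summing the resulting bounds over $k$ from $k_T$ to $T-1$ completes the argument.
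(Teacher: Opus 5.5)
Your reduction matches the paper: on $\{\tau\ge T+m_T\}$, retaining $T-k$ forces the log-product at the fixed index $T+m_k-1\le\tau$ to be at most $c_{\alpha,T}$. Your drift bound also matches: for $B_k:=\sum_{j=T}^{T+m_k-1}\log f_{T-k,j}(p^j_{T-k})$ you get $\mathbb{E}B_k\ge\delta^2m_k/4$, hence a margin of $\delta^2m_k/8$ for $k\ge k_T$. One sharpening is needed. The pre-change factors are not merely mean-zero. Since $p^j_t\sim\mathrm{Unif}(0,1)$ for $t\le j<T$, Jensen forces $\lambda^*_{t,j}=0$, so these factors vanish identically. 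You need this, because otherwise they would add order $k$ to any variance proxy.

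The genuine gap is the concentration step. You condition on the pre-change block and assert that the Doob compensator of $\sum_j\log f_{t,j}(p^j_t)$ stays within $O(1/k)$ of the drift; this is false. Take a.s.\ distinct scores and $h(x):=\P(S(Z)>S(x))$ with $Z\sim F_1$. The conditional mean of $p^{T+\ell}_t-\tfrac12$ then contains $\frac{1}{k+\ell+1}\sum_{i=T-k}^{T-1}\bigl(h(X_i)-\tfrac12\bigr)$. Its $O(1/\sqrt{k})$ fluctuation is shared by all $m_k$ post-change steps. So the compensator generically deviates from $\mathbb{E}B_k$ by order $m_k/\sqrt{k}$, and it also depends on earlier post-change scores. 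Meanwhile, \Cref{lem:kl} controls only the unconditional mean. Controlling this random compensator needs a separate concentration argument, which your plan does not contain. Your constants are also not actually derived: $|\log f_{t,j}|\le\log 2$, and $C_v$ is not a second-moment bound. Moreover, a tail $\exp(-u^2/(2V))$ with $u=m_k\delta^2/8$ and $V=C_vm_k+4/k$ gives $128$, not $32$, in the denominator.

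The paper avoids martingales entirely. Because the oracle $\lambda^*_{t,j}$ is deterministic and scores are a.s.\ distinct, $B_k$ is a fixed function of the independent vector $(X_{T-k},\dots,X_{T+m_k-1},U_{t,T},\dots,U_{t,T+m_k-1})$. So McDiarmid's inequality $\P(B_k-\mathbb{E}B_k\le -u)\le e^{-2u^2/V}$ applies directly around the unconditional mean (\Cref{lem:bound-mcd}). The key is to compute bounded differences from the Lipschitz constant $L=2$ and sup bound $M=\log 2$ of $\log f$, not from its range alone:
\begin{itemize}
\item A pre-change point moves each post-change p-value by at most $1/(i-t+1)\le 1/k$. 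Its difference is therefore at most $Lm_k/k$, and these $k$ coordinates contribute $m_k^2L^2/k\le L^2m_k$. This is where the pre-change fluctuation you overlooked is paid for.
\item A post-change point has difference at most $2M+Lm_k/k$.
\item A randomizer $U_{t,i}$ has difference at most $L/(i-t+1)$. These squared differences sum to at most $L^2m_k/k^2\le 4/k$, which is the actual source of the $4/k$ term.
\end{itemize}
With $m_k\le k$ this gives $V\le\bigl(L^2+(2M+L)^2\bigr)m_k+L^2/k=C_vm_k+4/k$, and $2(m_k\delta^2/8)^2/V$ is exactly the stated exponent.
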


The core proof idea of the above theorem is the following. To test whether a candidate point $t=T-k$ belongs to the confidence set, the algorithm evaluates the conformal martingale over a small post-change window of length $m_k$. Because this window contains true post-change data, the statistic accumulates a strictly positive, expected drift, i.e., $$\mathbb{E}[\log f_{T-k,T+l}(p^{T+l}_{T-k})] \ge \left( \frac{k\delta}{k+m_k}\right)^2,$$ for all $l=0,\cdots,m_k$.  Using boundedness and Lipschitz continuity of the logarithmic calibrator, we apply McDiarmid's inequality, proving that the probability of the test statistic failing to cross the required threshold decays exponentially with the window size.

Note that \Cref{thm:lower-asymp} follows from the above theorem by choosing $m_k=O(\log k)$. We rigorously establish all the proofs in \Cref{sec:app-proofs} in the Appendix Materials.


\subsection{Bound on post-change length for upper confidence set}
Similarly, one can derive a bound on the post-change length of the upper confidence set as well. The following assumption is analogous to \Cref{assmp:lower-calib} in the previous section.

\begin{assumption}
\label{assmp:upper-calib}
    For each $t$ and $j\leq t$, the calibrator used is $g_{t,j}(p)=1 + \lambda_{t,j}^*(p - 0.5)$ where $\lambda_{t,j}^* \in \arg\max_{\lambda \in [-1, 1]} \mathbb{E}\left[\log(1 + \lambda(q^{j}_{t} - 0.5))\right]$.
\end{assumption}

\begin{assumption}
\label{assmp:detector}
The change detector is such that $\tau$ under a change is stochastically smaller than $\tau$ under no change, i.e., $\P_{F_0,T,F_1}[\tau\geq t]\leq \P_{F_0,\infty}[\tau\geq t]$
for all $t,T\in\N$. 
\end{assumption}
We can expect all reasonable detection algorithms to satisfy the above, as they should stop later when a change is absent than when it is present. A similar assumption is also present in \cite{saha2025post}.

We begin with the asymptotic sharpness result, which is easier to interpret and follows directly from the finite-sample bound stated subsequently.

\begin{theorem}
\label{thm:upper-asymp}
 Fix any $s>\max\{\frac{8}{\delta^2},\frac{C_0}{\delta^4}\}$, where $C_0=160 + 128(1+\log 2)^2$. Consider a sequence of problems where the true changepoint $T$ increases to infinity, and the change detector also changes with $T$ in such a way that
$$R:=\liminf_{T\to\infty}\mathbb{P}_{F_0,T,F_1}(T\le\tau \le T+e^{T/s})>0.$$ Then, under \Cref{assmp:indep,assmp:lower-score,assmp:upper-calib,assmp:detector}, for the upper confidence set $\mathcal{C}_{\mathrm{up}}^\beta$ defined in \eqref{ci-upper}, with $\hat r_t$ replaced by its oracle counterpart $r_t:=\P_{F_0,\infty}[\tau\geq t]$, we have
$$\mathbb{E}_{F_0,T,F_1}(|\mathcal{C}_{up}^\beta\cap\{T,\cdots,\tau\}|\mid T\le\tau \le T+e^{T/s})
   = O(1). $$

\end{theorem}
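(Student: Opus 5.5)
The plan is to mirror the route from \Cref{thm:size-lower} to \Cref{thm:lower-asymp}. I would first prove a finite-sample bound for the post-change part of $\mathcal{C}_{up}^\beta$ and then specialize it.

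\textbf{Step 1: reduce to per-candidate bounds.} Write $t=T+k$ with $k\ge 0$. Then
\[
\mathbb{E}\big(|\mathcal{C}_{up}^\beta\cap\{T,\dots,\tau\}| \,\big|\, E\big)\le \frac{1}{\P(E)}\sum_{k\ge 0}\P\big(T+k\in\mathcal{C}_{up}^\beta,\ \tau\ge T+k,\ E\big),
\]
where $E=\{T\le \tau\le T+e^{T/s}\}$. On $E$ only $k\le e^{T/s}$ contribute. For candidate $t=T+k$, the backward statistic at $t-1=T+k-1$ runs over $j=T+k-1,T+k-2,\dots$. The first $k$ steps (down to $j=T$) use post-change points. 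The subsequent steps $j=T-1,\dots,T-m$ use pre-change points, which are ranked against a reference set containing $k$ post-change points. So the backward p-values $q^j_{t-1}$ for $j<T$ are biased whenever $k\ge 1$.

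\textbf{Step 2: drift and concentration, as in the lower-set analysis.} For $j=T-l$, the reference set $\{X_j,\dots,X_{T+k-1}\}$ contains $l$ pre-change and $k$ post-change points. Under \Cref{assmp:indep,assmp:lower-score}, $\mathbb{E}q^j_{t-1}-\tfrac12$ has magnitude about $\delta k/(k+l)$. With the oracle calibrator of \Cref{assmp:upper-calib}, this gives a drift
\[
\mathbb{E}\log g\ge \Big(\tfrac{k\delta}{k+l}\Big)^2
\]
(up to the constant used in \Cref{thm:size-lower}). I would bound the product over the post-change stretch $j\ge T$ from below by $1$ in log-drift terms; this needs care, since the calibrator there is also oracle and so has nonnegative expected log. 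I would then use the window $j=T-1,\dots,T-m$ with $m\asymp k$, which is admissible since $m\le T-1$ when $k\lesssim T$. The bounded-differences argument (McDiarmid with the constant $C_v$) then gives
\[
\P(T+k\in\mathcal{C}_{up}^\beta)\le \exp\big(-c\,m\delta^4\big)
\]
once $m\delta^2/8$ exceeds the threshold $c_{\beta,T+k}=-\log(\beta r_{T+k})$.

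\textbf{Step 3: handle the threshold, where \Cref{assmp:detector} enters.} The threshold is $1/(\beta r_{T+k})$, and $r_{T+k}=\P_{F_0,\infty}(\tau\ge T+k)$ could be tiny. By \Cref{assmp:detector},
\[
r_{T+k}\ge \P_{F_0,T,F_1}(\tau\ge T+k),
\]
so on the event $\tau\ge T+k$ I would trade $1/r_{T+k}$ against the probability of that event. Concretely, I would use $\P(\tau\ge T+k)/r_{T+k}\le 1$ together with the growth constraint: the window needs length $m$ with $m\delta^2/8>\log(1/\beta)+\log(1/r_{T+k})$. I expect this step to be the main obstacle, since $\log(1/r_{T+k})$ can be large for large $k$. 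Here the cap $\tau\le T+e^{T/s}$ and the finite pre-change length $T-1$ must combine. The available pre-change window is at most $T-1$, so with $m\approx T$ the exponent $T\delta^4/C_0$ beats the factor $e^{T/s}$ counting the candidates, because $s>C_0/\delta^4$. Similarly, $s>8/\delta^2$ ensures $T\delta^2/8$ exceeds $\log(e^{T/s})=T/s$, which controls thresholds involving $\log(1/r)$ at scale $T/s$.

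\textbf{Step 4: sum the per-candidate bounds.} Split the $k$-sum into small $k\le k_0$, which contributes $O(1)$ trivially, moderate $k$, using window $m=k$ and the exponentially decaying terms $e^{-ck\delta^4}$, which are summable, and $k$ up to $e^{T/s}$, using window $T-1$ so that the total is $e^{T/s}\cdot e^{-T\delta^4/C_0}\to 0$. Dividing by $\P(E)\ge R/2$ for large $T$ gives the $O(1)$ bound.
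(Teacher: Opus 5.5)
Your route is correct in outline but differs from the paper's.

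\textbf{The paper's route.} The paper plugs $m_k=\lfloor e^{k/s}\rfloor$ into the finite-sample bound of \Cref{thm:size-upper}. Candidate $T+k$ is then tested only on a pre-change window of length $u_k=\inf\{n:m_n\ge k\}\approx s\log k$; the cap $\tau\le T+e^{T/s}$ is exactly what keeps $u_k\le T$. As a result, the per-candidate failure probabilities decay only polynomially, like $k^{-s\delta^4/C_0}$ and $k^{-cs}$ with $c\in(1/s,\delta^2/8)$. The two conditions on $s$ are precisely what make these summable. The threshold is handled via $r_{T+k}\ge r_\tau$ on $\{\tau\ge T+k\}$ together with the super-uniformity of $r_\tau$ (\Cref{lem:rtau}), splitting on $r_\tau\ge e^{-cu_k}$.

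\textbf{Your route.} You use the largest admissible window $\min(k,T-1)$. This respects window $\le k$, which both the drift bound and the $O(\text{window})$ variance proxy require. Candidates with $k<T$ then fail with exponentially small probability and need no condition on $s$. The conditions on $s$ enter only to beat the $e^{T/s}$ far candidates using the full window $T-1$; there the McDiarmid exponent is about $\delta^4T/(32C_v)$, and $32C_v<C_0$. The paper's route yields one finite-sample statement for arbitrary growth sequences. Yours gives sharper per-candidate bounds, at the cost of regime bookkeeping tied to this particular cap.

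\textbf{Two points need tightening.}

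First, the factors with $j\ge T$ are identically $1$, not merely of nonnegative expected log. Here $q^j_{T+k-1}$ ranks an $F_1$ point among $F_1$ points, so it is exactly uniform and $\lambda^*=0$. You need this because every block in the max ends at $t-1$ and therefore contains that stretch.

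Second, Step 3 has to be an explicit case split rather than a ratio. Since $B$ and $\{\tau\ge T+k\}$ are dependent, ``$\P(\tau\ge T+k)/r_{T+k}\le 1$'' cannot be used multiplicatively. Use
\[
\P\bigl(T+k\in\mathcal{C}_{up}^\beta,\ \tau\ge T+k\bigr)\le \min\Bigl\{r_{T+k},\ \P\bigl(B\le \log(1/\beta)+\log(1/r_{T+k})\bigr)\Bigr\},
\]
with a regime-dependent cutoff on the deterministic number $r_{T+k}$.
\begin{itemize}
\item For $k<T$, a cutoff at $e^{-T/s}$ fails. The window-$k$ drift $\delta^2k/4$ cannot beat a threshold of order $T/s$ when $k\ll T$, and nothing prevents $\log(1/r_{T+k})\gg k$. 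Cut instead at $e^{-ck}$ with $c<\delta^2/8$. Then each term is either at most $e^{-ck}$ or has margin exceeding $\delta^2k/8-\log(1/\beta)$.
\item For $k\ge T$, cut at $e^{-T/s}$. The cases below the cutoff contribute at most $e^{T/s}\cdot e^{-T/s}=1$ in total. Otherwise $s>8/\delta^2$ leaves a margin of order $\delta^2T/8$.
\end{itemize}
With this made explicit, your Step 4 goes through.
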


\begin{remark}
The condition $\liminf_{T\to\infty}\mathbb{P}_{F_0,T,F_1}(T\le\tau \le T+e^{T/s})>0$
    means that the detector has a non-vanishing probability of stopping within an exponential window after the changepoint. If the delay were to exceed an exponential scale with high probability, the detector would be practically ineffective. As such, this is a mild and natural regularity condition that is expected to hold for any reasonable sequential change-point detection procedure. We required this assumption to ensure that there is a sufficient number of pre-change samples.
\end{remark}
We now state the corresponding finite-sample bound.
\begin{theorem}
\label{thm:size-upper}
Under \Cref{assmp:indep,assmp:lower-score,assmp:upper-calib,assmp:detector}, for any $c\in(0,\delta^2/8)$ and any increasing sequence of integers, $\{m_k\}_{k\geq 1}$ such that $m_k\ge k,\forall k\in\N$, whenever $R_{T}:=\mathbb{P}_{F_0,T,F_1}(T\le\tau \le T+m_T)>0$, we have the following bound on the conditional post-change length of $\mathcal{C}_{up}^\beta$ defined in \eqref{ci-upper} with $\hat{r}_t$ replaced by its oracle counterpart $r_t:=\P_{F_0,\infty}[\tau\geq t]$:
\begin{align*}
    &\mathbb{E}_{F_0,T,F_1}(|\mathcal{C}_{up}^\beta\cap\{T,\cdots,\tau\}|\mid T\le\tau \le T+m_T)\\
    &\le \frac{1}{R_{T}} \left[ k_1 +  \sum_{k=k_1}^{m_T} e^{ - \frac{\delta^4u_k^2}{32(C_vu_k + 4/k)}} + \sum_{k=k_1}^{m_T} e^{-cu_k}\right],
\end{align*}
where $u_k=\inf\{n\in\N: m_n\geq k\}$, $C_v=4+4(1+\log2)^2,$ and $k_1 = \inf\left\{k \in\N :  u_k >\frac{\log(1/\beta)}{\delta^2/8-c}\right\}$.
\end{theorem}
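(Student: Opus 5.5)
We need a plan for Theorem \ref{thm:size-upper}. Write $\tau=T+k$ for $k\in\{0,\dots,m_T\}$ and work on the event $E=\{T\le\tau\le T+m_T\}$. The post-change length decomposes as
\[
|\mathcal{C}_{up}^\beta\cap\{T,\dots,\tau\}|=\sum_{k\ge 0}\mathds{1}(T+k\in\mathcal{C}_{up}^\beta,\ \tau\ge T+k,\ E).
\]
Dividing the expectation by $R_T$ gives the prefactor $1/R_T$. We therefore need to bound $\P(T+k\in\mathcal{C}_{up}^\beta,\ \tau\ge T+k,\ E)$ for each $k$. For $k<k_1$ we use the trivial bound $1$, which yields the term $k_1$. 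For $k\ge k_1$ we must show the probability is at most the sum of the two exponentials.

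Fix a candidate $t=T+k$. The backward p-values $q^j_{t-1}$, for $j\le t-1$, are computed on a window containing the $k$ post-change points $X_T,\dots,X_{T+k-1}$ together with pre-change points further back. The membership test passes only if \emph{every} backward partial product is at most $1/(\beta r_t)$. So it suffices to exhibit one window length $n$ over which the log-product exceeds $\log(1/(\beta r_t))$ with high probability. The natural window runs from $j=t-1$ back past $T$ by about $u_k$ pre-change points. The point is that on $E$ with $\tau\ge T+k$ we have $T\ge u_k$, since $m_T\ge k$ forces $T\ge u_k$ by the definition of $u_k$ as the inverse of $m$. Hence enough pre-change data exists.

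The threshold has to be controlled. Assumption~\ref{assmp:detector} gives $r_t\ge \P_{F_0,T,F_1}(\tau\ge t)$, but what we actually need is a lower bound on $r_t$. The plan is to split the probability as
\[
\P(t\in\mathcal{C},\tau\ge t)\le \P(t\in\mathcal{C},\ \tau\ge t,\ r_t\ge e^{-cu_k})+\P(\tau\ge t)\mathds{1}(r_t<e^{-cu_k}).
\]
The second term is at most $r_t<e^{-cu_k}$ by Assumption~\ref{assmp:detector}, which produces the $e^{-cu_k}$ sum.

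On the first event the threshold satisfies $\log(1/(\beta r_t))\le \log(1/\beta)+cu_k$. Mirroring the drift computation in the proof of Theorem~\ref{thm:size-lower} (now with the roles of pre- and post-change swapped, using Assumption~\ref{assmp:lower-score}), each backward log-e-value over pre-change indices has expected drift at least of order $\delta^2/8$ relative to the window. Over $u_k$ steps the expected log-product is then at least $u_k\delta^2/8$. Since $k\ge k_1$ gives $u_k(\delta^2/8-c)>\log(1/\beta)$, the mean exceeds the threshold.

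The deviation bound comes from McDiarmid's inequality, using independence (Assumption~\ref{assmp:indep}) and the bounded, Lipschitz calibrator exactly as in Theorem~\ref{thm:size-lower}, with the same variance proxy $C_vu_k+4/k$. This yields the term $e^{-\delta^4u_k^2/(32(C_vu_k+4/k))}$.

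The main obstacle is the drift bound. The oracle $\lambda^*$ must give expected log-growth at least $(\mathbb{E}q-\tfrac12)^2$-type, and the mixture of pre- and post-change points in the backward window must be shown to yield $|\mathbb{E}q^j-\tfrac12|\gtrsim\delta$. The exact constants also have to match the statement's $\delta^2/8$ and variance expression. I would reuse the lower-set lemma verbatim and check the exact constants by symmetry.
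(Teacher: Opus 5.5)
Your skeleton matches the paper's. You split by $k=t-T$ and pay $1$ for each $k<k_1$ (including $k=0$, which gives exactly $k_1$). You keep only the backward window reaching $u_k$ points before $T$, peel off an $e^{-cu_k}$ term from the threshold, and finish with a drift bound plus McDiarmid with proxy $C_vu_k+4/k$. The one real difference is the threshold step. The paper replaces $r_{T+k}$ by $r_\tau$ on $\{\tau\ge T+k\}$ and invokes the super-uniformity of $r_\tau$ (Lemma~\ref{lem:rtau}). You instead use that the oracle $r_t$ is a deterministic number and split on whether $r_t\ge e^{-cu_k}$, bounding the bad case by $\P_{F_0,T,F_1}(\tau\ge t)\le r_t<e^{-cu_k}$ via Assumption~\ref{assmp:detector}. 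That is valid, yields the same $e^{-cu_k}$ term, and makes Lemma~\ref{lem:rtau} unnecessary.

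The gap is the drift step, which you leave open and state with a constant that is too weak.
\begin{itemize}
\item \emph{The margin is too small.} With only $\mu_k\ge\delta^2u_k/8$, the condition $k\ge k_1$ gives merely $\mu_k-cu_k-\log(1/\beta)>0$. That margin can be arbitrarily small, so McDiarmid cannot deliver $e^{-\delta^4u_k^2/(32(C_vu_k+4/k))}$. The definition of $k_1$ ensures the threshold $cu_k+\log(1/\beta)$ consumes less than $\delta^2u_k/8$. You therefore need $\mu_k\ge\delta^2u_k/4$, so that a margin of at least $\delta^2u_k/8$ remains, and $2(\delta^2u_k/8)^2=\delta^4u_k^2/32$.
\item \emph{Where the missing factor comes from.} This is exactly where the hypothesis $m_k\ge k$, which you never use, enters: it gives $u_k\le k$. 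For $j=T-1-l$ with $0\le l\le u_k-1$, the window $\{j,\dots,T+k-1\}$ contains $X_j$, $l$ further pre-change points and $k$ post-change points. Hence $|\mathbb{E}q^j_{T+k-1}-\tfrac12|=\frac{k\delta}{k+l+1}\ge\frac{k\delta}{k+u_k}\ge\frac{\delta}{2}$. The argument of Lemma~\ref{lem:kl} with $\epsilon=k\delta/(k+u_k)$ then gives per-step drift at least $\epsilon^2\ge\delta^2/4$.
\item \emph{Dropping the post-change factors needs justification.} Your window also contains the $k$ factors $j=T,\dots,T+k-1$. Those $q^j_{T+k-1}$ are built from i.i.d.\ $F_1$ data, hence uniform, so by Jensen the oracle $\lambda^*_{T+k-1,j}$ is $0$ and the factors are identically $1$. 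Without this, $B_k$ is not a sum of $u_k$ terms and the proxy $C_vu_k+4/k$ does not apply. The post-change observations still appear as McDiarmid coordinates, each with sensitivity at most $2u_k/k$; $u_k\le k$ is also what keeps their contribution within $C_vu_k$.
\end{itemize}

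Finally, your observation that $k\le m_T$ forces $u_k\le T$ is something the paper skips. However, the window $X_{T-u_k},\dots,X_{T-1}$ really needs $u_k\le T-1$, which fails when $m_{T-1}<k\le m_T$. The paper's proof has the same off-by-one.
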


Note that \Cref{thm:upper-asymp} follows from the above theorem by choosing $m_k=\exp{(O(k))}$. We rigorously establish all the proofs in the appendix.

\subsection{Bound on the size of two-sided confidence set}
Finally, we combine the results of pre- and post-change parts to bound the two-sided length.

\begin{theorem}
\label{thm:twoside-asymp}
Under \Cref{assmp:indep,assmp:lower-score,assmp:lower-calib,assmp:upper-calib,assmp:detector}, fix any $s\in (0,\infty)$ and $s^\prime>\max\{\frac{8}{\delta^2},\frac{C_0}{\delta^4}\}$, where $C_0=160 + 128(1+\log 2)^2$. Consider a sequence of problems where the true changepoint $T$ increases to infinity, and the change detector also changes with $T$ in such a way that
$$\liminf_{T\to\infty}\mathbb{P}_{F_0,T,F_1}(T+s\log T\le\tau \le T+e^{T/s^\prime})>0.$$ Then, for the two-sided $\mathcal{C}$ in \eqref{ci-bothside-2alpha}, with $\hat r_t$ replaced by its oracle counterpart $r_t:=\P_{F_0,\infty}[\tau\geq t]$,
\begin{align*}
   &\mathbb{E}_{F_0,T,F_1}(|\mathcal{C}|\mid T+s\log T\le\tau \le T+e^{T/s^\prime}) \\
    &=\begin{cases}
    O(1), & s\delta^4>C_0\\
    O(\log T), & s\delta^4=C_0\\
    O(T^{1- s\delta^{4}/C_0}), &  s\delta^4<C_0.
\end{cases}
\end{align*}
Consequently, in all these three cases, $\mathbb{E}_{F_0,T,F_1}\left(\frac{|\mathcal{C}|}{T}\mid T+s\log T\le\tau \le T+e^{T/s^\prime}\right) =o(1)$.
\end{theorem}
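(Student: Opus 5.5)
The plan is to split the two-sided set according to the position of each candidate relative to $T$:
\[
|\mathcal C|\le |\mathcal C_{low}^\alpha\cap[T-1]| + |\mathcal C_{up}^\alpha\cap\{T,\dots,\tau\}|.
\]
This holds because $\mathcal C=\mathcal C_{low}^\alpha\cap\mathcal C_{up}^\alpha$ from \eqref{ci-bothside-2alpha} is contained in each one-sided set. Write $E$ for the conditioning event $\{T+s\log T\le \tau\le T+e^{T/s'}\}$. Taking conditional expectations given $E$, I will bound the two terms separately, reusing the finite-sample bounds of \Cref{thm:size-lower} and \Cref{thm:size-upper} rather than the asymptotic corollaries. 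The corollaries condition on different events, so they cannot be combined directly.

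For the pre-change term, I repeat the argument of \Cref{thm:size-lower} with $m_k=\lceil s\log k\rceil$ (capped at $k$). The proof there controls $\mathbb E[|\mathcal C_{low}^\alpha\cap[T-1]|\,\mathbb 1(\tau\ge T+m_T)]$ by summing over candidates $T-k$ the probability that the conformal martingale fails to cross $1/(\alpha r_{T-k})$ within the post-change window of length $m_k$. Since $E\subseteq\{\tau\ge T+m_T\}$, the unconditional quantity bounds $\mathbb E[|\mathcal C_{low}^\alpha\cap[T-1]|\,\mathbb 1_E]$. Dividing by $\P(E)$, whose liminf is positive by hypothesis, gives the same three-regime bound as in \Cref{thm:lower-asymp}, with $\P(E)$ in place of $L$.

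The one point to check is that $k_T$ stays $O(1)$, which requires $c_{\alpha,T}=-\log(\alpha r_{T-1})$ to be bounded. Under \Cref{assmp:detector}, $r_{T-1}\ge \P_{F_0,T,F_1}(\tau\ge T-1)\ge \P(E)$, and $\P(E)$ is bounded below for large $T$. The same bound also gives the exponent computation $e^{-m_k\delta^4/(32C_v)(1+o(1))}=k^{-s\delta^4/C_0(1+o(1))}$. Summing over $k\le T$ then produces the $O(1)$, $O(\log T)$, and $O(T^{1-s\delta^4/C_0})$ cases. Some care is needed here to make sure $C_0$ absorbs the $4/k$ term; I expect to reuse the exact computation from \Cref{thm:lower-asymp}.

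For the post-change term, I use the argument of \Cref{thm:size-upper} with $m_k=\lceil e^{k/s'}\rceil$, so that $u_k\approx s'\log k$. That proof bounds $\mathbb E[|\mathcal C_{up}^\beta\cap\{T,\dots,\tau\}|\,\mathbb 1(T\le\tau\le T+m_T)]$. Since $E\subseteq\{T\le \tau\le T+m_T\}$, the indicator can again be restricted to $E$ and the result divided by $\P(E)$. The condition $s'>\max\{8/\delta^2,C_0/\delta^4\}$ lets me choose $c<\delta^2/8$ with $cs'>1$. With that choice both sums $\sum_k k^{-cs'}$ and $\sum_k k^{-s'\delta^4/C_0(1+o(1))}$ converge, and $k_1=O(1)$, so this term is $O(1)$ as in \Cref{thm:upper-asymp}.

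Adding the two bounds, the $O(1)$ post-change contribution is dominated in every regime, which gives the stated rates. Dividing by $T$ yields $o(1)$, since $1-s\delta^4/C_0<1$. I expect the main obstacle to be the transfer of conditioning. I need to confirm that the proofs of \Cref{thm:size-lower,thm:size-upper} actually bound the expectation on the intersected event, rather than using the exact form of their conditioning event in some essential way. If they do not, I would redo the per-candidate tail bounds, using $\P(A\cap E)\le \P(A\cap\{\text{window contained}\})$ for each miscoverage event $A$.
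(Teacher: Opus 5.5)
Your proposal is correct and takes the same route as the paper, whose proof is only the one-line instruction to combine the pre- and post-change analyses. You bound $|\mathcal C|$ by $|\mathcal C_{low}^\alpha\cap[T-1]|+|\mathcal C_{up}^\alpha\cap\{T,\dots,\tau\}|$ and rerun \Cref{thm:size-lower,thm:size-upper}, whose numerators are unconditional bounds that survive restricting to the smaller event $E$ before dividing by $\P(E)$. Two small things to tighten: use the exact bound $\frac{m_k^2\delta^4}{32(C_vm_k+4/k)}\ge\frac{m_k\delta^4}{C_0}$ for $4/k\le m_k$ (since $C_0=32(C_v+1)$) rather than a $(1+o(1))$ exponent, so the boundary case $s\delta^4=C_0$ really gives $O(\log T)$; and set $m_k=k$ for small $k$ in the post-change part so that $m_k\ge k$ holds, as the paper does.
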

\begin{remark}
The condition $\liminf_{T\to\infty}\mathbb{P}_{F_0,T,F_1}(T+s\log T\le\tau \le T+e^{T/s^\prime})>0$
    means that the detector has a non-vanishing probability of stopping within a window after the changepoint. This is natural in a doubly asymptotic regime where the detector threshold or ARL is allowed to scale with $T$.
\end{remark}

The proof follows by combining the arguments developed in the pre-change and post-change analyses and applying them to the respective segments of the confidence set.

\section{Learning the calibrator}
\label{sec:theory-real}
We now replace the oracle calibrator assumption in \Cref{assmp:lower-calib} with the following practical assumption. We also propose an algorithm for learning the calibrator and verify that it satisfies the assumption stated below.
\begin{assumption}
\label{assmp:lower-calib-real}
    For each $t$ and $j\geq t$, the calibrator used is $f_{t,j}(p)=1 + \lambda_{t,j}(p - 0.5)$ where $\{\lambda_{t,j}\}_{j\geq t}$ is learned using an online method such that the adaptive regret is logarithmic, i.e., there exists a constant $C^\prime>0$ such that for any $n\geq i\geq t$,
    \begin{align*}
        &\sum_{j=i}^n \log(1 + \lambda_{t,j}(p_{t}^j - 0.5)) \\
        &\geq \sup_{\lambda\in[-1,1]}\sum_{j=i}^n \log(1 + \lambda(p_{t}^j - 0.5))-C^\prime\log (n-t+1).
    \end{align*}
\end{assumption}
Online convex optimization algorithms, including online Newton step (ONS), exponentially weighted online optimization, and Cover's universal portfolio, together with the meta-algorithm \emph{Follow-The-Leading-History} (FLH) introduced by \cite{hazan2007adaptive}, satisfy this assumption. To be concrete, let us verify this assumption for ONS \citep{hazan2006logarithmic}, coupled with FLH \citep {hazan2007adaptive}, which is the algorithm used in our implementation as well. See \Cref{alg:flh-ons-calibrator} for an overview of the ONS+FLH procedure specialized to our setting. At a high level, FLH maintains a collection of ONS experts, where a new expert is initialized at every time step. Each expert independently runs ONS on the sequence of losses observed since its initialization, while FLH aggregates their predictions using exponentially weighted averaging. This design allows the procedure to adapt to changing environments and achieve logarithmic \emph{adaptive} regret by effectively competing with the best expert on every time interval. In our setting, the ONS experts are trained using the loss functions:
\[
\ell_{t,j}(\lambda)=-\log\!\left(1+\lambda(p_t^j-0.5)\right),\quad \lambda\in[-1,1],
\]
and a parameter $\eta$. A direct calculation gives
\[
\ell_{t,j}'(\lambda)=-\frac{p_t^j-0.5}{1+\lambda(p_t^j-0.5)},\quad\ell_{t,j}''(\lambda)
=\left(\ell_{t,j}'(\lambda)\right)^2.
\]
Hence, each $\ell_{t,j}$ is $1$-exp-concave. Moreover, since
\[
p_t^j-0.5\in[-1/2,1/2]\qquad\text{and}\qquad\lambda\in[-1,1],
\]
we have $|\ell_{t,j}'(\lambda)|=\left|\frac{p_t^j-0.5}{1+\lambda(p_t^j-0.5)}\right|\le 1.$
Therefore, the gradients over the set $[-1,1]$ are uniformly bounded by $1$.

Applying the static regret bound for ONS from Theorem 4 of \cite{hazan2006logarithmic}, with the gradient bound $G=1$, domain diameter
$D = \sup_{\lambda,\lambda'\in[-1,1]} |\lambda-\lambda'| = 2,$
the exp-concavity parameter $\gamma$ and the parameter of the algorithm, $\eta=\frac{1}{2}\min\{\gamma,1/(4GD)\}=1/16$
yields the static regret bound for any $n\geq t$,
\begin{align*}
    \mathcal{R}_n^t&:=\sum_{j=t}^n \ell_{t,j}(\lambda_{t,j})-\inf_{\lambda\in[-1,1]}\sum_{j=t}^n \ell_{t,j}(\lambda)\\
    &\le3\left(\frac{1}{\gamma}+4GD\right)\log(n-t+1)=27\log(n-t+1).
\end{align*}
Now, using Theorem 3.1 of \cite{hazan2007adaptive}  with the exp-concavity parameter $\gamma=1$, we get  the adaptive regret bound for any $n\geq i\geq t$,
\begin{align*}
    &\sum_{j=i}^n \ell_{t,j}(\lambda_{t,j})-\inf_{\lambda\in[-1,1]}\sum_{j=i}^n \ell_{t,j}(\lambda)\\
    &\le \sup_{[r,s]\subseteq[t,n]}\bigg|\sum_{j=r}^s \ell_{t,j}(\lambda_{t,j})-\inf_{\lambda\in[-1,1]}\sum_{j=r}^s \ell_{t,j}(\lambda)\bigg|\\
    &\le \mathcal{R}_n^t+O(\log (n-t+1)).
\end{align*}
Consequently, from the above two equations,  we conclude that \Cref{assmp:lower-calib-real} holds in our setting for \Cref{alg:flh-ons-calibrator} with $\eta=\frac{1}{16}$.

\begin{algorithm}[htbp]
\caption{Adaptively learning $\{\lambda_{t,j}\}_{j\geq t}$ (for any fixed $t\in\N$) via Follow-The-Leading-History (FLH) with Online Newton Step (ONS) experts}
\label{alg:flh-ons-calibrator}
\KwIn{ONS parameter $\eta>0$.}

\textbf{Initialize FLH:} $v_t^{(t)} \gets 1$\;
\textbf{Initialize first ONS expert:} $\lambda_t^{(t)} \gets 0$, $a_{t-1}^{(t)} \gets 0$, $b_{t-1}^{(t)} \gets 0$\;

\For{$j=t,t+1,\dots$}{
    Play the aggregated prediction: $\lambda_{t,j} \gets \sum_{i=t}^j v_j^{(i)} \lambda_j^{(i)}$\;
    Observe $p_t^j$ and define the loss function for the current step:
    \[
    \ell_j(\lambda) = -\log\!\left(1+\lambda(p_t^j-0.5)\right)
    \]
    Set $\hat v_{j+1}^{(j+1)} \gets 0$\;
    \For{$i = t,\dots,j$}{
        Compute intermediate unnormalized weights: 
        $ \hat v_{j+1}^{(i)} \gets \frac{v_j^{(i)} e^{- \ell_j(\lambda_j^{(i)})}}{\sum_{m=t}^j v_j^{(m)} e^{- \ell_j(\lambda_j^{(m)})}}$\;
    }
    Set weight for the new expert: $v_{j+1}^{(j+1)} \gets \frac{1}{k+1}$, where $k = j - t + 1$\;
    \For{$i = t,\dots,j$}{
        Scale down existing active experts: $v_{j+1}^{(i)} \gets \left(1-\frac{1}{k+1}\right) \hat v_{j+1}^{(i)}$\;
    }
    \For{$i = t,\dots,j$}{
        Compute gradient for expert $i$:
        \[
        g_j^{(i)} = \nabla \ell_j(\lambda_j^{(i)}) = -\frac{p_t^j-0.5}{1+\lambda_j^{(i)}(p_t^j-0.5)}
        \]
        Update ONS accumulators for expert $i$:
        \[
        a_j^{(i)} \gets a_{j-1}^{(i)} + (g_j^{(i)})^2,\]
        \[
        b_j^{(i)} \gets b_{j-1}^{(i)} + (g_j^{(i)})^2 \lambda_j^{(i)} - \frac{g_j^{(i)}}{\eta}
        \]
        Compute expert $i$'s prediction for step $j+1$:
        \[
        \lambda_{j+1}^{(i)} \gets \begin{cases}
            \min\left\{1,\max\left\{-1, \frac{b_j^{(i)}}{a_j^{(i)}}\right\}\right\},\hspace{-0.25cm}&\text{if } a_j^{(i)}\neq0\\
            0, &\text{otherwise.}\\
        \end{cases}
        \]
    }
    Start a new, $(j+1)$-th instance of ONS, which predicts: $\lambda_{j+1}^{(j+1)} \gets 0$\;
    Initialize its ONS variables: $a_j^{(j+1)} \gets 0, ~ b_j^{(j+1)} \gets 0$\;
}
\end{algorithm}

Now, we have the following result, which is similar to \Cref{thm:lower-asymp}, but with some adjustment due to the presence of a logarithmic regret incurred by the online learning procedure.

\begin{theorem}
\label{thm:lower-asymp-real}
 Fix any $s>8C^\prime/\delta^2$ and let $C_0=5+ 4(1+\log 2)^2$. Consider a sequence of problems where the changepoint $T \uparrow \infty$, and the change detector also changes with $T$ such that
$$L:=\liminf_{T\to\infty}\mathbb{P}_{F_0,T,F_1}(\tau \ge T+s\log T)>0.$$ Then, under \Cref{assmp:indep,assmp:lower-score,assmp:lower-calib-real}, for the lower confidence set $\mathcal{C}_{\mathrm{low}}^\alpha$ defined in \eqref{ci-lower}, with $\hat r_t$ replaced by its oracle counterpart $r_t:=\P_{F_0,\infty}[\tau\geq t]$, we have $\mathbb{E}_{F_0,T,F_1}(|\mathcal{C}_{low}^\alpha\cap[T-1]\mid \tau \ge T+s\log T) $
$$=\begin{cases}
    O(1), & 2(\delta^2s/8-C^\prime)^2>C_0s\\
    O(\log T), & 2(\delta^2s/8-C^\prime)^2=C_0s\\
    O\left(T^{1- \frac{2(\delta^2s/8-C^\prime)^2}{C_0s}}\right), &  2(\delta^2s/8-C^\prime)^2<C_0s.
\end{cases}$$
Consequently, in all these three cases, $\mathbb{E}_{F_0,T,F_1}\left(\frac{|\mathcal{C}_{low}^\alpha\cap[T-1]}{T-1}\mid \tau \ge T+s\log T\right) =o(1)$.
\end{theorem}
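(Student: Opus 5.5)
The plan is to follow the proof of \Cref{thm:size-lower} and \Cref{thm:lower-asymp}: first prove a finite-sample bound with the oracle calibrator replaced by the learned one, then specialize the window lengths to $m_k=\lfloor s\log k\rfloor$. The only new feature is the regret term $C^\prime\log(\cdot)$ of \Cref{assmp:lower-calib-real}. It effectively raises the rejection threshold, and it is why the condition $s>8C^\prime/\delta^2$ is needed.

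\textbf{Step 1 (reduction).} Write
\[
\mathbb{E}(|\mathcal{C}_{low}^\alpha\cap[T-1]|\mid \tau\ge T+s\log T)=\sum_{k=1}^{T-1}\P(T-k\in\mathcal{C}_{low}^\alpha\mid\cdot),
\]
and bound each conditional probability by $\P(\cdot\,\cap\{\tau\ge T+s\log T\})/\P(\tau\ge T+s\log T)$. The denominator is at least $L/2$ for large $T$, so it only contributes a constant.

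For candidate $t=T-k$, on the event $\tau\ge T+m_k$ with $m_k\le s\log k\le s\log T$, membership in the set forces
\[
\sum_{j=T}^{T+m_k}\log f_{t,j}(p_t^j)\le \log\tfrac{1}{\alpha r_t}.
\]
To see this, take the running maximum at $i=T+m_k$ and drop the pre-change factors. Dropping them is legitimate because the adaptive-regret guarantee in \Cref{assmp:lower-calib-real} applies on any subinterval $[i,n]$, in particular $[T,T+m_k]$.

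I also need to control $r_t$ uniformly. Since $\tau$ is a stopping time and the data before $T$ are pre-change, $r_{T-k}\ge r_{T-1}=\P_{F_0,T,F_1}(\tau\ge T-1)\ge \P_{F_0,T,F_1}(\tau\ge T+s\log T)\ge L/2$. Hence the threshold $c_\alpha:=\log(2/(\alpha L))$ is a constant.

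\textbf{Step 2 (regret plus drift).} By \Cref{assmp:lower-calib-real}, the learned sum is at least $\sum_{j=T}^{T+m_k}\log(1+\lambda_0(p_t^j-1/2))-C^\prime\log(k+m_k+1)$ for any fixed $\lambda_0\in[-1,1]$. I will choose $\lambda_0=\pm$ const according to the sign of $\P(S(Z)>S(Y))-1/2$.

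Under \Cref{assmp:indep,assmp:lower-score}, the expectation of each term is bounded below as in the drift computation behind \Cref{thm:size-lower}, giving a mean of at least $\delta^2 m_k(k/(k+m_k))^2/8$. Once $k$ is large, $k/(k+s\log k)\to1$, so the mean is $\ge(\delta^2/8-o(1))\,s\log k$. The bad event therefore requires the centered sum to fall below $-(\delta^2 s/8-C^\prime-o(1))\log k+c_\alpha$, and this deviation is positive because $s>8C^\prime/\delta^2$.

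\textbf{Step 3 (concentration and summation).} Apply McDiarmid's inequality, viewing the sum as a function of the independent $X_t,\dots,X_{T+m_k}$ and the uniforms. Each $\log(1+\lambda_0(p-1/2))$ is bounded and Lipschitz in $p$, and changing one observation moves each conformal p-value by $O(1/(j-t+1))$. This yields a variance proxy of order $C_v m_k+O(1/k)$, exactly as in \Cref{thm:size-lower}, and with the constants $C_0=5+4(1+\log2)^2$ it gives
\[
\P(T-k\in\mathcal{C}_{low}^\alpha,\ \tau\ge T+s\log T)\le \exp\!\Big(-\tfrac{2(\delta^2 s/8-C^\prime)^2\log^2 k}{C_0 s\log k}(1+o(1))\Big)=k^{-\gamma+o(1)},
\]
where $\gamma=2(\delta^2s/8-C^\prime)^2/(C_0s)$.

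Summing $k^{-\gamma}$ over $k\le T-1$, after counting the first $O(1)$ values of $k$ trivially, gives $O(1)$, $O(\log T)$, or $O(T^{1-\gamma})$ according as $\gamma>1$, $\gamma=1$, or $\gamma<1$. Dividing by $T-1$ then yields the $o(1)$ relative statement.

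\textbf{Main obstacle.} The delicate step is Step 3 combined with the $o(1)$ absorptions. The drift factor $(k/(k+m_k))^2$ and the $\log(k+m_k+1)$ regret term both must be absorbed into the exponent without changing $\gamma$, and this needs the $k$-dependent window $m_k=s\log k$ rather than a common $s\log T$. I also need to make sure the McDiarmid bounded-difference constants for the rank-based p-values match those already used in \Cref{thm:size-lower}. I would first try to reuse that lemma verbatim with the fixed comparator $\lambda_0$ in place of $\lambda^*$.
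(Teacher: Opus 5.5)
Your architecture is the paper's: pass to a fixed comparator via the regret bound, lower-bound the drift with \Cref{lem:kl}, apply \Cref{lem:bound-mcd} on windows $m_k\approx s\log k$, and sum $k^{-\gamma}$. However, the reduction in Step 1 fails as written. The running product in \eqref{ci-lower} always starts at $j=t$; the maximum is only over the end index. So membership of $t=T-k$ gives only $\sum_{j=t}^{T+m_k}\log f_{t,j}(p_t^j)\le\log\frac{1}{\alpha r_t}$, and the pre-change block $P_k:=\sum_{j=t}^{T-1}\log f_{t,j}(p_t^j)$ cannot be discarded for free. In the oracle proof $P_k=0$ because $\lambda^*_{t,j}=0$ for $j<T$. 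With a learned $\lambda_{t,j}$ (e.g.\ ONS drifting away from $0$ on null noise), $P_k$ is typically negative. Membership then yields only $\sum_{j=T}^{T+m_k}\log f_{t,j}(p_t^j)\le\log\frac{1}{\alpha r_t}-P_k$. The regret guarantee on $[T,T+m_k]$ that you cite lower-bounds the post-change block, not $P_k$. The missing step is to apply \Cref{assmp:lower-calib-real} on $[t,T-1]$ with comparator $\lambda=0$, which gives $P_k\ge -C^\prime\log k$; this is the paper's $B_k^{(0)}$ bound. The threshold therefore rises by another $C^\prime\log k$, and the total regret overhead is $2C^\prime\log k+C^\prime\log 2$, not $C^\prime\log k$.

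This is not cosmetic. With your drift bound $(\delta^2/8)s\log k$, the corrected margin is $(\delta^2s/8-2C^\prime)\log k$. That requires $s>16C^\prime/\delta^2$ and gives an exponent smaller than $\gamma$, so you would not obtain the stated theorem. Your final margin $(\delta^2s/8-C^\prime)\log k$ coincides with the paper's only because the omitted regret term and your understated drift offset each other.

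The repair is as follows.
\begin{itemize}
\item Use the full drift from \Cref{lem:kl}: $\mu_k\ge m_k\bigl(k\delta/(k+m_k)\bigr)^2\ge\delta^2m_k/4$ once $m_k\le k$.
\item Take the comparator $\lambda^{**}_k$ of \eqref{eq:lammbda-opt}, or $\pm2\delta$. An unspecified constant such as $\pm1$ can have negative drift when $\delta$ is small.
\item Then $\mu_k-2C^\prime\log k\ge 2(\delta^2s/8-C^\prime)\log k$. Spending half of this on $c_{\alpha,T}+C^\prime\log2$ (this is what the paper's $k_T$ does) leaves a deviation of exactly $(\delta^2s/8-C^\prime)\log k$.
\item Since $C_vm_k+4/k\le C_0s\log k$ for large $k$, \Cref{lem:bound-mcd} then gives exactly $k^{-\gamma}$ for $k\ge k_T$.
\end{itemize}
This also fixes your $k^{-\gamma+o(1)}$. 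A multiplicative $k^{o(1)}$ loss does not preserve the sharp trichotomy, for example the $O(\log T)$ boundary case $\gamma=1$. The error in the exponent must be $O(1)$, and the half-margin device guarantees that.
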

Note that by setting $C^\prime=0$ above, we recover the oracle result of \Cref{thm:lower-asymp}.

Similarly, one can derive a bound on the post-change length of the upper confidence set as well. The following assumption is analogous to \Cref{assmp:lower-calib-real}.
\begin{assumption}
\label{assmp:upper-calib-real}
    For each $t$ and $j\leq t$, the calibrator used is $g_{t,j}(p)=1 + \lambda_{t,j}(p - 0.5)$ where $\{\lambda_{t,j}\}_{j\leq t}$ is learned using an online method such that the regret is logarithmic, i.e., there exists a constant $C^\prime>0$ such that for any $t\geq n\geq i$,
    \begin{align*}
        &\sum_{j=i}^n \log(1 + \lambda_{t,j}(q_{t}^j - 0.5)) \\
        &\geq \max_{\lambda\in[-1,1]}\sum_{j=i}^n \log(1 + \lambda(q_{t}^j - 0.5))-C^\prime\log (n-i+1).
    \end{align*}
\end{assumption}
\begin{theorem}
\label{thm:upper-asymp-real}
Under \Cref{assmp:indep,assmp:lower-score,,assmp:detector,assmp:upper-calib-real}, fix any $s>\max\left\{\frac{16(2C^\prime+1)}{\delta^2},\frac{32C_0}{\delta^4}\right\}$, where $C_0=5+ 4(1+\log 2)^2$. Consider a sequence of problems where the true changepoint $T$ increases to infinity, and the change detector also changes with $T$ in such a way that
$$R:=\liminf_{T\to\infty}\mathbb{P}_{F_0,T,F_1}(T\le\tau \le T+e^{T/s})>0.$$ Then, for the upper confidence set $\mathcal{C}_{\mathrm{up}}^\beta$  in \eqref{ci-upper}, with $\hat r_t$ replaced by its oracle counterpart $r_t:=\P_{F_0,\infty}[\tau\geq t]$, we have
$$\mathbb{E}_{F_0,T,F_1}(|\mathcal{C}_{up}^\beta\cap\{T,\cdots,\tau\}\mid T\le\tau \le T+e^{T/s})
   = O(1). $$
\end{theorem}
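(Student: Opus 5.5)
The proof follows the template of \Cref{thm:size-upper} and \Cref{thm:upper-asymp}. The only change is that the oracle drift of the log-calibrated backward martingale is replaced by the best fixed $\lambda$ in hindsight, minus the logarithmic adaptive regret from \Cref{assmp:upper-calib-real}.

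\textbf{Step 1: reduction to a non-rejection probability.} Fix $t=T+k-1$ with $k\ge1$, so that $t\in\{T,\dots,\tau\}$ and $t-1=T+k-2$. The backward p-values $q^j_{t-1}$ for $j\le t-1$ compare $S(X_j)$ with the scores in $\{X_j,\dots,X_{t-1}\}$. This window contains $k-1$ post-change points and $j'$ pre-change points, and for $j\le T-1$ the current point $X_j$ is pre-change. Bound the indicator $\mathds 1(t\in\mathcal C_{up}^\beta)$ by the event that the single product $\prod_{j=T-u}^{t-1}g_{t-1,j}(q^j_{t-1})$ does not exceed $1/(\beta r_t)$, for a window length $u=u_k$. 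Under \Cref{assmp:detector}, $r_t\ge \P_{F_0,T,F_1}(\tau\ge t)$.

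Write
\[
\mathbb E\bigl[|\mathcal C_{up}^\beta\cap\{T,\dots,\tau\}|\,\bigm|\, T\le\tau\le T+m_T\bigr]
\le \frac1{R_T}\sum_{k\le m_T+1}\P\bigl(t\in\mathcal C_{up}^\beta,\ \tau\ge t\bigr).
\]
Then split $\P(\tau\ge t)\cdot 1/r_t$ exactly as in \Cref{thm:size-upper}. Either $r_t$ is at least $e^{-cu_k}$, so the threshold is at most $e^{cu_k}/\beta$, or $r_t<e^{-cu_k}$, in which case $\P(\tau\ge t)\le r_t<e^{-cu_k}$. This second case gives the $\sum e^{-cu_k}$ term.

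\textbf{Step 2: the regret substitution.} By \Cref{assmp:upper-calib-real} with $i=T-u$ and $n=t-1$,
\[
\log\prod_{j} g \;\ge\; \sup_\lambda \sum_j \log\bigl(1+\lambda(q^j-0.5)\bigr) - C'\log(u+k).
\]
Take the sup at the fixed deterministic $\lambda^\star=\pm1$, with sign chosen by the sign of $\P(S(Z)>S(Y))-\tfrac12$. Then the right-hand side is a sum of terms bounded by $\log 2$ in absolute value and $1$-Lipschitz in each score. As in the proof of \Cref{thm:size-upper}, the expectation is at least $u\,(\delta k'/(k'+u))^2$-type drift, which is at least $u\delta^2/8$ once $u$ dominates $k$. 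This is why $m_k\ge k$ and $u_k\approx\log k$ for $m_k=e^{k/s}$.

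McDiarmid's inequality (independence, \Cref{assmp:indep}) with bounded differences of order $C_0$ then gives non-rejection probability at most
\[
\exp\!\Bigl(-\frac{2\,(\delta^2u/8-c u-C'\log(u+k)-\log(1/\beta))^2}{C_0\,u}\Bigr).
\]

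\textbf{Step 3: summation under $m_k=e^{k/s}$.} Here $u_k=\lceil s\log k\rceil$, up to rounding, and $\log(u_k+k)\le 2\log k$. The exponent is then roughly $2s(\delta^2/8-c-2C'/s)^2\log k/C_0$. Choose $c$ small, say $c=\delta^2/32$. The hypothesis $s>16(2C'+1)/\delta^2$ makes the linear coefficient at least $\delta^2/32$, so the exponent is at least $s\delta^4\log k/(512\,C_0)$. The hypothesis $s>32C_0/\delta^4$ is intended to make this exceed $(1+\epsilon)\log k$, giving a summable series; the constant bookkeeping should be matched to the exact McDiarmid constant. Similarly, $\sum_k e^{-cu_k}=\sum_k k^{-cs}$ is summable because $cs=s\delta^2/32>1$. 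Finally, $R_T\to R>0$ and the finitely many initial terms are $O(1)$.

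The main obstacle is Step 2. The adaptive-regret penalty $C'\log(u+k)$ is of the same logarithmic order as the drift $\delta^2u/8\asymp s\log k$. This is why the thresholds on $s$ involve $C'$, and the constants must be tracked carefully so that the net drift stays linear in $u$ with a coefficient large enough to give summability.
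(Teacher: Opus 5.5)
\textbf{There is a genuine gap in Step 2.} You apply \Cref{assmp:upper-calib-real} once, on the whole interval $[T-u,\,t-1]$, and then evaluate the supremum at a single fixed $\lambda^\star\neq 0$. Every backward product for candidate $t$ ends at $t-1$, so this interval necessarily contains the $k-1$ post-change indices $j\in[T,t-1]$. Here $k$ runs up to $m_T\approx e^{T/s}$, far beyond $u_k\approx s\log k$. For those $j$ the window $\{X_j,\dots,X_{t-1}\}$ is entirely post-change, so the $q^j_{t-1}$ are i.i.d.\ Uniform$(0,1)$. By strict Jensen, $\mathbb{E}\log(1+\lambda^\star(q^j_{t-1}-\tfrac{1}{2}))<0$; this is about $-0.045$ for $\lambda^\star=\pm1$.

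Even granting a pre-change drift of order $\delta^2 u_k$, your lower bound therefore has mean of order $\delta^2 s\log k-\Theta(k)$, which is negative for all large $k$. Moreover, each post-change term has its own bounded difference of order $\log 2$. So the McDiarmid variance proxy is of order $u_k+k$, not $C_0u$. No summable tail bound comes out, and you cannot conclude that the sum over $k\le m_T$ is $O(1)$.

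\textbf{The missing idea.} This is exactly why the assumption is an \emph{adaptive}, interval-wise regret bound: it should be applied twice, as the paper does.
\begin{itemize}
\item On the post-change interval $[T,t-1]$, compare with $\lambda=0$. This gives the deterministic bound $\ge -C'\log(k+1)$ and removes those terms from the probabilistic analysis altogether.
\item On the pre-change interval $[T-u_k,T-1]$, compare with the small comparator $\lambda^{**}_k=\pm 2k\delta/(k+u_k)$. What remains is a sum of only $u_k$ terms. Its mean is at least $\delta^2u_k/4$ by \Cref{lem:kl}, and its variance proxy is $C_vu_k+4/k$.
\end{itemize}
The mean bound uses $u_k\le k$; your phrase ``once $u$ dominates $k$'' has this backwards.

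\textbf{Further problems with the comparator and constants.} The choice $\lambda^\star=\pm1$ is also inadequate on the pre-change part. With it, $\log(1+x)\ge x-x^2$ yields only $\epsilon-\mathbb{E}Y^2$, which can be negative, e.g.\ for a small location shift where $q$ is nearly uniform. It is the comparator $2\epsilon$ that produces the $\epsilon^2$ drift.

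Your constants also do not match the hypothesis. With drift $\delta^2u/8$ and $c=\delta^2/32$, your exponent is $s\delta^4/(512C_0)$, which would require $s>512C_0/\delta^4$. The paper instead uses $\mu_k\ge\delta^2u_k/4$ and $c=1/s+\delta^2/16$. Under $s>16(2C'+1)/\delta^2$ this makes the net drift exceed $\tfrac{\delta^2 s}{8}\log(k+1)$, giving exponent $s\delta^4/(32C_0)$, which matches the stated condition.

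\textbf{What is fine.} Your Step 1 is correct: you split on the deterministic $r_t$ and use $\mathbb{P}_{F_0,T,F_1}(\tau\ge t)\le r_t$. This is a slightly cleaner alternative to the paper's appeal to the super-uniformity of $r_\tau$.
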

Finally, we can combine the results of pre- and post-change parts and provide a bound on the two-sided length.

\begin{theorem}
\label{thm:twoside-asymp-real}
Under \Cref{assmp:indep,assmp:lower-score,assmp:detector,assmp:lower-calib-real,assmp:upper-calib-real}, fix any $s>\frac{8C^\prime}{\delta^2}$ and $s^\prime>\max\left\{\frac{16(2C^\prime+1)}{\delta^2},\frac{32C_0}{\delta^4}\right\}$, where $C_0=5 + 4(1+\log 2)^2$. Consider a sequence of problems where the true changepoint $T$ increases to infinity, and the change detector also changes with $T$ in such a way that
$$\liminf_{T\to\infty}\mathbb{P}_{F_0,T,F_1}(T+s\log T\le\tau \le T+e^{T/s^\prime})>0.$$ Then, for the two-sided confidence set $\mathcal{C}$ in \eqref{ci-bothside-2alpha}, with $\hat r_t$ replaced by its oracle counterpart $r_t:=\P_{F_0,\infty}[\tau\geq t]$, we have
$\mathbb{E}_{F_0,T,F_1}(|\mathcal{C}|\mid T+s\log T\le\tau \le T+e^{T/s^\prime})$
$$=\begin{cases}
    O(1), & 2(\delta^2s/8-C^\prime)^2>C_0s\\
    O(\log T), & 2(\delta^2s/8-C^\prime)^2=C_0s\\
    O\left(T^{1- \frac{2(\delta^2s/8-C^\prime)^2}{C_0s}}\right), &  2(\delta^2s/8-C^\prime)^2<C_0s.
\end{cases}$$
Consequently, in all these three cases, $\mathbb{E}_{F_0,T,F_1}\left(\frac{|\mathcal{C}|}{T}\mid T+s\log T\le\tau \le T+e^{T/s^\prime}\right) =o(1)$.
\end{theorem}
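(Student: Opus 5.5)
The plan is to split the confidence set into its pre-change and post-change parts, $|\mathcal{C}| = |\mathcal{C}\cap[T-1]| + |\mathcal{C}\cap\{T,\dots,\tau\}|$, and bound each part by the corresponding one-sided analysis. Since $\mathcal{C}\subseteq\mathcal{C}_{low}^\alpha$ and $\mathcal{C}\subseteq\mathcal{C}_{up}^\alpha$ (with $\beta=\alpha$), it suffices to show
\[
\mathbb{E}_{F_0,T,F_1}\big(|\mathcal{C}_{low}^\alpha\cap[T-1]|\mid E_T\big)
\]
has the three-regime rate, and that
\[
\mathbb{E}_{F_0,T,F_1}\big(|\mathcal{C}_{up}^\alpha\cap\{T,\dots,\tau\}|\mid E_T\big)=O(1).
\]
Here $E_T:=\{T+s\log T\le\tau\le T+e^{T/s'}\}$. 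The $O(1)$ term is then absorbed in every regime, and dividing by $T$ gives the $o(1)$ consequence because $1-2(\delta^2s/8-C')^2/(C_0s)<1$.

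The obstacle is that \Cref{thm:lower-asymp-real} and \Cref{thm:upper-asymp-real} condition on different events, $\{\tau\ge T+s\log T\}$ and $\{T\le\tau\le T+e^{T/s'}\}$. So I would not cite them verbatim but reopen their finite-sample versions, i.e.\ the learned-calibrator analogues of \Cref{thm:size-lower} and \Cref{thm:size-upper}. In those proofs the bound has the form $\mathbb{E}(|\cdot|\mid A)\le \mathbb{E}(|\cdot|\mathds{1}_A)/\P(A)$. The numerator is controlled by summing, over candidates $t=T-k$ (respectively $t=T+k$), the probability that the martingale fails to cross $1/(\alpha r_t)$ on a window of post-change (respectively pre-change) data that lies inside $[1,\tau]$ on $A$.

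For the lower part, on $E_T\subseteq\{\tau\ge T+s\log T\}$ the window $\{T,\dots,T+m_k\}$ with $m_k\asymp s\log k\le s\log T$ is available. The drift is $(k\delta/(k+m_k))^2$ per step, minus the $C'\log$ regret from \Cref{assmp:lower-calib-real}. McDiarmid then bounds each failure probability by $\exp(-2(\delta^2s/8-C')^2\log k/(C_0 s))$. Summing over $k\le T-1$ yields $O(1)$, $O(\log T)$, or $O(T^{1-\cdots})$, exactly as in \Cref{thm:lower-asymp-real}.

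For the upper part, on $E_T\subseteq\{T\le\tau\le T+e^{T/s'}\}$ each candidate $t=T+k\le\tau$ has at least $u_k\asymp s'\log k$ pre-change points behind it. As in \Cref{thm:size-upper}, the backward martingale crosses except with probability summable in $k$, giving $\sum_k(e^{-\cdots u_k}+e^{-cu_k})=O(1)$. The constraint on $s'$ ensures the exponents exceed $1$, and \Cref{assmp:detector} is used, as before, to compare $r_t$ with the stopping law when lower-bounding the threshold.

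Finally I divide the two numerators by the common denominator $\P(E_T)$. Its $\liminf$ is positive by hypothesis, so it only contributes a constant factor. The main step to check carefully is that both one-sided failure-probability bounds remain valid on the intersected event, not just on their original events. I expect this to hold because each bound only uses $E_T\subseteq A_{\mathrm{one\text{-}sided}}$ together with inequalities of the form $\mathbb{E}(X\mathds{1}_{E_T})\le\mathbb{E}(X\mathds{1}_{A})$ for nonnegative $X$.
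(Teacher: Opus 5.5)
Your proposal is correct and follows the paper's route. The paper's proof splits $|\mathcal{C}|$ into its pre-change and post-change parts and applies the arguments behind \Cref{thm:lower-asymp-real} and \Cref{thm:upper-asymp-real} to each, exactly as you do. Your observation that $E_T$ lies inside both one-sided conditioning events is what makes the combination work: each numerator bound carries over unchanged and only the common denominator becomes $\P_{F_0,T,F_1}(E_T)$. The same fact gives $r_{T-1}\ge\P_{F_0,T,F_1}(E_T)$, which keeps $c_{\alpha,T}$ and hence $k_T$ bounded.
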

Note that by setting $C^\prime=0$ above, we recover the oracle result of \Cref{thm:twoside-asymp}.

The proof of the above theorem follows by combining the arguments developed in the pre-change and post-change analyses and applying them to the respective segments of the confidence set.

\section{Experimental results}
\label{sec:expt}
\subsection{Gaussian mean change}
We begin with a Gaussian mean change scenario.  Here, we draw $X_1,\cdots,X_{T-1}\stackrel{iid}{\sim}N(-1,1)$ and $X_T,X_{T+1},\cdots\stackrel{iid}{\sim}N(1,1)$, with $T=500$. We employ the  CUSUM detector based on the conformal test martingale with a threshold at $50000$. We use the score function $S(x)=x$. The functions $f_{t,j}(p)=1+\lambda_{t,j}(p-0.5)$ and $g_{t,j}(p)=1+\lambda_{t,j}^\prime(p-0.5)$ are used as a calibrator, with $\lambda_{t,j}, \lambda_{t,j}^\prime$ chosen adaptively via the ONS+FLH method described in \Cref{alg:flh-ons-calibrator}. And $\hat r_t$ is estimated by drawing $100$ sequences from the Uniform(0,1) distribution. Fig. \ref{fig:gaussian-ci} visualizes the two-sided confidence sets \eqref{ci-bothside-2alpha} across $4$ runs with $\alpha=0.05$. In Fig. \ref{fig:gaussian-teststat}, the average test statistics for the lower confidence set \eqref{ci-lower}, that is, the average value of $\max_{i:t\leq i\leq\tau}\prod_{j=t}^i f_{t,j}(p^j_t)$ and the average test statistic for the upper confidence set \eqref{ci-upper}, that is, the average value of $\max_{i:1\le i\le t-1}\prod_{j=i}^{t-1} g_{t-1,j}(q_{t-1}^j)$  are plotted (along with error bars) in blue and orange, respectively. We observe that regions where both averaged test statistics simultaneously attain low values are concentrated around the true changepoint $T=500$, demonstrating the effectiveness of the proposed method. Additional experiments in the Gaussian mean-shift setting, varying the changepoint location, detection threshold, and signal strength, are presented in \Cref{sec:app-expt} in the Appendix.

\begin{figure*}[!ht]
\centering
\centering
\subfloat[ Raw data is plotted over time till stopping. The two-sided confidence set \eqref{ci-bothside-2alpha} with $\alpha=0.05$ is marked in red points. Results of $4$ independent simulations are shown. \label{fig:gaussian-ci}]{\includegraphics[width=0.48\linewidth]{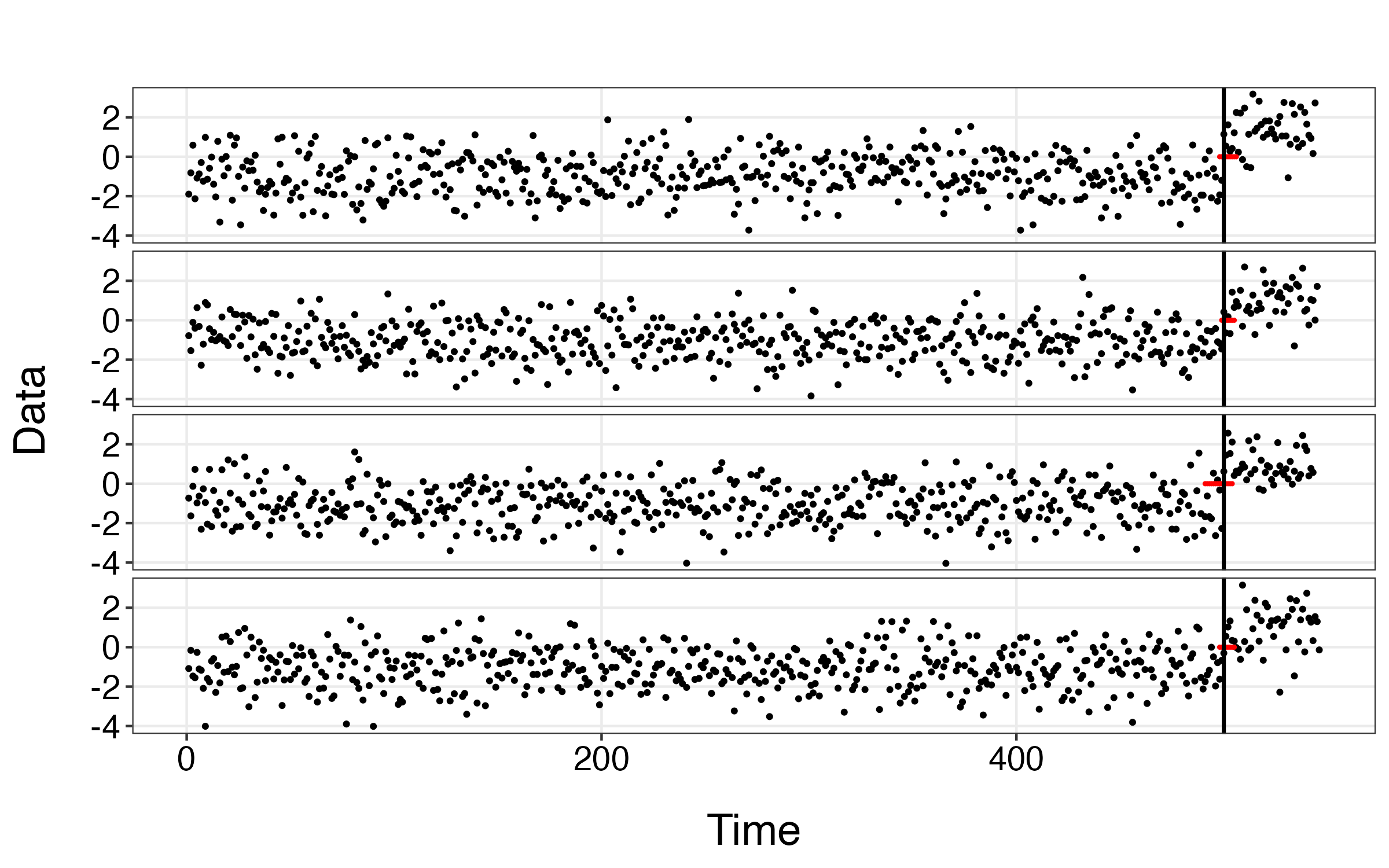}}
\quad
\subfloat[ Average (over $100$ simulations) test statistic for lower and upper confidence sets are plotted (with error bars). The horizontal black dashed line is the threshold for $95\%$ upper/lower set (or $90\%$ two-sided set). \label{fig:gaussian-teststat}]
{\includegraphics[width=0.48\linewidth]{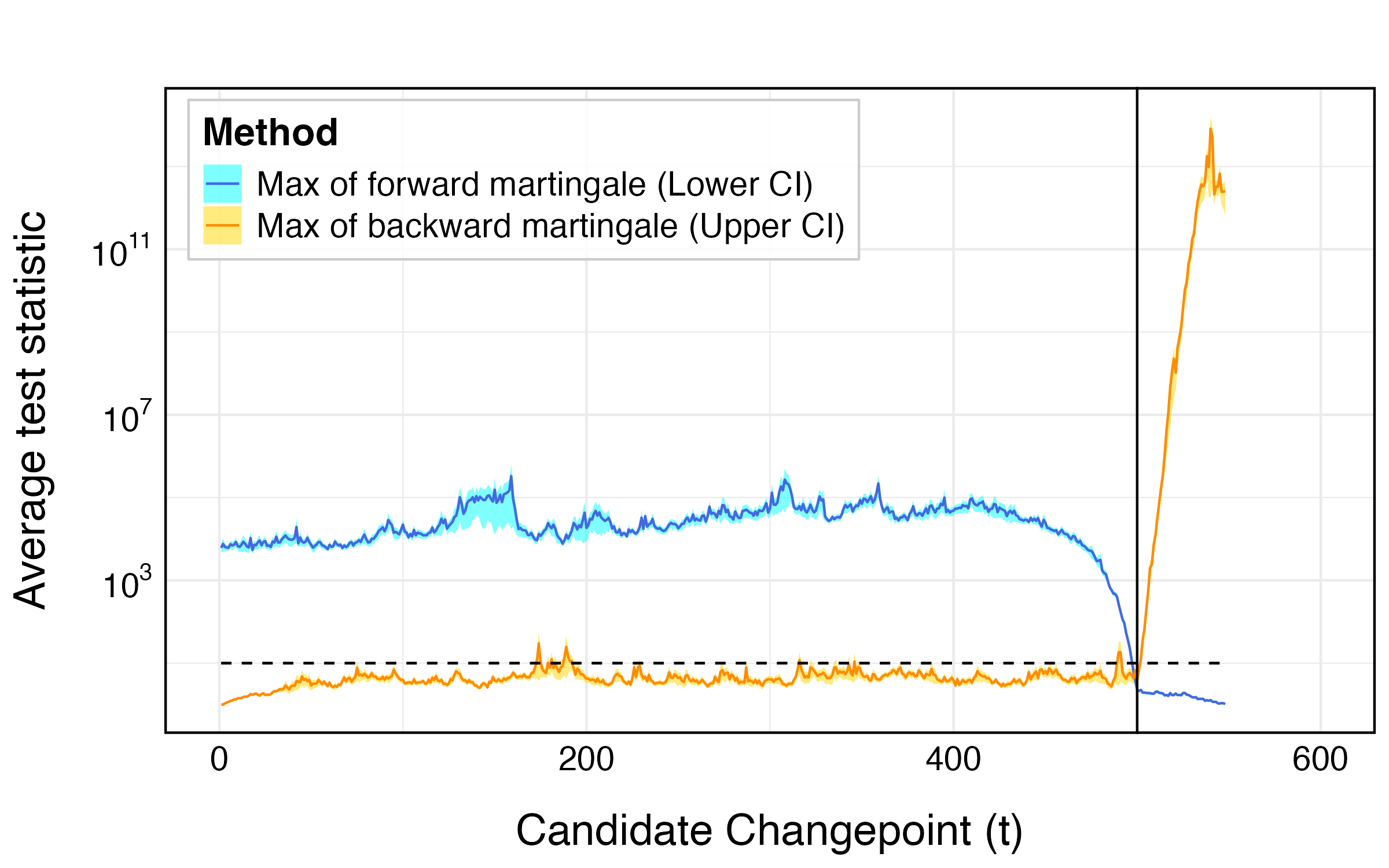}} 
\caption[]{Gaussian mean change: changepoint at $T=500$, marked in black vertical lines.} 
\label{fig:gaussian}  
\end{figure*}
\subsection{MNIST digit change experiment}
We consider a synthetic changepoint experiment on the MNIST dataset, following \cite{dandapanthula2025offline} but in the online setting. The true change point is at $T=400$.  We consider two different settings --- (i) Observations before the changepoint are handwritten digit 3, while observations after the changepoint are handwritten digit 7. 
(ii) 80\% of the observations before the changepoint are handwritten digit 3 (and the remaining 20\% are handwritten digit 7), while 20\% of the observations after the changepoint are handwritten digit 3 (and the remaining 80\% are handwritten digit 7).
We train a convolutional neural network classifier on MNIST and use the log-likelihood ratio style score $S(x)=\log\hat{P}(\text{Digit}=7|x)-\log\hat{P}(\text{Digit}=3|x)$, with $\alpha=\beta=0.05$. These scores are converted online into conformal p-values, which are calibrated to e-values using the  ONS+FLH strategy as in the previous experiment. We used a conformal test martingale-based CUSUM detector for change detection. The estimates
\(\hat r_t\) are
obtained via Monte Carlo simulation with i.i.d.\ uniform streams. The resulting two-sided sets are shown in \Cref{fig:pure} (Setting i) and in \Cref{fig:mixed} (Setting ii). For setting (i), the set is remarkably sharp:  [395, 400]. In the more challenging mixed-distribution case (setting ii), the confidence set becomes slightly larger, but remains quite sharp: $[396, 407]$. 
\begin{figure*}[!ht]
\centering
\centering
\subfloat[Pure digit change: $3$ to $7$. \label{fig:pure}]{\includegraphics[width=0.48\linewidth]{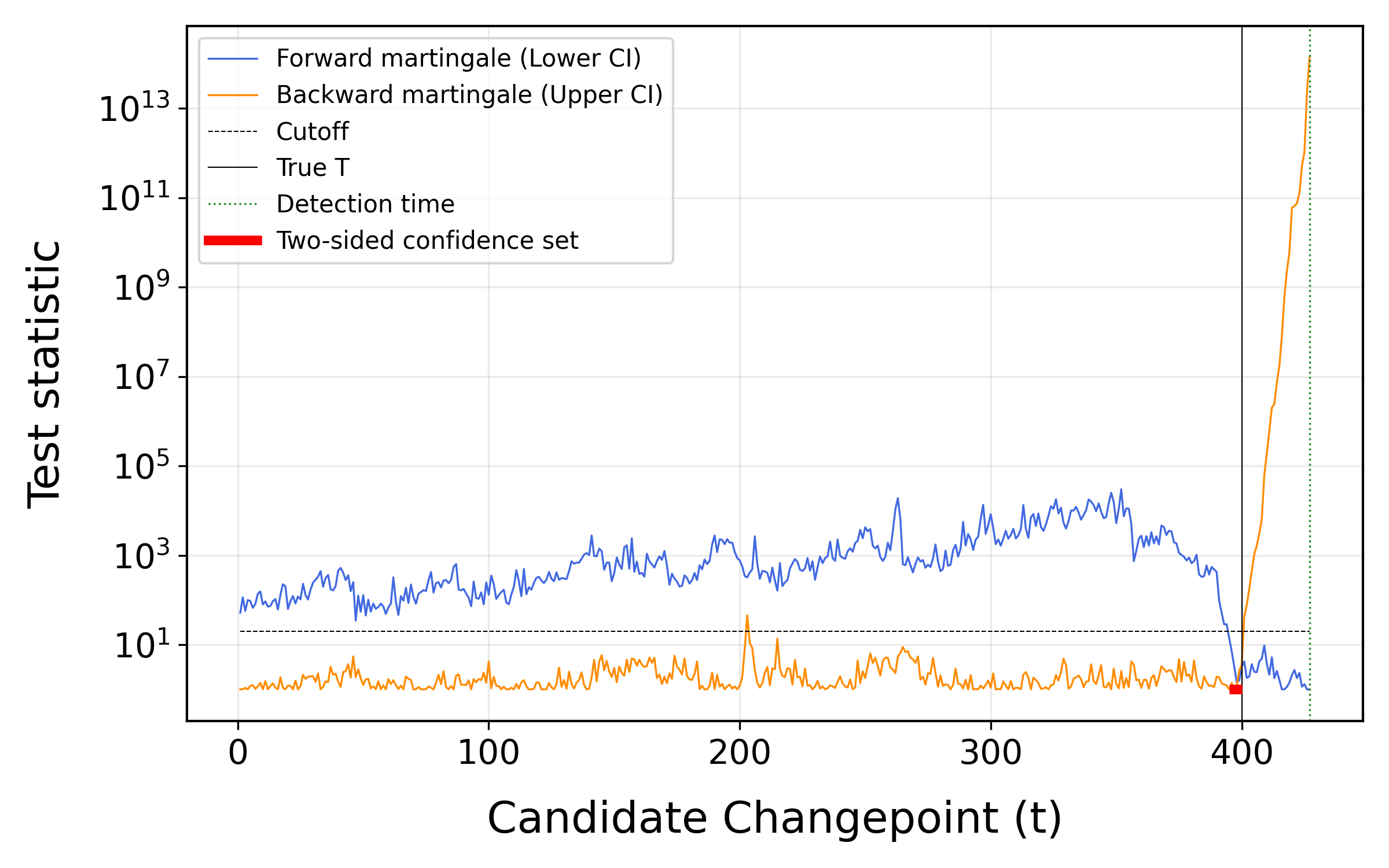}}
\quad
\subfloat[ 80\% digit 3 + 20\% digit 7 to
 20\% digit 3+ 80\% digit 7. \label{fig:mixed}]
{\includegraphics[width=0.49\linewidth]{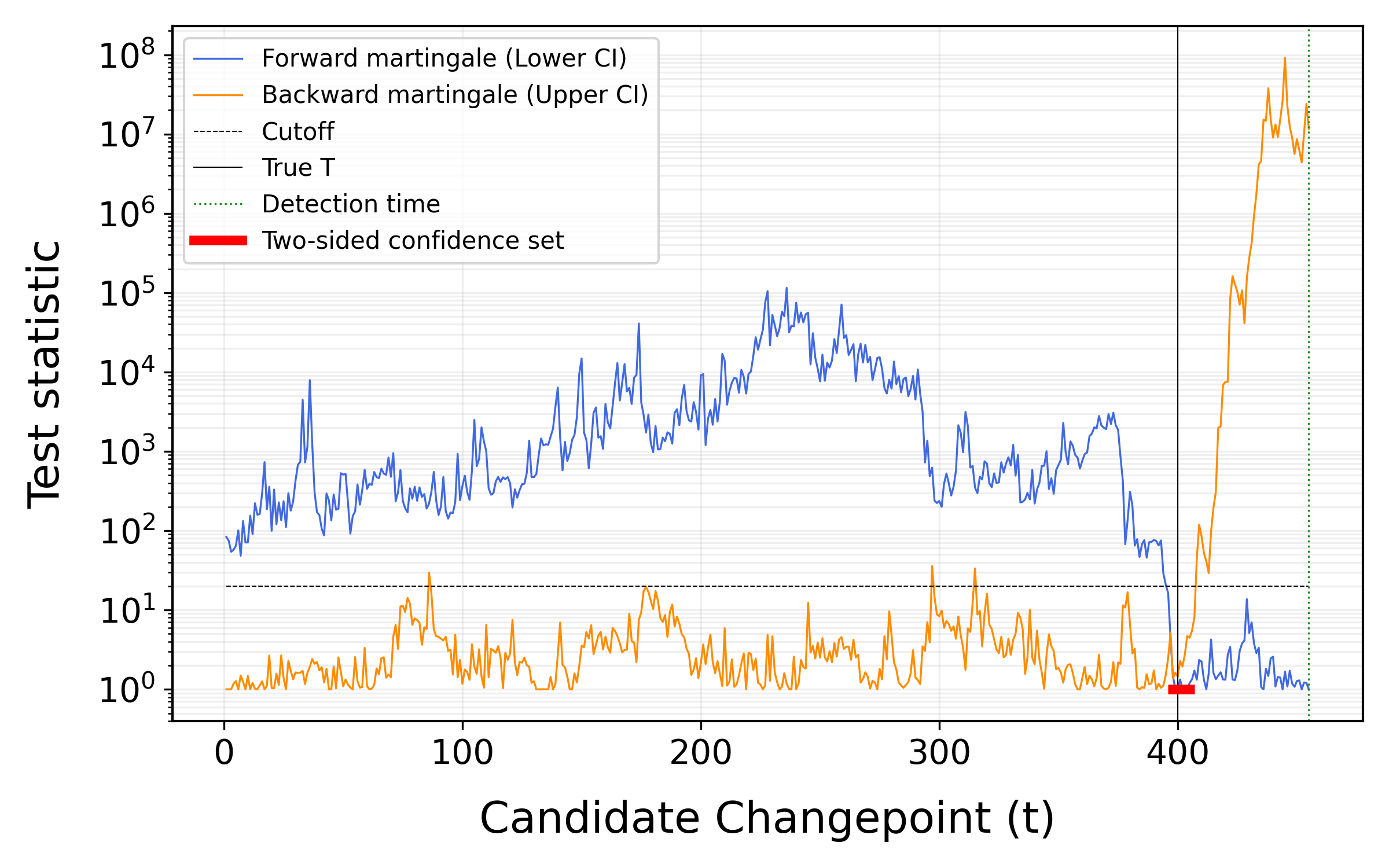}} 
\caption[]{Test statistics for lower and upper confidence sets are plotted, and the $90\%$ two-sided confidence set is shown in red.} 
\label{fig:mnist}  
\end{figure*}

\subsection{Wine quality dataset experiment}

Suppose a winery operates a production line dedicated to bottling \emph{pure white wines}.  An accidental switch in supply---for example, when red wine begins entering the bottling stream due to a logistics or labeling error---constitutes a serious quality-control failure.
Consequently, the monitoring system must determine how far back in time contamination may have started (after a detection), so that only white wines are released to customers.
To emulate this setting, we evaluate the lower post-detection confidence
set on the UCI Wine Quality dataset, following the white-to-red wine
change-point setup of \cite{vovk2021retrain}. The monitored sequence is constructed so that the first 399 observations are white wines and the remaining observations are red wines, giving a true changepoint of \(T=400\). Separate training samples from the white wine part, not included in the monitored sequence, are used to fit a linear regression model $\hat y_{\mathrm{white}}$ for predicting wine quality. 
For each monitored observation, we compute a fixed nonconformity score \(S(x)=|y-\hat y_{\mathrm{white}}(x)|\), so larger scores indicate observations that are less compatible with the pre-change white-wine model. A CUSUM detector based on a conformal test martingale is run. After detection, we implement our post-detection lower confidence set. For each candidate changepoint \(t\le \tau\), we compute forward conformal p-values using only the segment \(X_t,\ldots,X_j\) using the same pre-trained score function as used for detection, and transform it into a martingale using the Simple Jumper method. The lower confidence set \eqref{ci-lower} is then
computed using estimates
\(\hat r_t\),
obtained via Monte Carlo simulation with i.i.d.\ uniform streams.
For \(\alpha=0.1\), the resulting lower confidence set is
\([387,444] \cup [447, 449] \cup [451, 499]\). Thus, the lower confidence bound is $387$ (which is very close to the true changepoint  $400$), implying that all samples up to time $386$  can be certified as white wines with 90\% confidence.
\Cref{fig:wine} illustrates the test statistic and the resulting lower confidence set.
\begin{figure*}[!ht]
\centering
{\includegraphics[width=0.5\linewidth]{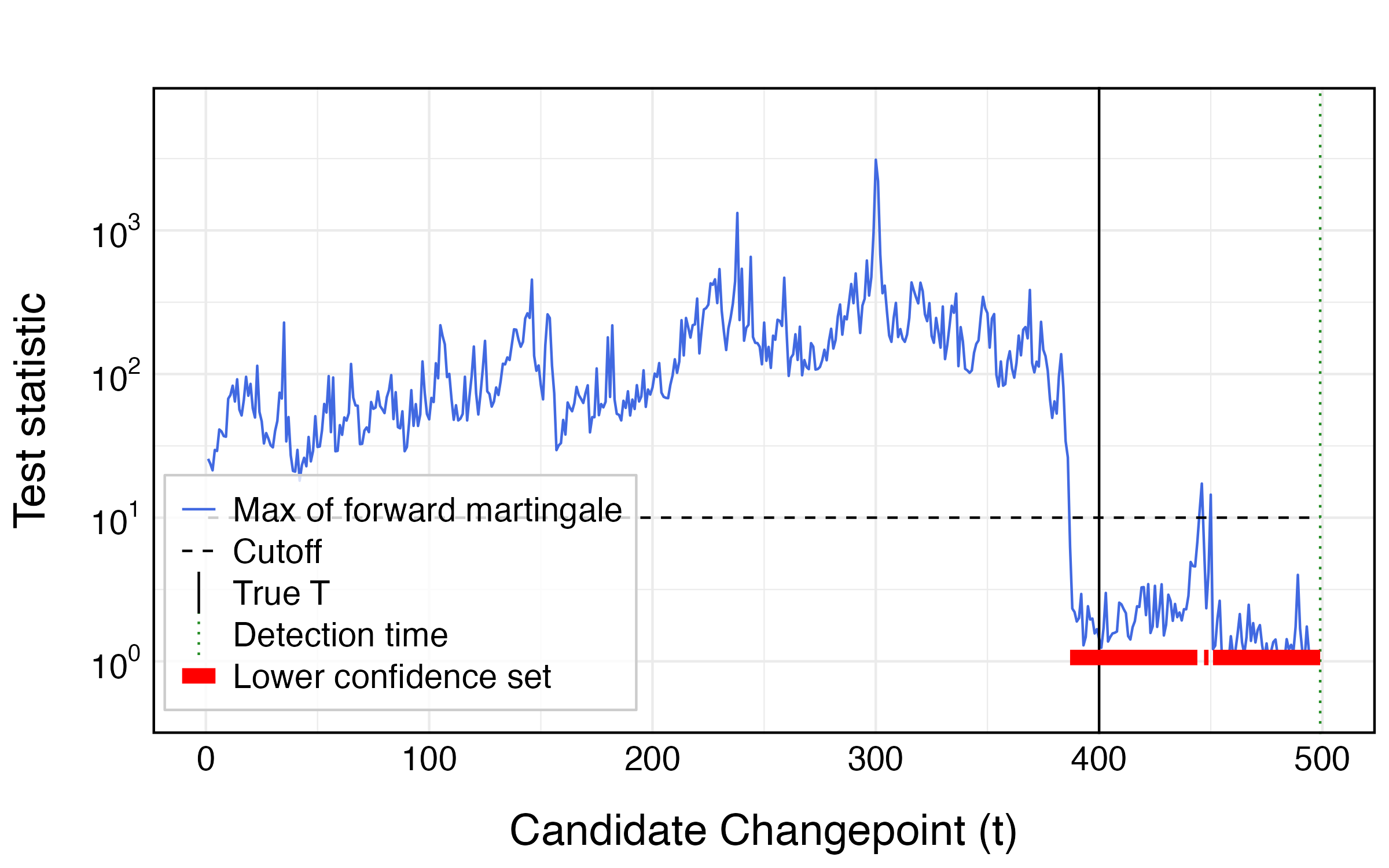}} 
\caption[]{The test statistic for lower confidence sets is plotted; the $90\%$ lower confidence set is in red. } 
\quad
\label{fig:wine}  
\end{figure*}

\section{Conclusion and future work}
\label{sec:conc}
We introduced the first distribution-free confidence sets for post-detection changepoint localization. 
We established rigorous non-asymptotic bounds on the size of these sets, demonstrating that reliable localization can be achieved without distributional assumptions. Furthermore, we prove that the conditional expected size of the confidence set remains uniformly bounded under suitable asymptotic regimes. Our approach is conceptually simple, broadly applicable, and exhibits strong empirical performance across both simulated and real datasets. One promising direction for future work is to extend our approach to a multi-stream setting. 

\bibliographystyle{apalike}
\bibliography{ref}

%% file: appendix-content.tex
\section{Omitted proofs}
\label{sec:app-proofs}

\subsection{Omitted proofs from \Cref{sec:method}}
\begin{proof}[Proof of \Cref{thm:cov-lower}] 
First, we bound the unconditional error at the true changepoint $T$. 
\begin{align*}
&\mathbb{P}_{F_0, T, F_1}(T \notin \mathcal{C}_{low}^\alpha, \tau \geq T) \\
&= \mathbb{E} \left[ \mathbb{P}_{F_0, T, F_1}\left(\sup_{T\leq n\leq \tau}\prod_{j=T}^n f(p_T^j)>\frac{1}{\alpha\hat r_T},\tau \geq T\mid  \hat{r}_T\right)\right] \\
&\leq \mathbb{E} \left[ \mathbb{P}_{F_0, T, F_1}\left(\sup_{n\geq T}\prod_{j=T}^n f(p_T^j)>\frac{1}{\alpha\hat r_T}\mid  \hat{r}_T\right)\right] \\
&\stackrel{(a)}{\leq} \mathbb{E}[\alpha \hat{r}_T] \\
&\leq \alpha \mathbb{P}_{F_0, \infty}(\tau \geq T).
\end{align*}
In step (a), we used Ville's inequality on the martingale $\{\prod_{j=T}^n f(p_T^j)\}_{n\geq T}$ and the fact that $\hat{r}_t$ is independent of the p-values. Remember that $\{\prod_{j=T}^n f(p_T^j)\}_{n\geq T}$ is a martingale because $\{X_n\}_{n\geq T}$ is exchangeable by \Cref{assmp-post}.

Next, observe that since $\tau$ is a stopping time, the event $\{\tau < T\}$ is $\mathcal{F}_{T-1}$-measurable. Because the distribution of $X_1, \dots, X_{T-1}$ is $F_0$ under both $\mathbb{P}_{F_0, T, F_1}$ and $\mathbb{P}_{F_0, \infty}$, we have $\mathbb{P}_{F_0, T, F_1}(\tau \geq T) = \mathbb{P}_{F_0, \infty}(\tau \geq T)$.
So,  $\mathbb{P}_{F_0, T, F_1}(T \notin \mathcal{C}_{low}^\alpha , \tau \geq T)\leq \alpha \mathbb{P}_{F_0, T, F_1}(\tau \geq T)$. Thus:
\begin{align*}
\mathbb{P}_{F_0, T, F_1}(T \notin \mathcal{C}_{low}^\alpha \mid \tau \geq T) &= \frac{\mathbb{P}_{F_0, T, F_1}(T \notin \mathcal{C}_{low}^\alpha, \tau \geq T)}{\mathbb{P}_{F_0, T, F_1}(\tau \geq T)}\\
&\leq \frac{\alpha \mathbb{P}_{F_0, T, F_1}(\tau \geq T)}{\mathbb{P}_{F_0, T, F_1}(\tau \geq T)} = \alpha.
\end{align*}
Taking the complement yields $\mathbb{P}_{F_0, T, F_1}(T \in \mathcal{C}_{low}^\alpha \mid \tau \geq T) \geq 1 - \alpha$.
\end{proof}

\begin{proof}[Proof of \Cref{thm:cov-upper}]
First, we bound the unconditional error at the true changepoint $T$. 
\begin{align*}
&\mathbb{P}_{F_0, T, F_1}(T \notin \mathcal{C}_{up}^\beta, \tau \geq T)\\
&= \mathbb{E} \left[ \mathbb{P}\left(\max_{i:1\leq i\leq T-1}\prod_{j=i}^{T-1} g_{T-1,j}(q^j_{T-1}) >\frac{1}{\beta\hat r_T},\tau \geq T\mid  \hat{r}_T\right)\right] \\
&\leq \mathbb{E} \left[ \mathbb{P}_{F_0, T, F_1}\left(\max_{i:1\leq i\leq T-1}\prod_{j=i}^{T-1} g_{T-1,j}(q^j_{T-1})  >\frac{1}{\beta\hat r_T}\mid  \hat{r}_T\right)\right] \\
&\stackrel{(a)}{\leq} \mathbb{E}[\beta \hat{r}_T] \\
&\leq \beta \mathbb{P}_{F_0, \infty}(\tau \geq T).
\end{align*}
In step (a), we used Ville's inequality on the martingale $\{\prod_{j=i}^{T-1} g_{T-1,j}(q^j_{T-1}))\}_{i< T}$ and the fact that $\hat{r}_t$ is independent of the p-values. Remember that $\{\prod_{j=i}^{T-1} g_{T-1,j}(q^j_{T-1})\}_{i< T}$ is a martingale because $\{X_n\}_{n< T}$ is exchangeable by \Cref{assmp-pre}.

The rest of the argument is the same as in the previous proof.
\end{proof}


\begin{proof}[Proof of \Cref{prop-rt}]
Under $T=\infty$, the observations are exchangeable by assumption. By validity of online conformal inference, the resulting $p$-values $(p_t)_{t\ge1}$ are i.i.d.\ $\mathrm{Uniform}(0,1)$ (and hence does not depend on $F_0$); see Theorem 8.2 of \cite{vovk2005algorithmic} for a proof. Since we assume that $\tau$ is a measurable function of $(p_t)_{t\ge1}$, its distribution depends only on the law of $(p_t)_{t\ge1}$, which does not depend on $F_0$.
\end{proof}

\subsection{Omitted proofs from \Cref{sec:theory}}
\begin{proof}[Proof of \Cref{thm:lower-asymp}]
    The proof follows by applying \Cref{thm:size-lower} with $m_k=\ceil{s\log k}$, for all $k$.  Because the logarithmic sequence $\ceil{s\log k}$ grows monotonically while $4/k$ strictly decreases to zero, there exists a finite critical index $k^\prime = \inf\left\{ k \ge 1 : \frac{4}{k} \le \ceil{s\log k} \right\}$ and hence
    \begin{align*}
        &\sum_{k=k_{T}}^{T-1}  \exp\!\left(
-\frac{m_k^2\delta^4}{32(C_vm_k+\frac{4}{k})}
\right)\\
&\leq\sum_{k=1}^{k^\prime}  \exp\!\left(
\frac{-m_k^2\delta^4}{32(C_vm_k+\frac{4}{k})}
\right) +\sum_{k=1}^{T-1}  e^{\frac{-s\delta^4\log k}{160 + 128(1+\log 2)^2}}\\
&=\sum_{k=1}^{k^\prime}  \exp\!\left(
-\frac{m_k^2\delta^4}{32(C_vm_k+\frac{4}{k})}
\right)+\sum_{k=1}^{T-1}k^{-\frac{s\delta^{4}}{C_0}}\\
&=\begin{cases}
    O(1), & s\delta^4>C_0\\
    O(\log T), & s\delta^4=C_0\\
    O(T^{1- s\delta^{4}/C_0}), &  s\delta^4<C_0.
\end{cases}
    \end{align*}
Since $r_{T-1}=\mathbb{P}_{F_0,T,F_1}(\tau \ge T-1)\geq\mathbb{P}_{F_0,T,F_1}(\tau \ge T+s\log T)$, we have
\begin{align*}
    \limsup_{T\to\infty}(c_{\alpha,T})&=-\log(\alpha\times\liminf_{T\to\infty} r_{T-1})\\
    &\leq-\log(\alpha\times\liminf_{T\to\infty} \mathbb{P}_{F_0,T,F_1}(\tau \ge T+s\log T))\\
    &\leq-\log(\alpha L),
\end{align*}
 for all $t$.
Note that here
\[k_{T}\leq\inf\{k\in\N:s\log k>8c_{\alpha,T}/\delta^2\}\leq 1+e^{\frac{8c_{\alpha,T}}{s\delta^2}},\]
and hence, $\limsup_{T\to\infty}k_T\leq1+e^{-\frac{8\log(\alpha L)}{s\delta^2}}.$
Thus, by applying \Cref{thm:size-lower},
\begin{align*}
   & \limsup_{T\to\infty}\mathbb{E}_{F_0,T,F_1}(|\mathcal{C}_{low}^\alpha\cap[T-1]\mid \tau \ge T+s\log T)\\
    &\leq \begin{cases}
    O(1), & s\delta^4>C_0\\
    O(\log T), & s\delta^4=C_0\\
    O(T^{1- s\delta^{4}/C_0}), &  s\delta^4<C_0.
\end{cases}
\end{align*}
\end{proof}
\begin{proof}[Proof of \Cref{thm:size-lower}]

Let $t < T$ be a candidate pre-change time, and denote its distance to the true changepoint by $k = T-t$. First, observe that for any pre-change evaluation step $t \le j < T$,  $\lambda_{t,j}^* = 0$, which implies $\log(f_{t,j}(p^j_t)) = 0$. Now, the logarithm of the test statistic at $t<T$ is
\[\max_{i:t\leq i\leq\tau}\sum_{j=t}^i \log(f_{t,j}(p^j_t))=\max_{i:t\leq i\leq\tau}\sum_{j=T}^i \log(f_{t,j}(p^j_t))\geq B_{k},\]
where $$B_{k}=\sum_{j=T}^{T+m_k-1}\log(f_{T-k,j}(p_{T-k}^j)).$$ Note that the inequality follows by dropping all but one term from the max, and just reindexing $t$ by $k = T-t$. Therefore,
\begin{align*}
    &\mathbb{E}_{F_0,T,F_1}(|\mathcal{C}_{low}^\alpha\cap[T-1]|\mid \tau\geq T+m_T)\\
    & =\sum_{t=1}^{T-1} \mathbb{P}_{F_0,T,F_1}(t \in \mathcal{C}_{low}^\alpha \mid \tau \ge T+m_T)\\
    &\le  \frac{\sum_{t=1}^{T-1} \mathbb{P}_{F_0,T,F_1}(B_{k} \le -\log(\alpha r_{T-k}))}{\mathbb{P}_{F_0,T,F_1}(\tau \ge T+m_T)}\\
    &\le  \frac{\sum_{k=1}^{T-1} \mathbb{P}_{F_0,T,F_1}(B_{k} \le c_{\alpha,T} )}{\mathbb{P}_{F_0,T,F_1}(\tau \ge T+m_T)},
\end{align*}
where $c_{\alpha,T}=-\log(\alpha r_{T-1})$ because $r_t=\mathbb{P}_{F_0,\infty}(\tau \ge t)$ is non-increasing with respect to $t$.

Let, $\mu_k=\mathbb{E}(B_{k})$. From Lemma \ref{lem:kl}, we have
\begin{equation}
\label{eq:mu_k}
    \mu_k=\sum_{l=0}^{ m_k - 1} \left( \frac{k\delta}{k+l+1} \right)^2 \ge \left( \frac{k\delta}{k+m_k} \right)^2  m_k \geq \delta^2m_k/4,
\end{equation}
where the last inequality follows from the fact that $m_k\leq k$.
 Define, $k_{T}=\inf\{k:m_k>8c_{\alpha,T}/\delta^2\}$.  In order to bound the sum $\sum_{k=1}^{T-1} \mathbb{P}_{F_0,T,F_1}(B_{k} \le c_{\alpha,T} )$, we now partition it into two regimes.

Regime 1: ($1 \le k < k_{T}$). For these candidates, we trivially bound their probability of retention by $1$.$$\sum_{k=1}^{k_{T}-1} \mathbb{P}_{F_0,T,F_1}(B_{k} \le c_{\alpha,T}) \le k_{T} - 1.$$
Regime 2: ($k \ge k_{T}$). For these candidates, $\mu_k-c_{\alpha,T} > \frac{\delta^2m_k}{8}$. Also, note that $\log f_{t,j}$ is Lipschitz with 
$$L=\max_{p\in[0,1]} \Bigg|\frac{f_{t,j}'(p)}{f_{t,j}(p)}\Bigg| = \frac{\displaystyle\max_{\lambda\in[-1,1]} |\lambda|}{\displaystyle\min_{p\in[0,1], \lambda\in[-1,1]} |1 + \lambda(p - 0.5)|} = 2.$$
Moreover, $\log f_{t,j}(p) \in [\log(0.5), \log(1.5)]$ and hence its absolute value is universally bounded by $M=\log 2$.
Now, we employ \Cref{lem:bound-mcd} with $M=\log 2$ and $L=2$ to obtain
\begin{align*}
    P\!\left(
B_k \le c_{\alpha,T}
\right)
&\le
\exp\!\left(
-\frac{2(\mu_k-c_{\alpha,T})^2}{C_vm_k+4/k}
\right)\\
&\le \exp\!\left(
-\frac{m_k^2\delta^4}{32(C_vm_k+4/k)}
\right).
\end{align*}
Therefore, combining the bounds from regime 1 and regime 2, we derive the final bound that  $\mathbb{E}_{F_0,T,F_1}(|\mathcal{C}_{low}^\alpha\cap[T-1]|\mid \tau \ge T+m_T)$ is less than or equal to
$$  \frac{k_{T} - 1 + \sum_{k=k_{T}}^{T-1}  \exp\!\left(
-\frac{m_k^2\delta^4}{32(C_vm_k+4/k)}
\right)}{\mathbb{P}_{F_0,T,F_1}(\tau \ge T+m_T)}.$$

\end{proof}

\begin{proof}[Proof of \Cref{thm:upper-asymp}]
    Choose $K_0$ large enough so that $\floor{e^{k/s}}\geq k$ for all $k\geq K_0$. Define
    \[
        m_k =
        \begin{cases}
            k, & k<K_0,\\
            \floor{e^{k/s}}, & k\geq K_0.
        \end{cases}
    \]
    Then $m_k\geq k$ for all $k$, and for all sufficiently large $T$, $m_T=\floor{e^{T/s}}$. Thus the conditioning event $T\leq \tau\leq T+m_T$ agrees with $T\leq \tau\leq T+e^{T/s}$ up to integer rounding for all sufficiently large $T$.

    The proof now follows by applying \Cref{thm:size-upper} with this choice of $m_k$. Let
    \[
        u_k=\inf\{n\in\mathbb{N}:m_n\geq k\}.
    \]
    Since $m_n=\floor{e^{n/s}}$ for all large $n$, there exists a finite index $k^\prime$ such that, for all $k\geq k^\prime$,
    \[
        u_k\geq s\log k
        \qquad\text{and}\qquad
        \frac{4}{k}\leq u_k .
    \]
    Therefore,
    \begin{align*}
      &  \sum_{k=k_{1}}^{m_T}  \exp\!\left(
-\frac{u_k^2\delta^4}{32(C_vu_k+\frac{4}{k})}
\right)\\
&\leq \sum_{k=1}^{\infty}  \exp\!\left(
-\frac{u_k^2\delta^4}{32(C_vu_k+\frac{4}{k})}
\right)\\
&\leq\sum_{k=1}^{k^\prime-1}  \exp\!\left(
-\frac{u_k^2\delta^4}{32(C_vu_k+\frac{4}{k})}
\right)\\
&\qquad
+\sum_{k=k^\prime}^{\infty}  \exp\!\left(\frac{-s\delta^4\log k}{160 + 128(1+\log 2)^2}
\right)\\
&=\sum_{k=1}^{k^\prime-1}  \exp\!\left(
-\frac{u_k^2\delta^4}{32(C_vu_k+\frac{4}{k})}
\right)+\sum_{k=k^\prime}^{\infty}k^{-\frac{s\delta^4}{160 + 128(1+\log 2)^2}}\\
&<\infty,
\end{align*}
since $\frac{s\delta^4}{160 + 128(1+\log 2)^2}>1$. Now, choose $c\in(1/s,\delta^2/8)$ (note that it is possible since $s>8/\delta^2$). Then,
\begin{align*}
    \sum_{k=k_1}^{m_T} e^{-cu_k}
    &\leq \sum_{k=1}^{k^\prime-1}e^{-cu_k}+\sum_{k=k^\prime}^{\infty}k^{-cs}<\infty,
\end{align*}
since $cs>1$. Thus, by applying \Cref{thm:size-upper},
\begin{align*}
&\limsup_{T\to\infty}\mathbb{E}_{F_0,T,F_1}(|\mathcal{C}_{up}^\beta\cap\{T,\cdots,\tau\}|\mid T\le\tau \le T+e^{T/s})\\
&\leq \frac{1}{R}\left(k_1+ \sum_{k=1}^{\infty}  e^{
-\frac{u_k^2\delta^4}{32(C_vu_k+\frac{4}{k})}}+\sum_{k=k_1}^{\infty} e^{-cu_k}\right)<\infty.
\end{align*}
\end{proof}

\begin{proof}[Proof of \Cref{thm:size-upper}]

Let $t \geq T$ be a candidate post-change time, and denote its distance to the true changepoint by $k = t-T$. Since the upper confidence set for a candidate $t$ uses the backward martingale ending at time $t-1$, we treat the case $k=0$ separately and bound its contribution by $1$. Thus, in the rest of the proof, assume $k\ge 1$ and write $t=T+k$.

First, observe that for any $j$ such that $T+k-1\ge j \ge T$, $\lambda_{T+k-1,j}^* = 0$, which implies $\log(g_{T+k-1,j}(q^j_{T+k-1})) = 0$. Now, the logarithm of the test statistic at  $t=T+k$ is
\begin{align*}
& \max_{i:1\leq i\leq T+k-1}\sum_{j=i}^{T+k-1} \log(g_{T+k-1,j}(q^j_{T+k-1}))\\
&=
\max_{i:1\leq i\leq T+k-1}\sum_{j=i}^{T-1} \log(g_{T+k-1,j}(q^j_{T+k-1}))
\geq B_{k},
\end{align*}
where
\[
B_{k}=\sum_{j=T-u_k}^{T-1}\log(g_{T+k-1,j}(q_{T+k-1}^j)),
\]
where $u_k=\inf\{n\in\N: m_n\geq k\}$. Note that the inequality follows by dropping all but one term from the max, and just reindexing $t$ by $k = t-T$. Since $m_k\geq k,$ we have $u_k\leq k$.

Therefore,
\begin{align*}
   &\mathbb{E}_{F_0,T,F_1}(|\mathcal{C}_{up}\cap\{T,\cdots,\tau\}\mid T\le\tau \le T+m_T)\\
   & \leq\sum_{t=T}^{T+m_T} \mathbb{P}_{F_0,T,F_1}(t \in \mathcal{C}_{up} \mid T\le\tau \le T+m_T)\\
    &\le  \frac{1+\sum_{k=1}^{m_T} \mathbb{P}_{F_0,T,F_1}(B_{k} \le -\log(\beta r_{T+k}),  \tau\geq T+k)}{\mathbb{P}_{F_0,T,F_1}(T\le\tau \le T+m_T)}\\
    &\le  \frac{1+\sum_{k=1}^{m_T} \mathbb{P}_{F_0,T,F_1}(B_{k} \le -\log(\beta r_{\tau}) )}{\mathbb{P}_{F_0,T,F_1}(T\le\tau \le T+m_T)},
\end{align*}
 because $r_t=\mathbb{P}_{F_0,\infty}(\tau \ge t)$ is non-increasing with respect to $t$, and on the event $\{\tau\ge T+k\}$ we have $r_{T+k}\ge r_\tau$.
Let $\mu_k=\mathbb{E}(B_{k})$. From \Cref{lem:kl}, we have
\begin{equation}
\label{eq:mu_k_upper}
\mu_k \ge \sum_{l=0}^{u_k - 1} \left( \frac{k\delta}{k+l+1} \right)^2 \ge  \frac{\delta^2 u_k}{4},
\end{equation}
where the last inequality follows from $u_k\le k$. Now, we use the fact that $r_\tau$ is super-uniform (see \Cref{lem:rtau} for a proof) and obtain, for any $c\in(0,\delta^2/8)$,
\begin{align*}
    &\mathbb{P}_{F_0,T,F_1}(B_{k} \le -\log(\beta r_{\tau}) )\\
    &=\mathbb{P}_{F_0,T,F_1}(B_{k} \le -\log(\beta r_{\tau}),r_{\tau}\geq e^{-cu_k} )\\
    &\qquad+\mathbb{P}_{F_0,T,F_1}(r_{\tau}< e^{-cu_k})\\
    &\leq \mathbb{P}_{F_0,T,F_1}(B_{k} \le cu_k  -\log(\beta))+ e^{-cu_k}.
\end{align*}
Define
\[
k_1 = \inf\left\{k \ge 1 : u_k >\frac{\log(1/\beta)}{\delta^2/8-c} \right\}.
\]
In order to bound the sum $\sum_{k=1}^{m_T} \mathbb{P}_{F_0,T,F_1}(B_{k} \le cu_k  -\log(\beta))$, we partition it into two regimes.

Regime 1: ($1 \le k < k_1$). For these candidates near the true changepoint, the expected drift may not yet overcome the threshold. We trivially bound the probability by $1$:
\[
\sum_{k=1}^{k_1-1} \mathbb{P}_{F_0,T,F_1}(B_{k} \le cu_k  -\log(\beta)) \le k_1-1.
\]

Regime 2: ($k \ge k_1$). For these candidates,
\[
\mu_k - cu_k+\log\beta > \frac{\delta^2}{8}u_k.
\]
We employ \Cref{lem:bound-mcd} applied to the backward sum $B_k$, which is sum of $u_k$ many terms and with $L=2$, $M=\log 2$ analogous to the forward case. Hence,
\[
\mathbb{P}_{F_0,T,F_1}(B_k \le  cu_k  -\log\beta) \le
\exp\left\{\frac{-2(\mu_k  -cu_k  +\log\beta)^2}{C_vu_k + 4/k}\right\}
\]
where $C_v=4+4(1+\log2)^2$.
Summing this bound over Regime 2 up to the finite window limit $m_T$, and utilizing the explicit lower bound for $\mu_k$ derived in \eqref{eq:mu_k_upper}, we get
\[
\sum_{k=k_1}^{m_T} \mathbb{P}_{F_0,T,F_1}(B_{k} \le cu_k  -\log\beta)
\le
\sum_{k=k_1}^{m_T}
\exp\left\{
\frac{-\delta^4u_k^2}{32(C_vu_k + \frac{4}{k})}
\right\}.
\]
Therefore, combining the bounds from Regime 1 and Regime 2, we derive the final bound that
\[
\mathbb{E}_{F_0,T,F_1}(|\mathcal{C}_{up}\cap\{T,\dots,\tau\}\mid T\le\tau \le T+m_T)
\]
is less than or equal to
\[
 \frac{
 k_1
 + \sum_{k=k_1}^{m_T}  \exp\left\{ - \frac{\delta^4u_k^2}{32(C_vu_k + 4/k)}\right\}
 + \sum_{k=k_1}^{m_T} e^{-cu_k}
 }{
 \mathbb{P}_{F_0,T,F_1}(T\le\tau \le T+m_T)
 } .
\]

\end{proof}

\subsection{Omitted proofs from \Cref{sec:theory-real}}
\begin{proof}[Proof of \Cref{thm:lower-asymp-real}]
Let $m_n=\ceil{s\log n}$ for all $n\in\N$ and $t < T$ be a candidate pre-change time, and denote its distance to the true changepoint by $k = T-t$. Now, the logarithm of the test statistic at $t<T$ is
\[\max_{i:t\leq i\leq\tau}\sum_{j=t}^i \log(f_{t,j}(p^j_t))\geq B_{k}^{(0)}+B_{k}^{(1)},\]
where $$B_{k}^{(0)}=\sum_{j=T-k}^{T-1}\log(f_{T-k,j}(p_{T-k}^j)),$$
$$B_{k}^{(1)}=\sum_{j=T}^{T+m_k-1}\log(f_{T-k,j}(p_{T-k}^j)).$$ Note that the inequality follows by dropping all but one term (corresponding to $i=T+m_k-1$) from the max, breaking the sum $\sum_{j=t}^{T+m_k-1}\log(f_{T-k,j}(p_{T-k}^j))$ into two parts, and just reindexing $t$ by $k = T-t$. Now, by \Cref{assmp:lower-calib-real},
\begin{align*}
   & B_{k}^{(0)}=\sum_{j=T-k}^{T-1}\log(1+\lambda_{t,j}(p_{T-k}^j-0.5))\\
   &\geq \max_{\lambda\in[-1,1]} \sum_{j=T-k}^{T-1}\log(1+\lambda(p_{T-k}^j-0.5))-C^\prime\log k\\
    &\geq -C^\prime\log k,
\end{align*}
    where the last inequality follows from the fact that $\sum_{j=T-k}^{T-1}\log(1+\lambda(p_{T-k}^j-0.5))=0$ for $\lambda=0$. Again, by \Cref{assmp:lower-calib-real},
\begin{align*}
   & B_{k}^{(1)}=\sum_{j=T}^{T+m_k-1}\log(1+\lambda_{t,j}(p_{T-k}^j-0.5))\\
    &\geq \max_{\lambda\in[-1,1]} \sum_{j=T}^{T+m_k-1}\log(1+\lambda(p_{T-k}^j-0.5))-C^\prime\log(k+m_k)\\
    &\geq B_{k}^{(2)}-C^\prime\log (2k),
\end{align*}
where $B_{k}^{(2)}=\sum_{j=T}^{T+m_k-1}\log(1+\lambda^{**}_k(p_{T-k}^j-0.5))$ $\lambda^{**}_k$ is as defined in \eqref{eq:lammbda-opt}. Therefore,
\begin{align*}
   & \mathbb{E}_{F_0,T,F_1}(|\mathcal{C}_{low}^\alpha\cap[T-1]|\mid \tau\geq T+m_T)\\
   & =\sum_{t=1}^{T-1} \mathbb{P}_{F_0,T,F_1}(t \in \mathcal{C}_{low}^\alpha \mid \tau \ge T+m_T)\\
    &\le  \frac{\sum_{t=1}^{T-1} \mathbb{P}_{F_0,T,F_1}(B_{k}^{(0)}+B_{k}^{(1)} \le -\log(\alpha r_{T-k}))}{\mathbb{P}_{F_0,T,F_1}(\tau \ge T+m_T)}\\
    &\le  \frac{\sum_{k=1}^{T-1} \mathbb{P}_{F_0,T,F_1}(B_{k}^{(2)} \le c_{\alpha,T}+ 2C^\prime\log k+C^\prime\log 2 )}{\mathbb{P}_{F_0,T,F_1}(\tau \ge T+m_T)},
\end{align*}
where $c_{\alpha,T}=-\log(\alpha r_{T-1})$ because $r_t=\mathbb{P}_{F_0,\infty}(\tau \ge t)$ is non-increasing with respect to $t$.

Let, $\mu_k=\mathbb{E}(B_{k}^{(2)})$. From the lemma \ref{lem:kl}, we have
\begin{equation}
\label{eq:mu_k-real}
    \mu_k \ge \left( \frac{k\delta}{k+m_k} \right)^2  m_k \geq \frac{\delta^2m_k}{4},
\end{equation}
where the last inequality follows from the fact that $m_k\leq k$.
 Define, $k_{T}=\inf\{k\in\N:(\delta^2s/8-C^\prime)\log k>C^\prime\log2+c_{\alpha,T}\}\leq1+\exp{(\frac{C^\prime\log2+c_{\alpha,T}}{\delta^2s/8-C^\prime})}$.  In order to bound the sum $\sum_{k=1}^{T-1} \mathbb{P}_{F_0,T,F_1}(B_{k} \le c_{\alpha,T} )$, we now partition it into two regimes.

Regime 1: ($1 \le k < k_{T}$). For these candidates, we trivially bound their probability of retention by $1$.
\begin{align*}
    \sum_{k=1}^{k_{T}-1} \mathbb{P}_{F_0,T,F_1}(B_{k}^{(2)} &\le c_{\alpha,T}+2C^\prime\log k+C^\prime\log 2) \\
    &\le k_{T} - 1\leq\exp{\left(\frac{C^\prime\log 2+c_{\alpha,T}}{\delta^2s/8-C^\prime}\right)}.
\end{align*}
Regime 2: ($k \ge k_{T}$). For these candidates, $\mu_k-2C^\prime\log k-C^\prime\log 2-c_{\alpha,T} > (\delta^2s/8-C^\prime)\log k$. Also, note that $\log f_{t,j}$ is Lipschitz with 
$$L=\max_{p\in[0,1]} \Bigg|\frac{f_{t,j}'(p)}{f_{t,j}(p)}\Bigg| = \frac{\displaystyle\max_{\lambda\in[-1,1]} |\lambda|}{\displaystyle\min_{p\in[0,1], \lambda\in[-1,1]} |1 + \lambda(p - 0.5)|} = 2.$$
Moreover, $\log f_{t,j}(p) \in [\log(0.5), \log(1.5)]$ and hence its absolute value is universally bounded by $M=\log 2$.
Now, we employ \Cref{lem:bound-mcd} with $M=\log 2$ and $L=2$ to obtain
\begin{align*}
   & P\!\left(
B_{k}^{(2)} \le c_{\alpha,T}+ 2C^\prime\log k+C^\prime\log 2
\right)\\
&
\le
\exp\!\left(
-\frac{2(\mu_k-2C^\prime\log k-C^\prime\log 2-c_{\alpha,T})^2}{C_vm_k+4/k}
\right)\\
&\le \exp\!\left(
-\frac{2((\delta^2s/8-C^\prime)\log k)^2}{C_v s\log k+4/k}
\right).
\end{align*}
 Because the logarithmic sequence $s \log k$ grows monotonically while $4/k$ strictly decreases to zero, there exists a finite critical index $k^\prime = \inf\left\{ k \ge 1 : \frac{4}{k} \le \ceil{s \log k} \right\}$ and hence
\begin{align*}
&\sum_{k=k_{T}}^{T-1}   \exp\!\left(
-\frac{2((\delta^2s/8-C^\prime)\log k)^2}{C_v s\log k+4/k}
\right)\\
&\leq\sum_{k=1}^{k^\prime}   \exp\!\left(
-\frac{2((\delta^2s/8-C^\prime)\log k)^2}{C_v s\log k+4/k}
\right)\\
&\qquad+\sum_{k=1}^{T-1}  \exp\!\left(-\frac{2(\delta^2s/8-C^\prime)^2\log k}{(C_v+1)s}
\right)\\
&=\sum_{k=1}^{k^\prime}  \exp\!\left(
-\frac{2((\delta^2s/8-C^\prime)\log k)^2}{C_v s\log k+4/k}
\right)+\sum_{k=1}^{T-1}k^{-\frac{2(\delta^2s/8-C^\prime)^2}{(C_v+1)s}}\\
&=\begin{cases}
    O(1), & 2(\delta^2s/8-C^\prime)^2>(C_v+1)s\\
    O(\log T), & 2(\delta^2s/8-C^\prime)^2=(C_v+1)s\\
    O\left(T^{1- \frac{2(\delta^2s/8-C^\prime)^2}{(C_v+1)s}}\right), &  2(\delta^2s/8-C^\prime)^2<(C_v+1)s.
\end{cases}
\end{align*}
Since $r_{T-1}=\mathbb{P}_{F_0,T,F_1}(\tau \ge T-1)\geq\mathbb{P}_{F_0,T,F_1}(\tau \ge T+s\log T)$, we have
$\limsup_{T\to\infty}(c_{\alpha,T})=-\log(\alpha\times\liminf_{T\to\infty} r_{T-1})\leq-\log(\alpha\times\liminf_{T\to\infty} \mathbb{P}_{F_0,T,F_1}(\tau \ge T+s\log T))\leq-\log(\alpha L),$
 for all $t$.
Note that here
 $\limsup_{T\to\infty}k_T\leq1+\limsup_{T\to\infty}\exp{(\frac{C^\prime\log2-\log(\alpha L)}{\delta^2s/8-C^\prime})}\leq1+\exp{(\frac{C^\prime\log2+c_{\alpha,T}}{\delta^2s/8-C^\prime})}<\infty.$
Thus, we have proved the desired result.
\end{proof}

\begin{proof}[Proof of \Cref{thm:upper-asymp-real}]
Let $m_n=\floor{e^{n/s}}$ for all $n\in\N$. We shall prove the result for the
conditioning event $T\le \tau \le T+m_T$. Since
\[
\liminf_{T\to\infty}\mathbb{P}_{F_0,T,F_1}(T\le \tau \le T+m_T)>0,
\]
it suffices to show that the numerator in the conditional expectation is
uniformly bounded.

Let $t\ge T$ be a candidate post-change time and write $k=t-T$. Recall that,
for the upper confidence set, candidate $t$ is tested using the backward
martingale ending at time $t-1$. Thus, for $k\ge 1$, the logarithm of the test
statistic for candidate $t=T+k$ satisfies
\[
\max_{i:1\leq i\leq T+k-1}\sum_{j=i}^{T+k-1}
\log(g_{T+k-1,j}(q^j_{T+k-1}))
\geq B_k^{(0)}+B_k^{(1)},
\]
where
\[
B_k^{(0)}
=\sum_{j=T-u_k}^{T-1}
\log(g_{T+k-1,j}(q^j_{T+k-1})),
\]
and
\[
B_k^{(1)}
=\sum_{j=T}^{T+k-1}
\log(g_{T+k-1,j}(q^j_{T+k-1})),
\]
with
\[
u_k=\inf\{n\in\N:m_n\ge k\}.
\]
The above inequality follows by retaining only the block
$j=T-u_k,\dots,T+k-1$ in the maximum. Since $m_n=\floor{e^{n/s}}$, we have
$u_k=s\log k+O(1)$ as $k\to\infty$. In particular, after increasing the finite
constant $k_1$ below if necessary, we may assume that $u_k\le k$ for all
$k\ge k_1$. All finitely many smaller values of $k$ will be absorbed into the
constant term.
Let
\[
\ell_k:=\log(k+1).
\]
By \Cref{assmp:upper-calib-real}, for $k\ge 1$,
\begin{align*}
B_k^{(1)}
&=\sum_{j=T}^{T+k-1}
\log(1+\lambda_{T+k-1,j}(q_{T+k-1}^j-0.5))\\
&\geq
\max_{\lambda\in[-1,1]}
\sum_{j=T}^{T+k-1}
\log(1+\lambda(q_{T+k-1}^j-0.5))
-C^\prime \log(k+1)\\
&\geq -C^\prime \ell_k,
\end{align*}
where the last inequality follows by taking $\lambda=0$. Again, by
\Cref{assmp:upper-calib-real},
\begin{align*}
B_k^{(0)}
&=\sum_{j=T-u_k}^{T-1}
\log(1+\lambda_{T+k-1,j}(q_{T+k-1}^j-0.5))\\
&\geq
\max_{\lambda\in[-1,1]}
\sum_{j=T-u_k}^{T-1}
\log(1+\lambda(q_{T+k-1}^j-0.5))
-C^\prime \log(u_k+1)\\
&\geq B_k^{(2)}-C^\prime \ell_k,
\end{align*}
where
\[
B_k^{(2)}
=
\sum_{j=T-u_k}^{T-1}
\log(1+\lambda^{**}_k(q_{T+k-1}^j-0.5)),
\]
with $\lambda^{**}_k$ as defined in \eqref{eq:lammbda-opt}. Hence, for all
$k\ge 1$,
\[
B_k^{(0)}+B_k^{(1)}
\ge B_k^{(2)}-2C^\prime \ell_k.
\]

Therefore,
\begin{align*}
&\mathbb{E}_{F_0,T,F_1}
\left(|\mathcal{C}_{up}\cap\{T,\cdots,\tau\}|
\mid T\le\tau \le T+m_T\right)\\
&\leq
\frac{
\sum_{t=T}^{T+m_T}
\mathbb{P}_{F_0,T,F_1}
(t\in \mathcal{C}_{up},\tau\ge t)
}{
\mathbb{P}_{F_0,T,F_1}(T\le\tau \le T+m_T)
}\\
&\leq
\frac{
1+
\sum_{k=1}^{m_T}
\mathbb{P}_{F_0,T,F_1}
\left(
B_k^{(0)}+B_k^{(1)}
\le -\log(\beta r_{T+k}), \tau\ge T+k
\right)
}{
\mathbb{P}_{F_0,T,F_1}(T\le\tau \le T+m_T)
}\\
&\leq
\frac{
1+
\sum_{k=1}^{m_T}
\mathbb{P}_{F_0,T,F_1}
\left(
B_k^{(2)}
\le -\log(\beta r_\tau)+2C^\prime \ell_k
\right)
}{
\mathbb{P}_{F_0,T,F_1}(T\le\tau \le T+m_T)
},
\end{align*}
because $r_t=\mathbb{P}_{F_0,\infty}(\tau\ge t)$ is non-increasing in $t$, and
on the event $\{\tau\ge T+k\}$ we have $r_{T+k}\ge r_\tau$.

Let $\mu_k=\mathbb{E}(B_k^{(2)})$. From \Cref{lem:kl}, for all sufficiently
large $k$,
\begin{equation}
\label{eq:mu_k-real-post}
\mu_k
\ge
\left(\frac{k\delta}{k+u_k}\right)^2 u_k
\ge \frac{\delta^2 u_k}{4},
\end{equation}
where the last inequality uses $u_k\le k$.

Now, using that $r_\tau$ is super-uniform under \Cref{assmp:detector}
(see \Cref{lem:rtau}), for any $c>0$,
\begin{align*}
&\mathbb{P}_{F_0,T,F_1}
\left(
B_k^{(2)}
\le -\log(\beta r_\tau)+2C^\prime \ell_k
\right)\\
&=
\mathbb{P}_{F_0,T,F_1}
\left(
B_k^{(2)}
\le -\log(\beta r_\tau)+2C^\prime \ell_k,\,
r_\tau\ge e^{-cu_k}
\right)\\
&\qquad+
\mathbb{P}_{F_0,T,F_1}(r_\tau<e^{-cu_k})\\
&\le
\mathbb{P}_{F_0,T,F_1}
\left(
B_k^{(2)}
\le cu_k-\log\beta+2C^\prime \ell_k
\right)
+
e^{-cu_k}.
\end{align*}
Since $u_k=s\log k+O(1)$, the second term satisfies
\[
e^{-cu_k}\le C k^{-cs}
\]
for a finite constant $C$ independent of $T$.
Choose
\[
c=\frac1s+\frac{\delta^2}{16}.
\]
By assumption,
\[
s>\frac{16(2C^\prime+1)}{\delta^2},
\]
and hence
\[
\frac{\delta^2s}{16}-2C^\prime-1>0.
\]
Define
\[
k_1
=
\inf\left\{
k\in\N:
\ell_k>
\frac{\log(1/\beta)+O(1)}
{\delta^2s/16-2C^\prime-1}
\right\},
\]
where the $O(1)$ term absorbs the harmless rounding constants coming from
$u_k=s\log k+O(1)$. Enlarging $k_1$ if needed, for all $k\ge k_1$ we have
\[
\mu_k-cu_k-2C^\prime \ell_k+\log\beta
>
\frac{\delta^2s}{8}\ell_k.
\]

We now split the sum into two regimes.

For $1\le k<k_1$, we trivially bound the probabilities by one:
\[
\sum_{k=1}^{k_1-1}
\mathbb{P}_{F_0,T,F_1}
\left(
B_k^{(2)}
\le cu_k-\log\beta+2C^\prime \ell_k
\right)
\le k_1.
\]

For $k\ge k_1$, applying \Cref{lem:bound-mcd} to the backward sum
$B_k^{(2)}$, which contains $u_k$ terms, with $L=2$ and $M=\log 2$, gives
\begin{align*}
&\mathbb{P}_{F_0,T,F_1}
\left(
B_k^{(2)}
\le cu_k-\log\beta+2C^\prime \ell_k
\right)\\
&\le
\exp\left(
-\frac{
2(\mu_k-cu_k+\log\beta-2C^\prime \ell_k)^2
}{
C_vu_k+4/k
}
\right),
\end{align*}
where $C_v=4+4(1+\log2)^2$. Therefore, for all $k\ge k_1$,
\begin{align*}
    &\mathbb{P}_{F_0,T,F_1}
\left(
B_k^{(2)}
\le cu_k-\log\beta+2C^\prime \ell_k
\right)\\
&
\le
\exp\left(
-\frac{
(\delta^2s\ell_k)^2
}{
32(C_vs\ell_k+4/k)
}
\right),
\end{align*}
where we again used $u_k=s\log k+O(1)$ and absorbed constants by increasing
$k_1$ if necessary.

Combining the bounds, we obtain
\begin{align*}
&\mathbb{E}_{F_0,T,F_1}
\left(|\mathcal{C}_{up}\cap\{T,\dots,\tau\}|
\mid T\le\tau \le T+m_T\right)\\
&\le
\frac{
1+k_1
+
\sum_{k=k_1}^{m_T}
\exp\left(
-\frac{
(\delta^2s\ell_k)^2
}{
32(C_vs\ell_k+4/k)
}
\right)
+
C\sum_{k=k_1}^{m_T} k^{-cs}
}{
\mathbb{P}_{F_0,T,F_1}(T\le\tau \le T+m_T)
}.
\end{align*}
Since
\[
cs=1+\frac{s\delta^2}{16}>1,
\]
we have
\[
\sum_{k=k_1}^{m_T}k^{-cs}=O(1).
\]
Moreover, since $\ell_k=\log(k+1)$ grows monotonically and $4/k\to0$, there
exists a finite $k^\prime$ such that $4/k\le s\ell_k$ for all $k\ge k^\prime$.
Hence,
\begin{align*}
&\sum_{k=k_1}^{m_T}
\exp\left(
-\frac{
(\delta^2s\ell_k)^2
}{
32(C_vs\ell_k+4/k)
}
\right)\\
&\le
O(1)
+
\sum_{k=1}^{m_T}
\exp\left(
-\frac{s\delta^4\ell_k}{32(C_v+1)}
\right)\\
&=
O(1)
+
\sum_{k=1}^{m_T}
(k+1)^{-\frac{s\delta^4}{32(C_v+1)}}\\
&=O(1),
\end{align*}
because
\[
s>\frac{32(C_v+1)}{\delta^4}.
\]
Finally, the denominator is bounded away from zero by assumption, and therefore
\[
\mathbb{E}_{F_0,T,F_1}
\left(|\mathcal{C}_{up}\cap\{T,\dots,\tau\}|
\mid T\le\tau \le T+m_T\right)
=O(1).
\]
This proves the desired result.
\end{proof}

\subsection{Auxiliary Lemmas}
For any $k\in[T-1],l\in[m_k-1]$, define
\begin{align}
\label{eq:lammbda-opt}
 \nonumber  & \lambda^{**}_k=\\
    &\begin{cases}
        \frac{2k\delta}{k+m_k}, ~~ \mathbb{P}(S(Z) > S(Y)) + \frac{1}{2}\mathbb{P}(S(Z) = S(Y)) = \frac{1}{2}+\delta\\
        -\frac{2k\delta}{k+m_k},   \mathbb{P}(S(Z) > S(Y)) + \frac{1}{2}\mathbb{P}(S(Z) = S(Y)) = \frac{1}{2}-\delta.
    \end{cases}
\end{align}
\begin{lemma}
\label{lem:kl}
    Assume that there exists a non-conformity score function $S(x)$ such that for independent draws $Y\sim F_0$ and $Z\sim F_1$, there exists a constant $\delta \in(0,1/2)$ such that: $|\mathbb{P}(S(Z) > S(Y)) + \frac{1}{2}\mathbb{P}(S(Z) = S(Y)) - \frac{1}{2}|= \delta.$ 
Then,
    $$\mathbb{E}[\log (1+\lambda^{**}_k(p^{T+l}_{T-k}-0.5))] \ge \left( \frac{k\delta}{k+m_k}\right)^2.$$
    consequently, for the optimal choice, $f_{T-k,T+l}(p)=1 + \lambda_{T-k,T+l}^*(p - 0.5)$ where $\lambda_{T-k,T+l}^* \in \arg\max_{\lambda \in [-1, 1]} \mathbb{E}\left[\log(1 + \lambda(p^{T+l}_{T-k} - 0.5))\right]$. Then,
    $$\mathbb{E}[\log f_{T-k,T+l}(p^{T+l}_{T-k})] \ge \left( \frac{k\delta}{k+m_k}\right)^2.$$
\end{lemma}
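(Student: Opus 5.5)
The plan is to compute the mean of the conformal $p$-value $p^{T+l}_{T-k}$ exactly. Combined with a quadratic lower bound on the logarithm, this gives the first claim; the second then follows from optimality of $\lambda^*$.

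\emph{Step 1 (mean of the $p$-value).} Write $j=T+l$ and $t=T-k$. Then $p^j_t$ ranks $S(X_j)$ among the $k+l+1$ scores $S(X_{T-k}),\dots,S(X_{T+l})$. Of these, the $k$ scores with indices $T-k,\dots,T-1$ are pre-change, and the $l+1$ scores with indices $T,\dots,T+l$ (including $j$ itself) are post-change. Under \Cref{assmp:indep}, the contributions to $\mathbb{E}[p^j_t]$ are as follows.
\begin{itemize}
\item Each pre-change index $i$ contributes $\mathbb{P}(S(Z)>S(Y))+\frac12\mathbb{P}(S(Z)=S(Y))=\frac12\pm\delta$. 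This uses that $U_{t,j}$ is independent with mean $\frac12$.
\item Each of the $l$ other post-change indices contributes exactly $\frac12$, since $X_i$ and $X_j$ are i.i.d.\ $F_1$ and hence the pair is exchangeable.
\item The self-comparison contributes $\mathbb{E}[U_{t,j}]=\frac12$.
\end{itemize}
Hence
\[
\mathbb{E}[p^j_t-\tfrac12]=\pm\frac{k\delta}{k+l+1},
\]
with the same sign as in the definition \eqref{eq:lammbda-opt} of $\lambda^{**}_k$.

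\emph{Step 2 (quadratic lower bound on the log).} Set $x=p^j_t-\frac12\in[-\frac12,\frac12]$ and $a=\lambda^{**}_k$. Since $\delta<\frac12$, we have $|a|=\frac{2k\delta}{k+m_k}\le 1$, so $|ax|\le\frac12$. For $y\ge-\frac12$ the elementary inequality $\log(1+y)\ge y-y^2$ holds. Applying it with $y=ax$ and using $x^2\le\frac14$ gives
\[
\mathbb{E}\log(1+ax)\ \ge\ a\,\mathbb{E}[x]-\frac{a^2}{4}
=\frac{2k^2\delta^2}{(k+m_k)(k+l+1)}-\frac{k^2\delta^2}{(k+m_k)^2}.
\]
Since $l\le m_k-1$, we have $k+l+1\le k+m_k$. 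The first term is therefore at least $2\bigl(\frac{k\delta}{k+m_k}\bigr)^2$, and the total is at least $\bigl(\frac{k\delta}{k+m_k}\bigr)^2$, which is the first claim.

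\emph{Step 3 (optimal calibrator).} Because $\lambda^{**}_k\in[-1,1]$ and $\lambda^*_{T-k,T+l}$ maximizes $\mathbb{E}\log(1+\lambda(p^{T+l}_{T-k}-\frac12))$ over $[-1,1]$, the optimal value is at least the value at $\lambda^{**}_k$. The second claim follows immediately.

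The only delicate point is Step 1. One must check that the tie-breaking term handles ties exactly: averaging over $U$ turns the tie indicator into the $\frac12\mathbb{P}(S(Z)=S(Y))$ term, which is precisely why $\delta$ is defined with that correction. One must also check that the sign convention of $\lambda^{**}_k$ matches the sign of the drift, so that $a\,\mathbb{E}[x]\ge 0$.
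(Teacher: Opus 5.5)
Your proposal is correct and follows essentially the same route as the paper. You compute $\mathbb{E}[p^{T+l}_{T-k}]=\frac12\pm\frac{k\delta}{k+l+1}$ by linearity over the self-comparison, the $l$ exchangeable post-change pairs and the $k$ pre-change pairs; you then apply $\log(1+y)\ge y-y^2$ for $y\ge-\frac12$ with $\mathbb{E}[x^2]\le\frac14$ and $\lambda^{**}_k=\pm\frac{2k\delta}{k+m_k}$, and conclude for $\lambda^*$ by optimality. The differences are cosmetic: you keep the exact mean $\frac{k\delta}{k+l+1}$ until the end and then use $k+l+1\le k+m_k$, whereas the paper first lower-bounds the mean by $\frac{k\delta}{k+m_k}$, and your explicit appeal to \Cref{assmp:indep} for the pre-/post-change comparisons makes precise a step the paper leaves implicit.
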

\begin{proof}
    We take the expectation of $p^j_t$ over the joint distribution of the data sequence and the randomizer $U_{t,j}$. By linearity of expectation:
  \begin{align*}
      \mathbb{E}[p^{T+l}_{T-k}] = \frac{1}{k+l+1} \sum_{i=T-k}^{T+l} &\Bigg( \mathbb{P}(S(X_{T+l}) > S(X_i))\\
      &+ \frac{1}{2}\mathbb{P}(S(X_{T+l}) = S(X_i)) \Bigg).
  \end{align*}

We consider two cases separately.

\textbf{Case 1: $\mathbb{P}(S(Z) > S(Y)) + \frac{1}{2}\mathbb{P}(S(Z) = S(Y)) = \frac{1}{2} + \delta.$} 

We partition the sum into three mutually exclusive groups of indices $i$:
    
    The test point itself ($i = {T+l}$): $\mathbb{P}(S(X_{T+l}) > S(X_{T+l})) + \frac{1}{2}\mathbb{P}(S(X_{T+l}) = S(X_{T+l})) = \frac{1}{2}.$
    
    Other post-change points ($T \le i < {T+l}$): Because $X_i, X_{T+l} \stackrel{i.i.d.}{\sim} F_1$, they are strictly exchangeable. By symmetry, $\mathbb{P}(S_{T+l} > S_i) = \mathbb{P}(S_i > S_{T+l})$. Since $\mathbb{P}(S_{T+l} > S_i) + \mathbb{P}(S_i > S_{T+l}) + \mathbb{P}(S_{T+l} = S_i) = 1$, it follows algebraically that $\mathbb{P}(S_{T+l} > S_i) + \frac{1}{2}\mathbb{P}(S_{T+l} = S_i) = \frac{1}{2}$. There are $l$ such points.
    
    Pre-change points ($T-k \le i < T$): Because $X_i \sim F_0$ and $X_{T+l} \sim F_1$, we have $\mathbb{P}(S_{T+l} > S_i) + \frac{1}{2}\mathbb{P}(S_{T+l} = S_i)=\frac{1}{2} + \delta$ by assumption. There are $k$ such points.
    
    Summing the expectations across all $k+l+1$ points:$$\mathbb{E}[p^{T+l}_{T-k}] = \frac{1}{k+l+1} \left[ \frac{1}{2} + \frac{l}{2} + k\left(\frac{1}{2} + \delta\right) \right] = \frac{1}{2} + \frac{k}{k+l+1}\delta.$$
    Let $Y = p_{T-k}^{T+l} - 0.5$. Because the conformal p-value satisfies $p_{T-k}^{T+l} \in [0, 1]$ almost surely, the shifted variable is deterministically bounded:$$Y \in [-0.5, 0.5].$$
    Since $l\leq m_k-1$,
    $$\mathbb{E}[Y] = \mathbb{E}[p_{T-k}^{T+l}] - 0.5 = \frac{k}{k+l+1}\delta\geq \frac{k\delta}{k+m_k} := \epsilon.$$
    Because $\delta \le 0.5$, we get $\epsilon \le 0.5$.
We have $\lambda^{**}_k = 2\epsilon$.
Because $0 < \epsilon \le 0.5$, our chosen $\lambda^{**}_k$ satisfies $\lambda^{**}_k \in (0, 1]$. 
Because $Y \in [-0.5, 0.5]$ and $\lambda^{**}_k\in (0, 1]$, their product strictly satisfies almost surely:$$ \lambda^{**}_kY \in [-0.5, 0.5].$$
In this restricted domain $[-0.5, 0.5]$, we use the inequality:$$\log(1+x) \ge x - x^2 \quad \text{for all } x \ge -0.5.$$
Therefore,
$$\mathbb{E}[\log(1 + \lambda^{**}_kY)] \ge \mathbb{E}[\lambda^{**}_kY - (\lambda^{**}_k)^2 Y^2] \geq \lambda^{**}_k\epsilon - (\lambda^{**}_k)^2 \mathbb{E}[Y^2].$$
Because $Y \in [-0.5, 0.5]$, its second moment is deterministically bounded by its maximum possible squared value: $\mathbb{E}[Y^2] \le (0.5)^2 = 0.25$. Since $(\lambda^{**}_k)^2 > 0$, we have
$$\mathbb{E}[\log(1 + \lambda^{**}_kY)] \ge \lambda^{**}_k\epsilon - 0.25 (\lambda^{**}_k)^2=2\epsilon^2- 0.25\times(2\epsilon)^2=\epsilon^2.$$
Because $\lambda_{T-k,T+l}^*$ is the maximizer over $[-1,1]$, it must perform at least as well as $\lambda^{**}_k= 2\epsilon\in[-1,1]$, and hence:
\begin{align*}
   & \mathbb{E}[\log f_{T-k,T+l}(p^{T+l}_{T-k})] =\mathbb{E}\left[\log(1 + \lambda^*_{T-k,T+l} Y)\right] \\
    &\ge \mathbb{E}[\log(1 + \lambda^{**}_kY)]\ge \epsilon^2 = \left( \frac{k\delta}{k+m_k} \right)^2.
\end{align*}

\textbf{Case 2: $\mathbb{P}(S(Z) > S(Y)) + \frac{1}{2}\mathbb{P}(S(Z) = S(Y)) = \frac{1}{2} - \delta.$} 

We similarly obtain
$$\mathbb{E}[p^{T+l}_{T-k}] = \frac{1}{k+l+1} \left[ \frac{1}{2} +\frac{l}{2}+ k\left(\frac{1}{2} - \delta\right) \right] = \frac{1}{2} - \frac{k\delta}{k+l+1}.$$

The exact expectation of $Y$ is:$$\mathbb{E}[Y] = \mathbb{E}[p_{T-k}^{T+l}] - 0.5 = -\frac{k}{k+l+1}\delta\leq -\frac{k}{k+m_k}\delta =- \epsilon.$$
    Because $\delta \le 0.5$, we get $\epsilon \le 0.5$.
We have $\lambda^{**}_k = -2\epsilon$.
Because $0 < \epsilon \le 0.5$, our chosen $\lambda^{**}_k$ satisfies $\lambda^{**}_k \in [-1, 0]$. 
Because $Y \in [-0.5, 0.5]$ and $\lambda^{**}_k \in [-1, 0)$, their product strictly satisfies almost surely:$$ \lambda^{**}_k Y \in [-0.5, 0.5].$$
In this restricted domain $[-0.5, 0.5]$, we use the inequality:$$\log(1+x) \ge x - x^2 \quad \text{for all } x \ge -0.5.$$
Since $\lambda^{**}_k<0$,
$\mathbb{E}[\log(1 + \lambda^{**}_k Y)] \ge \mathbb{E}[\lambda^{**}_k Y - (\lambda^{**}_k)^2 Y^2] \geq -\lambda^{**}_k \epsilon - (\lambda^{**}_k)^2 \mathbb{E}[Y^2].$
Because $Y \in [-0.5, 0.5]$,  $\mathbb{E}[Y^2] \le (0.5)^2 = 0.25$.

Since $(\lambda^{**}_k)^2 > 0$, we have
$\mathbb{E}[\log(1 + \lambda^{**}_k Y)] \ge -\lambda^{**}_k\epsilon - 0.25 (\lambda^{**}_k)^2=2\epsilon^2- 0.25\times(-2\epsilon)^2=\epsilon^2.$

Because $\lambda_{T-k,T+l}^*$ is the maximizer over $[-1,1]$, it must perform at least as well as $\lambda^{**}_k= -2\epsilon\in[-1,1]$, and hence:
\begin{align*}
    &\mathbb{E}[\log f_{T-k,T+l}(p^{T+l}_{T-k})] =\mathbb{E}\left[\log(1 + \lambda^*_{T-k,T+l} Y)\right] \\
    &\ge \mathbb{E}[\log(1 + \lambda_k^{**} Y)]\ge \epsilon^2 = \left( \frac{k\delta}{k+m_k} \right)^2.
\end{align*}
\end{proof}

\begin{lemma}
\label{lem:bound-mcd}
Assume that there exists a non-conformity score function $S(x)$ such that scores are all distinct almost surely. Additionally, assume that, for each $t$ and $j\geq t$, the calibrator $f_{t,j}$ is such that $\log f_{t,j}$ is universally bounded by $M$ and is a $L$-Lipschitz function. Then, under \Cref{assmp:indep}, for any $u>0$, $k\in[T-1]$ and $m_k\leq k$,  $S_k=\sum_{i=T}^{T+m_k-1}\log f_{T-k,i}\!\left(p^{\,i}_{T-k}\right)$ satisfies
\[
P\!\left(
S_k-\mathbb E S_k \le -u
\right)
\le
\exp\!\left(
-\frac{2u^2}{C_vm_k+L^2/k}
\right),
\]
where $C_v=L^2 + (2M+L)^2$.
\end{lemma}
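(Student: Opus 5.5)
The plan is to apply McDiarmid's bounded-differences inequality directly to $S_k$, viewed as a function of independent inputs. Write $t=T-k$. Each p-value $p^{i}_{t}$ for $T\le i\le T+m_k-1$ is a deterministic function of the observations $X_t,\dots,X_{i}$ and the randomizer $U_{t,i}$. Hence $S_k$ is a function of the independent collection
\[
X_{T-k},\dots,X_{T-1},\quad X_T,\dots,X_{T+m_k-1},\quad U_{t,T},\dots,U_{t,T+m_k-1}.
\]
Independence holds by \Cref{assmp:indep} and because the $U$'s are drawn independently. I treat the calibrators $f_{t,i}$ as fixed, non-random functions, as in the oracle choice of \Cref{assmp:lower-calib}. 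McDiarmid then gives
\[
P(S_k-\mathbb E S_k\le -u)\le \exp\!\left(-\frac{2u^2}{\sum c^2}\right),
\]
so the whole proof reduces to bounding the sum of squared bounded-difference constants $c$ by $C_v m_k+L^2/k$.

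\textbf{Bounded differences.} I compute the constant for each type of input.

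First, a \emph{pre-change point} $X_r$ with $T-k\le r\le T-1$. It enters each $p^{T+l}_t$ only through one comparison indicator among $k+l+1$ terms. Since scores are a.s.\ distinct, changing it moves $p^{T+l}_t$ by at most $1/(k+l+1)\le 1/k$. Because $\log f$ is $L$-Lipschitz, the summand moves by at most $L/k$. Summing over the $m_k$ terms gives $c_r\le Lm_k/k$. Over the $k$ such points,
\[
\sum_r c_r^2\le k\cdot \frac{L^2m_k^2}{k^2}=\frac{L^2m_k^2}{k}\le L^2 m_k,
\]
where the last step uses $m_k\le k$.

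Second, a \emph{post-change point} $X_{T+l}$. It determines its own summand, which can change by at most $2M$ since $|\log f|\le M$. It also enters each later p-value $p^{T+l'}_t$ ($l'>l$) through one indicator, contributing at most $L\sum_{l'}1/(k+l'+1)\le Lm_k/k\le L$. So $c\le 2M+L$, and over the $m_k$ such points the total is at most $(2M+L)^2m_k$.

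Third, a \emph{randomizer} $U_{t,T+l}$. Since scores are a.s.\ distinct, the only tie in the p-value formula is the self-comparison. The randomizer therefore enters only $p^{T+l}_t$, with coefficient $1/(k+l+1)$, giving $c\le L/(k+l+1)$. Summing squares,
\[
L^2\sum_{l\ge0}\frac{1}{(k+l+1)^2}\le \frac{L^2}{k}.
\]

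\textbf{Conclusion.} Adding the three contributions gives
\[
\sum c^2\le L^2m_k+(2M+L)^2m_k+\frac{L^2}{k}=C_vm_k+\frac{L^2}{k},
\]
with $C_v=L^2+(2M+L)^2$, which is the claimed bound.

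\textbf{Main obstacle.} The delicate step is justifying the bounded-difference constants almost surely. Distinct scores make the indicators change by at most one comparison, and the argument must be phrased so the bound holds on a full-measure set, or by a version of McDiarmid allowing a null exceptional set. The sharper pre-change bound $Lm_k/k$ rather than $L$ is what keeps that block from contributing order $k$, and this is where $m_k\le k$ is essential.
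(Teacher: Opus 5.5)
Your proposal is correct and follows essentially the same route as the paper. Both apply McDiarmid's inequality to the independent vector of the $k+m_k$ observations and $m_k$ randomizers, with the same three bounded-difference constants ($Lm_k/k$ for pre-change points, $2M+Lm_k/k\le 2M+L$ for post-change points, $L/(k+l+1)$ for randomizers) and the same use of $m_k\le k$ to reach $C_vm_k+L^2/k$; the null-set issue you flag is handled in the paper as you suggest, by defining $G$ through the tie-free representation $p_t^i=\frac{1}{i-t+1}\bigl[\sum_{r=t}^{i-1}\mathbf{1}\{S(X_i)>S(X_r)\}+U_{t,i}\bigr]$, so the bounded differences hold everywhere and $S_k=G(Z)$ almost surely.
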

\begin{proof}

Fix $t=T-k$ and define the random vector
\[
Z :=
(X_{T-k},\dots,X_{T+m_k-1},
U_{t,T},\dots,U_{t,T+m_k-1}),
\]
where the auxiliary variables
$\{U_{t,i}\}$ are i.i.d.\ $\mathrm{Unif}(0,1)$
and independent of all observations.
Under $P_{F_0,T,F_1}$, all coordinates of $Z$
are mutually independent.
Since the scores are almost surely unique,
each conformal $p$-value admits the representation
\[
p_t^i
=
\frac{1}{i-t+1}
\left[
\sum_{r=t}^{i-1}\mathbf 1\{S(X_i)>S(X_r)\}
+
U_{t,i}
\right],
\]
hence there exists a deterministic measurable function $G$
such that
\[
S_k = G(Z).
\]
Therefore, McDiarmid's inequality applies once we bound the
effect of modifying one coordinate of $Z$.
Define
\[
c_n
:=
\sup_{z,z'}
|G(z)-G(z')|,
\]
where $z,z'$ differ only in coordinate $n$.

We bound sensitivities separately for observations and
auxiliary uniforms.

\textbf{Case 1: perturbing a pre-change observation.}

Let $T-k \le j < T$.
Only rank comparisons involving $X_j$ are affected. Let $\tilde p_t^i$ denote the new p-value obtained when $X_j$ is replaced by $\tilde X_j$.
For each $i\ge T$,
\[
|p_t^i-\tilde p_t^i|
\le \frac{1}{i-t+1}.
\]
By Lipschitz continuity,
\[
|\log f_{t,j}(p_t^i)-\log f_{t,j}(\tilde p_t^i)|
\le
\frac{L}{i-t+1}.
\]
 Note that $X_j$ corresponds to $n=j-T+k+1$-th coordinate of $Z$. Hence
\[
c_{j-T+k+1}
\le
L\sum_{i=T}^{T+m_k-1}\frac{1}{i-t+1}
=
L\sum_{r=k+1}^{k+m_k}\frac1r
\le
\frac{m_kL}{k}.
\]
Thus
\[
\sum_{n=1}^{k}c_n^2
\le
\frac{m_k^2 L^2}{k^2}\times k=\frac{m_k^2 L^2}{k}.
\]

\textbf{Case 2: perturbing a post-change observation.}
Let $T\le j<T+m_k$ and set $r=j-t+1$.

\textbf{Direct effect.}
The term $\log f_{t,j}(p_t^i)$ can get changed arbitrarily. So, we have
$|\log f_{t,j}(p_t^i)-\log f_{t,j}(\tilde p_t^i)|\leq 2M$.

\textbf{Propagation effect.}
For every $i>j$,
\[
|p_t^i-\tilde p_t^i|
\le \frac{1}{i-t+1},
\]
so
\[
|\log f_{t,j}(p_t^i)-\log f_{t,j}(\tilde p_t^i)|
\le
\frac{L}{i-t+1}.
\]
 Note that $X_j$ corresponds to $n=j-T+k+1$-th coordinate of $Z$. Hence
\[
c_{j-T+k+1}
\le
2M
+
L\sum_{i=j+1}^{T+m_k-1}\frac{1}{i-t+1}.
\]
Because $i > j\geq T$, each term is strictly less than $1/k$. There are at most $m_k - 1$ terms in this propagation sum. Hence:$$c_{j-T+k+1} \le 2M + \frac{m_k L}{k}.$$
Summing the squared deviations over the $m_k$ post-change candidates:$$\sum_{n=k+1}^{k+m_k} c_n^2 \le m_k \left( 2M + \frac{m_k L}{k} \right)^2.$$

\textbf{Case 3: perturbing an auxiliary uniform.}
For fixed $i\in\{T,\cdots,T+m_k-1\}$,
changing  $U_{t,i}$ affects only the single
$p$-value $p_t^i$. Therefore,
\[
|p_t^i-\tilde p_t^i|
\le \frac{1}{i-T+k+1},
\]
and hence for $n=k+m_k+i-T+1$,
\[
c_{k+m_k+i-T+1} \le \frac{L}{i-T+k+1}.
\]
Summing,
\[
\sum_{n=k+m_k+1}^{k+2m_k} c_n^2
\le
L^2\sum_{r=k+1}^{k+m_k}\frac1{r^2}
\le
\frac{L^2m_k}{k^2}.
\]
\textbf{Total variance proxy.}
Combining all three contributions, the total variance proxy $V_k := \sum_{n=1}^{k+2m_k} c_n^2$ is exactly bounded by:$$V_k
\le
\frac{m_k^2 L^2}{k}
+
m_k
\left(
2M+\frac{m_k L}{k}
\right)^2
+
\frac{L^2m_k}{k^2}.$$
Since $m_k \le k$, therefore, 
$$V_k
\le
m_k \left[ L^2 + (2M+L)^2 \right] + \frac{L^2}{k}=C_vm_k+L^2/k.$$
\textbf{Application of McDiarmid's inequality.}
Since $S_k=G(Z)$ is a function of independent coordinates
with bounded differences,
McDiarmid's inequality gives
\[
P\!\left(
S_k-\mathbb E S_k \le -u
\right)
\le
\exp\!\left(
-\frac{2u^2}{C_vm_k+L^2/k}
\right).
\]
Note: In McDiarmid, if we only use boundedness, the variance bound would be $O(k)$. Using both boundedness and Lipschitz continuity, we derived a variance bound of $O(m_k)$. This is much better than the naive $O(k)$ bound, especially when $m_k<<k$.
\end{proof}

\begin{lemma}
\label{lem:rtau}
Under \Cref{assmp:detector}, the random variable $r_\tau$ satisfies
\[
\mathbb{P}_{F_0,T,F_1}(r_\tau \le x) \le x,\qquad \forall x \in [0,1].
\]
Equivalently, $r_\tau$ stochastically dominates a $\mathrm{Unif}(0,1)$ random variable.
\end{lemma}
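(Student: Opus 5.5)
The plan is to use the fact that $r_t=\P_{F_0,\infty}(\tau\ge t)$ is the survival function of $\tau$ under no change. Then $r_\tau$ is a ``survival-function transform'' of $\tau$, and the probability integral transform argument applies, with \Cref{assmp:detector} handling the switch from the no-change law to $\P_{F_0,T,F_1}$.

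First, fix $x\in[0,1]$. Since $t\mapsto r_t$ is non-increasing in $t$, the set $\{t\in\N\cup\{\infty\}: r_t\le x\}$ is an up-set. Let $t_x=\inf\{t: r_t\le x\}$, with $t_x=\infty$ if the set is empty (in which case the claim is trivial, up to treating $\tau=\infty$ with the natural convention $r_\infty=\lim_t r_t=\P_{F_0,\infty}(\tau=\infty)$). Then $\{r_\tau\le x\}=\{\tau\ge t_x\}$, at least on $\{\tau<\infty\}$. If the convention at $\infty$ makes $r_\infty>x$, the event $\{\tau=\infty\}$ is excluded, and that exclusion only helps the bound.

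Second, I would compute
\[
\P_{F_0,T,F_1}(r_\tau\le x)\le \P_{F_0,T,F_1}(\tau\ge t_x)\le \P_{F_0,\infty}(\tau\ge t_x)=r_{t_x}\le x,
\]
where the middle inequality is exactly \Cref{assmp:detector} and the last holds by the definition of $t_x$, because the infimum is attained on the integers.

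The only delicate point is bookkeeping at $\tau=\infty$ and at the infimum. Attainment is automatic since the index set is discrete. For $\tau=\infty$, I would either adopt the convention $r_\infty=\P_{F_0,\infty}(\tau=\infty)$ and note $\{r_\infty\le x\}$ forces $\{\tau\ge t_x\}$, or observe that the lemma is only used on events where $\tau<\infty$. The stated equivalence with stochastic dominance over $\mathrm{Unif}(0,1)$ is then just a restatement of the inequality $\P(r_\tau\le x)\le x$ for all $x$.
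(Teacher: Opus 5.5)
Your proposal is correct and is essentially the paper's proof: take the first index $m$ at which the non-increasing sequence $r_t$ drops to or below $x$, identify $\{r_\tau\le x\}$ with $\{\tau\ge m\}$, apply \Cref{assmp:detector}, and conclude with $r_m\le x$. Your extra bookkeeping at $\tau=\infty$ is more careful than the paper, which leaves $r_\infty$ implicit. One small correction: when the index set is empty and $r_\infty=x$, the claim is not quite trivial, but it follows from $\P_{F_0,T,F_1}(\tau=\infty)\le\P_{F_0,\infty}(\tau=\infty)$, which you get by letting $t\to\infty$ in \Cref{assmp:detector}.
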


\begin{proof}
Since
\[
r_t=\mathbb P_{F_0,\infty}(\tau\ge t),
\]
the sequence $(r_t)_{t\ge1}$ is nonincreasing. 
Now fix $x\in[0,1]$, and define
\[
m:=\min\{t\in\mathbb N:\ r_t\le x\},
\]
with the convention $m=\infty$ if the set is empty. Since $(r_t)$ is nonincreasing,
\[
\{r_\tau\le x\}=\{\tau\ge m\}.
\]
Therefore,
\[
\mathbb P_{F_0,T,F_1}(r_\tau\le x)=\mathbb P_{F_0,T,F_1}(\tau\ge m)\leq \mathbb P_{F_0,\infty}(\tau\ge m)=r_m,
\]
where the inequality follows from \Cref{assmp:detector}. By definition of $m$, we have $r_m\le x$. Hence,
\[
\mathbb P_{F_0,T,F_1}(r_\tau\le x)\le x.
\]
This completes the proof.
\end{proof}

\section{Additional theoretical result}
\label{sec:app-theory}
\begin{theorem}
\label{thm:score-tv}
Let $Y \sim P_0$ and $Z \sim P_1$ be independent random variables on a measurable space $(\mathcal X,\mathcal F)$. Then
$\frac{1}{2}\operatorname{TV}(P_1,P_0)\le\sup_{S}\left|P(S(Z)>S(Y))+\frac12 P(S(Z)=S(Y))-\frac12\right|
\le\operatorname{TV}(P_1,P_0),$
where the supremum is over all measurable functions
$S:\mathcal X \to \mathbb R$. Moreover,  every strictly increasing transform of
\[
L(x):=\frac{dP_1}{dP_0+dP_1}(x)
\]
attains the supremum.
\end{theorem}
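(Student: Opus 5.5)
Let $\mu=P_0+P_1$ with densities $p_0=dP_0/d\mu$ and $p_1=dP_1/d\mu$, so that $L=p_1/(p_0+p_1)\in[0,1]$. For any measurable $S$, set
\[
\Delta(S):=P(S(Z)>S(Y))+\tfrac12P(S(Z)=S(Y))-\tfrac12 .
\]
The plan is to rewrite $\Delta(S)$ as an antisymmetric double integral:
\[
\Delta(S)=\tfrac12\iint \phi_S(x,y)\,p_1(x)p_0(y)\,d\mu(x)d\mu(y),
\]
with $\phi_S(x,y)=\mathrm{sgn}(S(x)-S(y))\in\{-1,0,1\}$. This holds because $\mathbf 1\{S(x)>S(y)\}+\tfrac12\mathbf 1\{S(x)=S(y)\}-\tfrac12=\tfrac12\phi_S$. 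Since $\phi_S$ is antisymmetric, we may replace $p_1(x)p_0(y)$ by its antisymmetrization:
\[
\Delta(S)=\tfrac14\iint\phi_S(x,y)\,[p_1(x)p_0(y)-p_1(y)p_0(x)]\,d\mu\,d\mu .
\]

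\emph{Upper bound and maximizer.} Since $|\phi_S|\le1$,
\[
|\Delta(S)|\le \tfrac14\iint|p_1(x)p_0(y)-p_1(y)p_0(x)|\,d\mu\,d\mu ,
\]
with equality exactly when $\phi_S$ matches the sign of the integrand wherever the integrand is nonzero. The sign of $p_1(x)p_0(y)-p_1(y)p_0(x)$ equals the sign of $L(x)-L(y)$, because
\[
p_1(x)p_0(y)-p_1(y)p_0(x)=(p_0+p_1)(x)\,(p_0+p_1)(y)\,(L(x)-L(y)).
\]
Any strictly increasing transform $S$ of $L$ satisfies $\phi_S=\mathrm{sgn}(L(x)-L(y))$, so it attains the maximal value $D:=\tfrac14\iint|\cdots|$. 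The sign of $\Delta(S)$ can be flipped by replacing $S$ with $-S$, so the supremum of $|\Delta|$ equals $D$.

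It then remains to bound $D$ above and below by TV, writing $\mathrm{TV}=\tfrac12\int|p_1-p_0|\,d\mu$.

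\emph{Upper bound $D\le \mathrm{TV}$.} Write
\[
p_1(x)p_0(y)-p_1(y)p_0(x)=p_1(x)\,(p_0(y)-p_1(y))+p_1(y)\,(p_1(x)-p_0(x)).
\]
Integrating the absolute value gives $\iint|\cdots|\le 2\int|p_1-p_0|=4\,\mathrm{TV}$, so $D\le\mathrm{TV}$.

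\emph{Lower bound.} The cleanest route is to exhibit a specific $S$ rather than work with $D$ directly. Take $S=\mathbf 1_A$ with $A=\{p_1>p_0\}$, so that $P_1(A)-P_0(A)=\mathrm{TV}$. Then
\[
\Delta(S)=P_1(A)P_0(A^c)+\tfrac12\big[P_1(A)P_0(A)+P_1(A^c)P_0(A^c)\big]-\tfrac12 .
\]
Let $a=P_1(A)$ and $b=P_0(A)$. The expression simplifies to
\[
a(1-b)+\tfrac12\big[ab+(1-a)(1-b)\big]-\tfrac12=\tfrac12(a-b)=\tfrac12\mathrm{TV}.
\]
Hence $\sup_S|\Delta(S)|\ge\tfrac12\mathrm{TV}$. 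Combined with the upper bound and the attainment by increasing transforms of $L$, this proves the theorem.

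The main obstacle is the measure-theoretic bookkeeping in the antisymmetrization step, in particular handling ties where $L(x)=L(y)$. There the integrand vanishes, so the value of $\phi_S$ on ties is irrelevant and equality still holds. The remaining steps are routine algebra.
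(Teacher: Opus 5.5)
Your proof is correct. The lower bound via $S=\mathbf 1_A$ is the paper's argument. The attainment claim is also the paper's: you antisymmetrize the double integral and match $\mathrm{sgn}(S(x)-S(y))$ with $\mathrm{sgn}(L(x)-L(y))$. Your factorization $p_1(x)p_0(y)-p_1(y)p_0(x)=(p_0+p_1)(x)(p_0+p_1)(y)(L(x)-L(y))$ is the paper's kernel $f_1(z)-f_1(y)$ in disguise, since $p_0+p_1=1$ $\mu$-a.e. when $\mu=P_0+P_1$.

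Where you genuinely differ is the upper bound. The paper bounds $|\Gamma(S)|$ by $\operatorname{TV}(P_1,P_0)$ for each $S$ separately. It writes $\Gamma(S)=\tfrac12\bigl(\mathbb E_{P_0\times P_1}[h]-\mathbb E_{P_0\times P_0}[h]\bigr)$ with $h=\mathrm{sgn}(S(z)-S(y))$. It then invokes the variational characterization of total variation together with $\operatorname{TV}(P_0\times P_1,P_0\times P_0)=\operatorname{TV}(P_1,P_0)$. This needs no densities and is independent of the maximizer analysis. You instead first establish the exact value $\sup_S|\Delta(S)|=D$ from the antisymmetrized representation, and then bound $D$ by splitting the kernel algebraically.

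Your route is more unified: a single identity yields both the maximizer claim and the upper bound, and it gives a closed form for the supremum. It also makes the bound $|\Delta(S)|\le D$ for every $S$ explicit, whereas the paper's maximizer paragraph only argues about matching signs. The paper's route is shorter and more elementary if one only wants the two-sided TV bounds.
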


\begin{proof}
For a measurable score function $S:\mathcal X \to \mathbb R$, define
\[
\Gamma(S):=P(S(Z)>S(Y))+\frac12 P(S(Z)=S(Y))-\frac12.
\]
Since
\[
P(S(Z)>S(Y))+P(S(Z)<S(Y))+P(S(Z)=S(Y))=1,
\]
we obtain
\begin{align*}
\Gamma(S)&=P(S(Z)>S(Y))+\frac12P(S(Z)=S(Y))-\frac12\\
&=\frac12\Bigl(P(S(Z)>S(Y))-P(S(Z)<S(Y))\Bigr).
\end{align*}
Hence
\[
|\Gamma(S)|=\frac12\left|P(S(Z)>S(Y))-P(S(Z)<S(Y))\right|.
\]
\textbf{Upper bound. } We first prove the upper bound. 
Define
\[
h(y,z):=\operatorname{sgn}(S(z) - S(y))=\mathds{1}(S(z)>S(y))-\mathds{1}(S(z)<S(y)).
\]
Then $|h(u)|\le 1$ for every $u \in \mathbb R$. Moreover,
\begin{align}
\label{eq:gammas}
  \nonumber  \Gamma(S)&=\frac12 (\mathbb E_{(Y,Z)\sim P_0\times P_1}[h(Y,Z)]\\
    &=\frac12 (\mathbb E_{(Y,Z)\sim P_0\times P_1}[h(Y,Z)]-\mathbb E_{(Y,Z)\sim P_0\times P_0}[h(Y,Z)]),
\end{align}
because $\mathbb E_{(Y,Z)\sim P_0\times P_0}[h(Y,Z)]=0$.
Using the variational characterization of total variation distance,
\begin{align*}
    &\mathbb E_{(Y,Z)\sim P_0\times P_1}[h(Y,Z)]-\mathbb E_{(Y,Z)\sim P_0\times P_0}[h(Y,Z)]\\
    &
\le
2\|h\|_\infty \operatorname{TV}(P_0\times P_1, P_0\times P_0).
\end{align*}
Since $\|h\|_\infty \le 1$ and $\operatorname{TV}(P_0\times P_1, P_0\times P_0)=\operatorname{TV}( P_1,  P_0)$, it follows that
\[
|\Gamma(S)|
\le
\operatorname{TV}(P_1,P_0),
\]
i.e., $\sup_{S}\left|P(S(Z)>S(Y))+\frac12 P(S(Z)=S(Y))-\frac12\right|
\le\operatorname{TV}(P_1,P_0).$

\textbf{Lower bound. } We now prove the lower bound.
Recall that
\[
\operatorname{TV}(P_1,P_0)=\sup_{A\in\mathcal F}|P_1(A)-P_0(A)|.
\]
Fix a measurable set $A \in \mathcal F$, and define
\[
S(x):=\mathbf 1_A(x).
\]
Let $a:=P_1(A),$ and $b:=P_0(A).$ Since $S$ takes values in $\{0,1\}$,
\[
P(S(Z)>S(Y))=P(S(Z)=1,S(Y)=0)=a(1-b),
\]
and
\begin{align*}
&P(S(Z)=S(Y))\\
&=P(S(Z)=1,S(Y)=1) +P(S(Z)=0,S(Y)=0) \\
&=ab+(1-a)(1-b).
\end{align*}
Therefore,
\begin{align*}
\Gamma(S)
&=a(1-b)+\frac12\bigl(ab+(1-a)(1-b)\bigr)-\frac12 =\frac12(a-b).
\end{align*}
Consequently,
\[
|\Gamma(S)|=
\frac12 |P_1(A)-P_0(A)|.
\]

Taking the supremum over measurable sets $A$ gives
\(
\sup_S |\Gamma(S)|\ge\frac12 \operatorname{TV}(P_1,P_0).
\)
Combining the upper and lower bounds completes the proof.

\textbf{Scores attaining the supremum.}
Let $\mu = P_0 + P_1$. By definition, $P_0 \ll \mu$ and $P_1 \ll \mu$. By the Radon-Nikodym theorem, the derivatives (densities) exist everywhere on $\mathcal{X}$:$$f_1(x) = \frac{dP_1}{d\mu}(x),~f_0(x) = \frac{dP_0}{d\mu}(x).$$
From \eqref{eq:gammas}
$$\Gamma(S) = \frac{1}{2} \int_{\mathcal{X}} \int_{\mathcal{X}} \operatorname{sgn}(S(z) - S(y)) f_1(z) f_0(y) d\mu(z) d\mu(y).$$
Because $z$ and $y$ are simply dummy variables of integration, we can swap them without changing the value of the integral:$$\Gamma(S) = \frac{1}{2} \int_{\mathcal{X}} \int_{\mathcal{X}} \operatorname{sgn}(S(y) - S(z)) f_1(y) f_0(z) d\mu(y) d\mu(z).$$
Since the signum function is strictly antisymmetric, $\operatorname{sgn}(S(y) - S(z)) = -\operatorname{sgn}(S(z) - S(y))$. Substituting this gives us a second expression for $\Gamma(S)$:$$\Gamma(S) = -\frac{1}{2} \int_{\mathcal{X}} \int_{\mathcal{X}} \operatorname{sgn}(S(z) - S(y)) f_0(z) f_1(y) d\mu(z) d\mu(y).$$
Now, adding the first integral and the second integral together (which equals $2\Gamma(S)$), and dividing by $2$, we get $\Gamma(S) =$
$$ \frac{1}{4} \int_{\mathcal{X}} \int_{\mathcal{X}} \operatorname{sgn}(S(z) - S(y)) \Big[f_1(z) f_0(y) - f_0(z) f_1(y)\Big] d\mu(z) d\mu(y).$$
Note that by definition, $f_0(x) + f_1(x) = 1 \quad \mu\text{-a.e.}$. Therefore,
\begin{align*}
    &f_1(z) f_0(y) - f_0(z) f_1(y)\\
    &= f_1(z) \big(1 - f_1(y)\big) - \big(1 - f_1(z)\big) f_1(y)= f_1(z) - f_1(y).
\end{align*}
Substitute this back into our integral for $\Gamma(S)$:$$\Gamma(S) = \frac{1}{4} \int_{\mathcal{X}} \int_{\mathcal{X}} \operatorname{sgn}(S(z) - S(y)) \Big[f_1(z) - f_1(y)\Big] d\mu(z) d\mu(y).$$
To achieve the supremum over all measurable functions $S$, the integrand must be maximized almost everywhere. This occurs if and only if the sign of our scoring function's difference perfectly matches the sign of the density difference:$$\operatorname{sgn}(S(z) - S(y)) = \operatorname{sgn}(f_1(z) - f_1(y)) \quad \mu\text{-a.e.}$$By definition, this equality holds if we simply set our scoring function to be the density itself:$$S^*(x) = f_1(x) = L(x).$$
Furthermore, because the $\operatorname{sgn}$ function only cares about strict ordinal ranking, any strictly increasing transformation $T: \mathbb{R} \to \mathbb{R}$ applied to $L(x)$ will preserve the exact same inequality:$$L(z) > L(y) \iff T(L(z)) > T(L(y)).$$
Therefore, every strictly increasing transformation of $L(x) = \frac{dP_1}{dP_0 + dP_1}(x)$ achieves the supremum.
\end{proof}

\section{A detection-algorithm-agnostic approach to estimate $\P_{F_0,\infty}[\tau\geq t]$}
\label{sec:app-rt-traing-data}

In \Cref{sec:rt}, we assumed that the change detector is distribution-free. However, one may wish to use a change detection procedure that is not distribution-free while still seeking distribution-free post-detection guarantees.
One possible approach is to assume access to a historical sample
\[Y_1,\cdots,Y_m \sim F_0\]
where $m$ can be any integer, but it should ideally be large (to have power). 
We assume that the historical data is independent of the sample $X_1, X_2,\cdots, X_\tau$. This is a common assumption in the sequential changepoint detection literature.
For each permutation $\pi$ of the historical data, run the change detector and let $\tau^\pi$ be the stopping time. Define
\begin{equation}
    q_t^\pi=\begin{cases}
        \mathds{1}(\tau^\pi \geq t), &\text{ if } t\leq m\\
        0,&\text{ if } t> m.
    \end{cases}
\end{equation}
Then define
\begin{equation}
\label{rt}
    \hat{r}_t=\frac{1}{m!}\sum_{\pi}q_t^\pi.
\end{equation}
\begin{proposition}
 The estimator $\hat{r}_t$ defined in \eqref{rt} is an unbiased or negatively biased estimator of $\P_{F_0,\infty}[\tau\geq t]$.
\end{proposition}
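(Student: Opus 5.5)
The argument reduces to computing $\mathbb{E}[q_t^\pi]$ for a single fixed permutation $\pi$ and then averaging over permutations, using linearity of expectation. Since $\hat r_t$ is built only from the historical sample $Y_1,\dots,Y_m$, it is independent of $X_1,X_2,\dots$. That independence is what Theorems~\ref{thm:cov-lower} and~\ref{thm:cov-upper} need, so it is enough to show $\mathbb{E}[\hat r_t]\le \P_{F_0,\infty}[\tau\ge t]$ for every $t$.

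Fix a permutation $\pi$ and split into two cases according to $t$.

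If $t>m$, then $q_t^\pi=0$ by definition. Hence $\mathbb{E}[q_t^\pi]=0\le \P_{F_0,\infty}[\tau\ge t]$, and this is the source of the negative bias.

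If $t\le m$, note that $Y_{\pi(1)},\dots,Y_{\pi(m)}$ is again an i.i.d.\ sample from $F_0$ (or at least exchangeable with the same law as $Y_1,\dots,Y_m$). We use the fact that $\tau$ is a stopping time. The event $\{\tau^\pi\ge t\}=\{\tau^\pi\le t-1\}^c$ is determined by the first $t-1\le m-1$ observations fed to the detector. So running the detector on the finite sequence $Y_{\pi(1)},\dots,Y_{\pi(m)}$ decides $\{\tau^\pi\ge t\}$ in exactly the same way as running it on an infinite i.i.d.\ $F_0$ stream whose first $m$ entries are these values. If the detector carries internal randomization, we take it independent and identically distributed in both runs. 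Therefore
\[
\P(\tau^\pi\ge t)=\P_{F_0,\infty}(\tau\ge t),
\]
so $\mathbb{E}[q_t^\pi]=\P_{F_0,\infty}[\tau\ge t]$.

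Averaging over the $m!$ permutations then gives $\mathbb{E}[\hat r_t]=\P_{F_0,\infty}[\tau\ge t]$ for $t\le m$ and $\mathbb{E}[\hat r_t]=0$ for $t>m$. This establishes unbiasedness in the first case and negative bias in the second.

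The one point that needs care is the convention for $\tau^\pi$ when the detector has not stopped by time $m$. In that case we set $\tau^\pi=\infty$, or any value at least $m$. The indicator $\mathbb{1}(\tau^\pi\ge t)$ for $t\le m$ does not depend on this choice, because it is determined by $\mathcal{F}_{t-1}$.

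Beyond this, I expect no real obstacle. Independence from the monitored data holds because $\hat r_t$ is a deterministic function of $(Y_1,\dots,Y_m)$ (plus independent detector randomness), and $(Y_1,\dots,Y_m)$ is independent of $X_1,X_2,\dots$ by assumption.
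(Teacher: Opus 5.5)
Your proof is correct and follows the same route as the paper. You use linearity of expectation over the $m!$ permutations: the case $t>m$ gives $\mathbb{E}[\hat r_t]=0$, and the case $t\le m$ gives exact unbiasedness because each permuted historical sequence has the law of an i.i.d.\ $F_0$ stream. Your added observation, that $\{\tau^\pi\ge t\}$ is determined by the first $t-1\le m-1$ observations so truncating at $m$ is harmless, makes explicit a step the paper leaves implicit.
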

\begin{proof}
Let $Y = (Y_1, \dots, Y_m)$ be the historical data where $Y_i \overset{i.i.d.}{\sim} F_0$. By the linearity of expectation and the exchangeability of $Y$:
\begin{align*}
\mathbb{E}[\hat{r}_t] &= \frac{1}{m!} \sum_{\pi} \mathbb{E}[\mathds{1}(\tau^\pi \geq t) \mathds{1}(t \leq m)] 
= \mathbb{P}_{F_0, \infty}(\tau \geq t, t \leq m).
\end{align*}
We consider two cases for the value of $t$:
\begin{enumerate}
    \item If $t \leq m$, then $\mathbb{E}[\hat{r}_t] = \mathbb{P}_{F_0, \infty}(\tau \geq t)$, which is an unbiased estimate.
    \item If $t > m$, then $\mathbb{E}[\hat{r}_t] = 0$. Since $\mathbb{P}_{F_0, \infty}(\tau \geq t) \geq 0$, the estimator is negatively biased (or unbiased if the probability is exactly zero).
\end{enumerate}
In all cases, $\mathbb{E}[\hat{r}_t] \leq \mathbb{P}_{F_0, \infty}(\tau \geq t)$.
\end{proof}

Another asymptotic alternative is based on a bootstrap approximation.
Draw $B$ many bootstrap sequences $\{Y_n^{*j}\}_n$, for $j=1,..,B$ and run the change detector to compute $\mathds{1}(\tau^j \geq t)$, for $t=1,\cdots,\tau$ and $j=1,\cdots,B$. Then define
\begin{equation}
\label{rt-boot}
    \hat{r}_t=\frac{1}{B}\sum_{j=1}^B\mathds{1}(\tau^j \geq t).
\end{equation}
Under standard consistency conditions for the bootstrap approximation of the null distribution, we have an asymptotic guarantee $\mathbb{E}[\hat{r}_t]\to\mathbb{P}_{F_0, \infty}(\tau \geq t),$ as $m,B\to\infty$.

\section{Additional experimental results}
\label{sec:app-expt}
\subsection{Varying the changepoint and the detection threshold}
We consider the setup with the Gaussian mean change (N(-1,1) to N(1,1)) setting from \Cref{sec:expt}, but now in a triangular-array framework where both the changepoint $T$ and the detection thresholds $A$ of the conformal-CUSUM detector grow proportionally. The conformal-CUSUM detector we use here is the following:
\begin{equation}
\label{eq:conformal-cusum}
    \tau_A=\inf\{t\in\N: \max_{i=0,\cdots,n-1}S_n/S_i\geq A\},
\end{equation}
where $S_t=\prod_{j=1}^tf_{1,j}(p_1^j)$ is the conformal test martingale for the sequence $X_1,\cdots,X_t$. Note that a larger $A$ leads to a longer detection delay. The average (of $100$ independent runs) conditional size and coverage of the two-sided confidence sets with target coverage $90\%$ (i.e., \eqref{ci-bothside-2alpha} with $\alpha=0.05$) is reported in \Cref{tab:varying-ratio-1,tab:varying-ratio-2}. In \Cref{tab:varying-ratio-1}, we consider relatively small values of $A/T$ ratio and $s$. In this regime, the confidence set size increases with $T$, although its size relative to $T$ shrinks toward zero. In contrast, for \Cref{tab:varying-ratio-2}, we choose a larger $A/T$ ratio and $s$, and see that the conditional size of the sets stays bounded as $T$ increases.

 \begin{table}[!ht]
    \centering
    \caption{Average results with smaller $A/T$ ratio and $s=4$.
    }
\label{tab:varying-ratio-1}
    \resizebox{\linewidth}{!}{
    \begin{tabular}{cccccc}
    \toprule
    \addlinespace
$T$ & $A$ & \specialcell{Coverage  (cond. \\on $\tau\geq T$)} & \specialcell{Size (cond. on\\ $\tau\geq T+s\log T$)} &\specialcell{Relative size (cond. \\on $\tau\geq T+s\log T$)} & \specialcell{Delay (cond. \\on $\tau\geq T$)}  \\
\midrule
500 & 10000 & 0.93 & 21.15 & 0.0423 & 35.75\\
1500 & 30000 & 0.98 & 60.32 & 0.0402 & 39.74\\
2500 & 50000 & 1 & 98.76 & 0.0395 & 42.25\\
3500 & 70000 & 1 &  119.41 & 0.0341 & 44.93 \\
4500 & 90000 & 1 & 130.97 & 0.0291 & 45.82 \\
\bottomrule
 \end{tabular}}
 \end{table}

 \begin{table}[!ht]
    \centering
    \caption{Average results with larger $A/T$ ratio and $s=6$.
    }
\label{tab:varying-ratio-2}
    \resizebox{\linewidth}{!}{
    \begin{tabular}{cccccc}
    \toprule
    \addlinespace
$T$ & $A$ & \specialcell{Coverage  (cond. \\on $\tau\geq T$)} &   \specialcell{Size (cond. on\\ $\tau\geq T+s\log T$)} &\specialcell{Relative size (cond. \\on $\tau\geq T+s\log T$)} & \specialcell{Delay (cond. \\on $\tau\geq T$)}  \\
\midrule
500 & 50000 & 0.92 & 8.56 & 0.0171 & 42.58\\
1500 & 150000 & 0.93& 8.85 & 0.0059 & 47.75\\
2500 & 250000 & 0.93 & 8.97 & 0.0035 & 50.02\\
3500 & 350000 & 0.91 & 8.81 & 0.0025  & 52.36\\
4500 & 450000 & 0.94 & 8.92 & 0.0019 & 58.17\\
\bottomrule
 \end{tabular}}
 \end{table}

\subsection{Varying the signal strength}
In the same setup as the previous experiment, we next vary the true pre- and post-change means while keeping $T$ and $A$ fixed at two different levels: $T=500, A=50000$ and $T=1000, A=100000$. Average results of $100$ independent runs are shown in \Cref{tab:varying-signal-1,tab:varying-signal-2}. As predicted by \Cref{thm:twoside-asymp-real}, the conditional size of the confidence sets increases as the signal strength decreases (i.e., as $\delta$ in the theorem decreases) in both tables. Nevertheless, even for the weaker signals considered here, the size of the confidence sets remains substantially small, indicating that the proposed method continues to provide informative localization guarantees.
 \begin{table}[!ht]
    \centering
    \caption{Average results with $T=500, A=50000$.
    }
\label{tab:varying-signal-1}
    \resizebox{\linewidth}{!}{
    \begin{tabular}{cccccc}
    \toprule
    \addlinespace
Pre-change & Post-change & \specialcell{Coverage  (cond. \\on $\tau\geq T$)}  & \specialcell{Size (cond. on\\ $\tau\geq T+5\log T$)} &\specialcell{Relative size (cond. \\on $\tau\geq T+5\log T$)} & \specialcell{Delay (cond. \\on $\tau\geq T$)}  \\
\midrule
N(-1,1) & N(1,1) & 0.93 & 10.73 & 0.0215 & 40.20 \\
N(-0.75,1) & N(0.75,1) & 0.94 & 15.13 & 0.0303 & 47.86 \\
 N(-0.5,1) & N(0.5,1) & 0.96 & 26.11 & 0.0522 & 67.19\\
\bottomrule
 \end{tabular}}
 \end{table}

 \begin{table}[!ht]
    \centering
    \caption{Average results with $T=1000, A=100000$.
    }
\label{tab:varying-signal-2}
    \resizebox{0.95\linewidth}{!}{
    \begin{tabular}{cccccc}
    \toprule
    \addlinespace
Pre-change & Post-change & \specialcell{Coverage  (cond. \\on $\tau\geq T$)} & \specialcell{Size (cond. on\\ $\tau\geq T+5\log T$)} &\specialcell{Relative size (cond. \\on $\tau\geq T+5\log T$)} & \specialcell{Delay (cond. \\on $\tau\geq T$)}  \\
\midrule
N(-1,1) & N(1,1) & 0.92 & 10.84    & 0.01084 &  44.44 \\
N(-0.75,1) & N(0.75,1) & 0.94 & 14.60   & 0.01460 &   52.00 \\
 N(-0.5,1) & N(0.5,1) & 0.97 & 27.26   & 0.02726 & 71.68\\
\bottomrule
 \end{tabular}}
 \end{table}